%% file: main.tex
\documentclass{article}

\usepackage{microtype}
\usepackage{graphicx}
\usepackage{subcaption}
\usepackage{booktabs} 

\usepackage{hyperref}

\usepackage[preprint]{icml2026}

\usepackage{amsmath}
\usepackage{amssymb}
\usepackage{mathtools}
\usepackage{amsthm}

\usepackage{hyperref}
\usepackage{url}

\usepackage{microtype}
\usepackage{graphicx}
\usepackage{subcaption}
\usepackage{booktabs} 

\usepackage{amsmath}	
\usepackage{amssymb}	
\usepackage{booktabs}
\usepackage{times}
\usepackage{caption}
\usepackage{microtype}
\usepackage{epsfig}
\usepackage[table,xcdraw,dvipsnames]{xcolor}
\usepackage{caption}
\usepackage{float}
\usepackage{placeins}
\usepackage{color, colortbl}
\usepackage{stfloats}
\usepackage{enumitem}
\usepackage{tabularx}
\usepackage{xstring}
\usepackage{multirow}
\usepackage{xspace}
\usepackage{url}
\usepackage{subcaption}
\usepackage{makecell}
\usepackage[hang,flushmargin]{footmisc}

\newif\ifreview 
\newif\ifarxiv 
\newif\ifcamera 
\newif\ifrebuttal 

\ifcamera \usepackage[accsupp]{axessibility} \fi

\usepackage{amsthm}

\usepackage{subcaption}   
\usepackage{graphicx}
\usepackage{caption}
\ifarxiv  \fi

\usepackage{amsmath}
\usepackage{amssymb}
\usepackage{mathtools}
\usepackage{amsthm}
\usepackage{mdframed}
\usepackage{tikz}
\usepackage{etoolbox}
\usepackage[capitalize,noabbrev]{cleveref}
\usepackage{thmtools}

\theoremstyle{plain}
\newtheorem*{theorem*}{Theorem}
\newtheorem{theorem}{Theorem}[section]
\newtheorem{proposition}[theorem]{Proposition}
\newtheorem{lemma}[theorem]{Lemma}

\newtheorem{corollary}[theorem]{Corollary}
\theoremstyle{definition}
\newtheorem{definition}[theorem]{Definition}

\theoremstyle{remark}
\newtheorem{remark}[theorem]{Remark}

\usepackage{booktabs}
\definecolor{OurHL}{RGB}{242, 242, 242} 

\definecolor{OurHL}{RGB}{242,242,242}   
\definecolor{OurCrim}{HTML}{CC785C}     

\newcommand{\best}[1]{\textcolor{OurCrim}{\textbf{#1}}}

\newcommand{\secondbest}[1]{\underline{#1}}

\usepackage{xcolor}
\usepackage{empheq}

\usepackage{booktabs}
\usepackage{siunitx}
\usepackage{wrapfig}

\usepackage{mdframed}  

\surroundwithmdframed[
  linewidth=0.5pt,           
  linecolor=gray!20,         
  tikzsetting={dash pattern=on 3pt off 3pt}, 
  backgroundcolor=gray!10,    
  roundcorner=4pt,           
  innerleftmargin=6pt,       
  innerrightmargin=6pt,
  innertopmargin=8pt,
  innerbottommargin=6pt
]{theorem}

\surroundwithmdframed[
  linewidth=0.5pt,
  linecolor=gray!20,
  tikzsetting={dash pattern=on 3pt off 3pt},
  backgroundcolor=gray!10,
  roundcorner=4pt,
  innerleftmargin=6pt,
  innerrightmargin=6pt,
  innertopmargin=8pt,
  innerbottommargin=6pt
]{proposition}

\surroundwithmdframed[
  linewidth=0.5pt,
  linecolor=gray!20,
  tikzsetting={dash pattern=on 3pt off 3pt},
  backgroundcolor=gray!10,
  roundcorner=4pt,
  innerleftmargin=6pt,
  innerrightmargin=6pt,
  innertopmargin=8pt,
  innerbottommargin=6pt
]{criteria}

\surroundwithmdframed[
  linewidth=0.5pt,
  linecolor=gray!20,
  tikzsetting={dash pattern=on 3pt off 3pt},
  backgroundcolor=gray!10,
  roundcorner=4pt,
  innerleftmargin=6pt,
  innerrightmargin=6pt,
  innertopmargin=8pt,
  innerbottommargin=6pt
]{theoremx}

\usepackage[capitalize,noabbrev]{cleveref}

\theoremstyle{plain}

\usepackage[textsize=tiny]{todonotes}

\icmltitlerunning{ZEUS: Accelerating Diffusion Models with Only Second-Order Predictor}

\begin{document}

\twocolumn[
  \icmltitle{ZEUS: Accelerating Diffusion Models with Only Second-Order Predictor}



  \icmlsetsymbol{equal}{*}

\begin{icmlauthorlist}
    \icmlauthor{Yixiao Wang}{equal,CS,stats}
    \icmlauthor{Ting Jiang}{equal,duke}
    \icmlauthor{Zishan Shao}{equal,duke,stats}
    \icmlauthor{Hancheng Ye}{duke}
    \icmlauthor{Jingwei Sun}{duke}
    \icmlauthor{Mingyuan Ma}{duke}
    \icmlauthor{Jianyi Zhang}{duke}
    \icmlauthor{Yiran Chen}{duke}
    \icmlauthor{Hai Li}{duke}
  \end{icmlauthorlist}

  \icmlaffiliation{duke}{Department of Electrical and Computer Engineering, Duke University, Durham, NC, USA}
  
  \icmlaffiliation{CS}{Department of Computer Science, Duke University, Durham, NC, USA}

 \icmlaffiliation{stats}{Department of Statistical Science, Duke University, Durham, NC, USA}

  \icmlcorrespondingauthor{Yixiao Wang}{yixiao.wang@duke.edu}

  \icmlkeywords{Machine Learning, ICML}

  \vskip 0.2in
] 

\printAffiliationsAndNotice{\icmlEqualContribution}  

\begin{abstract}
Denoising generative models deliver high-fidelity generation but remain bottlenecked by inference latency due to the many iterative denoiser calls required during sampling. 
Training-free acceleration methods reduce latency by either sparsifying the model architecture or shortening the sampling trajectory.
Current training-free acceleration methods are more complex than necessary: higher-order predictors amplify error under aggressive speedups, and architectural modifications hinder deployment.
Beyond $2\times$ acceleration, step skipping creates structural scarcity—at most one fresh evaluation per local window—leaving the computed output and its backward difference as the only causally grounded information.
Based on this, we propose \textbf{ZEUS}, an acceleration method that predicts reduced denoiser evaluations using a second-order predictor, and stabilizes aggressive consecutive skipping with an interleaved scheme that avoids back-to-back extrapolations.
ZEUS adds essentially zero overhead, no feature caches, and no architectural modifications, and it is compatible with different backbones, prediction objectives, and solver choices.
Across image and video generation, ZEUS consistently improves the speed--fidelity performance over recent training-free baselines, achieving up to $3.2\times$ end-to-end speedup while maintaining perceptual quality. Our code is available at: \url{https://github.com/Ting-Justin-Jiang/ZEUS}.
\end{abstract}

\input{texts/introduction}

\input{texts/relatedwork}

\input{texts/methodology}

\input{texts/experiment}

\input{texts/conclusions}

\input{texts/impact_statement}

\nocite{langley00}

\bibliography{refs}
\bibliographystyle{icml2026}

\newpage
\appendix
\onecolumn

\input{texts/appendix}

\input{texts/appendix_experiments}




\end{document}

%% file: texts/introduction.tex
\section{Introduction}
\label{sec:intro}
Denoising generative models define a forward process that progressively corrupts data into noise, and train a network to predict the corresponding denoising direction at each noise level.
Sampling then follows the learned reverse-time dynamics, which can be implemented by integrating the probability-flow ODE (PF-ODE)~\citep{song2020score}.
Numerical solvers~\citep{lu2022dpm,karras2022elucidating} discretize this ODE into numbers of integration steps, each requiring a full evaluation through a large denoiser network. This iterative evaluation creates substantial latency as industry models are becoming increasingly large: generating a single 5-second video clip with Wan2.1-14B takes over 20 minutes on an NVIDIA H100 GPU. Training-free acceleration methods address this bottleneck by skipping or sparsifying a subset of denoiser evaluations, reducing wall-clock time without modifying off-the-shelf weights. 

Existing training-free methods exploit two primary forms of redundancy: \emph{architectural redundancy}, where attention computations are sparsified within each denoiser evaluation~\citep{bolya2023token,yuan2024ditfastattn}, and \emph{trajectory redundancy}, where a numerical predictor leverages outputs or intermediate features from the current step to approximate future evaluations~\citep{ma2024deepcache,liu2025taylor}.
However, aggressive end-to-end acceleration (i.e., beyond $2\times$ speedup) with these methods often degrades sample quality substantially, producing visible artifacts or outputs that diverge from the unaccelerated baseline. 
In this paper, we state a counterintuitive position: \textbf{training-free acceleration methods are more complex than necessary.}

\textbf{(i) Current training-free acceleration has a higher numerical order than necessary.}
Recent trajectory-based methods increase the asymptotic order of their predictors, claiming that higher-order approximation yields greater accuracy.
TaylorSeer, for instance, employs Taylor expansions of order $k \geq 3$ to approximate skipped steps~\citep{liu2025taylor}.
However, this claim breaks down under aggressive acceleration.
When the target speedup exceeds $2\times$, at most one fresh denoiser evaluation can occur within any three-step window.
Under this scarcity, higher-order predictors must either chain approximations derived from earlier (already approximate) outputs, or incorporate stale evaluations from distant steps.
Both choices accumulate error rather than reduce it.
We show that the executed trajectory at any full evaluation step can be characterized by exactly two quantities: the computed output and its backward difference from the previous output.
We call this pair the \best{observed information set}—the minimal sufficient information for prediction under scarcity.
The practical implication is striking: under identical end-to-end speedup, \emph{second-order prediction outperforms higher-order methods} (Figure~\ref{fig:zeus-ablations}).
More is not better; additional predictor complexity degrades fidelity rather than improving it.

\begin{figure*}
    \centering
    \includegraphics[width=0.93\textwidth]{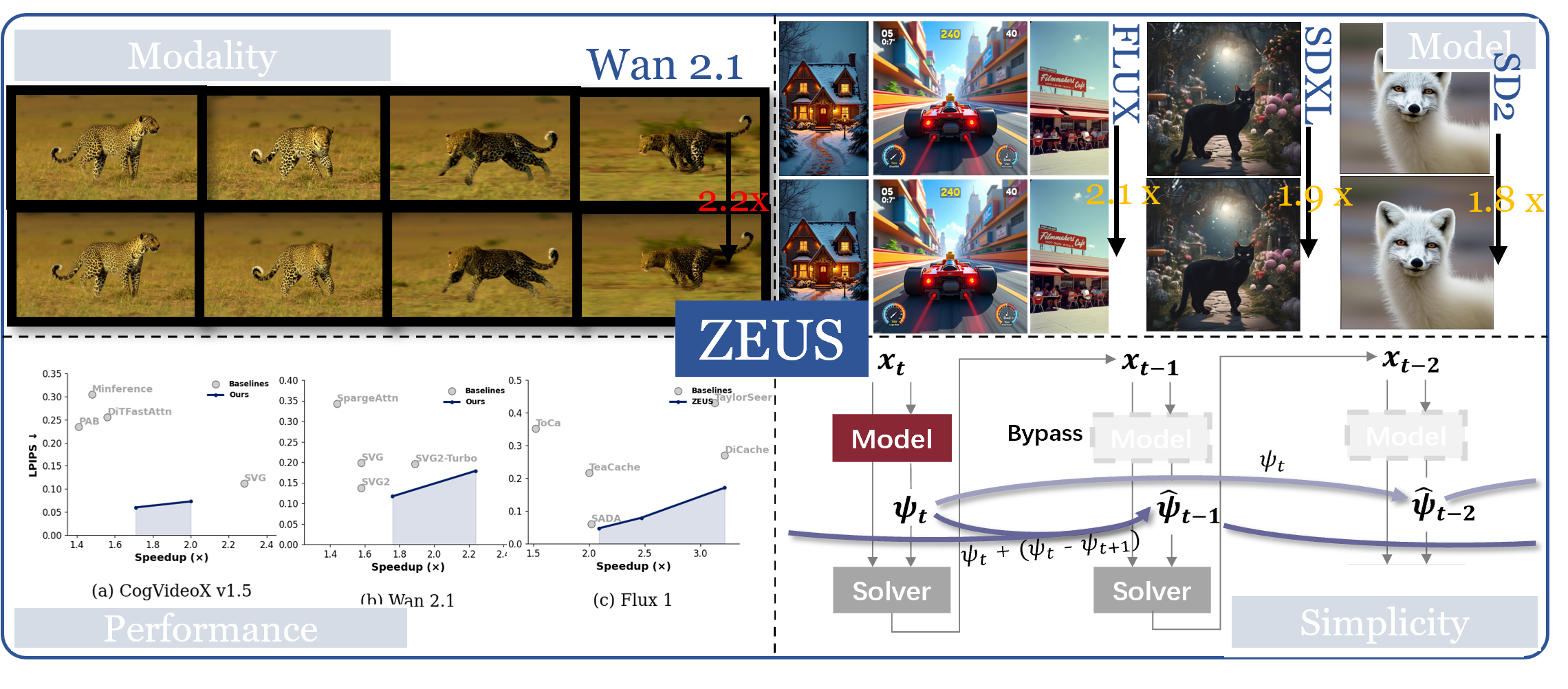}
     \caption{\textbf{Overview of ZEUS.}
ZEUS is a training-free acceleration framework for ODE-based generative models with four key properties:
(i)~modality-agnostic—applicable to image, video generation;
(ii)~parameterization-agnostic—compatible with $\epsilon$-, $v$-, and flow-prediction objectives;
(iii)~state-of-the-art speed–fidelity tradeoff—outperforming more complex methods with minimal overhead;
(iv)~minimal integration effort—fewer than 20 lines of code.
See Section 4 for details.}
  \label{fig:teaser}
\end{figure*}



\textbf{(ii) Training-free acceleration is harder to deploy than necessary.}
Production models evolve rapidly, varying in backbone architecture~\citep{ronneberger2015u, peebles2023scalable, esser2024scaling} and prediction objective~\citep{salimansprogressive, liu2022flow}.
A practical acceleration method must generalize across this diversity with minimal per-model engineering.
Yet existing approaches impose substantial deployment burdens.
Architectural methods design sparse attention patterns that require custom CUDA kernels and operator fusion: modifications that must be re-engineered for each new backbone.
Trajectory-based methods cache intermediate features, typically one tensor per attention layer, incurring memory overhead that scales with model depth.
On large video models such as Wan2.1-14B, this overhead alone can exceed available GPU memory.
By contrast, our approach requires only the previous denoiser output, a single tensor already collected by the numerical solver.
The predictor involves only elementwise operations with $\mathcal{O}(1)$ time complexity and no architectural intrusion.

In this paper, we propose Zero-cost Extrapolation-based Unified Sparsity (ZEUS).
ZEUS uses a second-order numerical predictor that extrapolates the next denoiser output from the \emph{observed information set} at the most recent full-evaluation step.
However, under end-to-end speedups beyond $2\times$, na\"ively chaining extrapolations across consecutive reduced steps can amplify error and cause instability.
ZEUS resolves this by \emph{reusing} the same observed information set in an interleaved scheme, which preserves the numerical precision and retain stable for long-range approximation. 


We provide comprehensive theoretical and empirical evidence for our framework.
ZEUS consistently improves the speed--quality trade-off on recent images (SD-2/SDXL/FLUX) and videos (Wan~2.1/CogVideoX) generation models, reaching up to $3.22\times$ and $2.24\times$ speedup while preserving sample fidelity (Visualized in Figure \ref{fig:teaser}).
It runs with unmodified U-Net, DiT, and MMDiT across five prediction objectives (Table~\ref{tab:parameterization_main}), requiring fewer than 20 lines of code to integrate into any Huggingface diffusers pipeline.

To summarize, our contributions are threefold:
\begin{enumerate}[itemsep=0pt, topsep=0pt]
    \item We reveal that current training-free accelerators are often unnecessarily complicated: a local second-order predictor suffices to approximate skipped steps.
    \item We propose an interleaved caching schedule that remains stable as the run length of consecutive reduced steps increases. 
    \item Extensive experiments and ablations confirm second-order prediction and the interleaved scheme, yielding strong speed--fidelity trade-offs across diverse settings.
\end{enumerate}

%% file: texts/relatedwork.tex
\section{Related Work}
\label{sec:related}

\paragraph{Denoising Generative Models.}
Denoising generative models~\cite{sohl2015deep, ho2020denoising, song2020score, nichol2021improved}
define a forward process that gradually corrupts data into Gaussian noise, and train a network to predict how to undo this corruption; sampling then starts from noise and runs the learned reverse-time dynamics back to data. Nowadays, diffusion models~\citep{ho2020denoising, nichol2021improved} and flow-based models~\citep{liu2022flow, albergo2022building, lipmanflow} have become mainstream frameworks for scalable generative modeling, with recent transformer-based backbones~\citep{peebles2023scalable} achieving high-fidelity generation across image, video, text, and audio modalities.

Sampling in diffusion/flow-based models is often implemented by integrating the \emph{probability-flow ODE}~\citep{song2019generative, song2020denoising},
with different but largely equivalent parameterizations and training objectives ($\boldsymbol{\epsilon}$, $\mathbf{x}_0$, $v$, the score, or a flow-matching velocity)~\citep{song2020score,song2019generative,song2020denoising,
song2023consistency,kim2023consistency,lipmanflow}. Building on the ODE formulation, higher-order solvers (samplers)~\citep{lu2022dpm, lu2022dpmpp, karras2022elucidating} can use larger step sizes while staying accurate, thus needing fewer steps to sample. In this work, we focus on accelerating probability-flow ODE sampling of pretrained diffusion/flow models under standard solvers (e.g., Euler and DPM-Solver++) while keeping model weights untouched.

\paragraph{Training-free Acceleration.}
Training-free acceleration methods reduce sampling latency by exploiting redundancy along different axes of diffusion/flow inference without further training.
A first line of work targets \emph{architectural} redundancy. Token reduction methods~\citep{bolya2023token, kim2024token} prune/merge redundant image tokens, while ToCa~\citep{zou2024accelerating, zhang2024token} combines adaptive token pruning with token-wise caching to further cut overhead.
For transformer backbones, attention-oriented approaches such as Sparse VideoGen~\citep{xi2025sparse,yang2025sparse} introduce spatio-temporal sparsity and semantic-aware permutation to accelerate video diffusion, and DiTFastAttn~\citep{yuan2024ditfastattn, zhang2025ditfastattnv2} compresses attention modules based on redundancies identified via a lightweight search.


A second line targets \emph{trajectory} redundancy along the denoising trajectory.
Feature-caching / reuse methods~\citep{ma2024deepcache, zhao2024real, wimbauer2024cache, liu2024timestep, chen2024delta, shenmd}
reuse intermediate activations across timesteps with author-designed cache policies. These methods achieves large speedups but \textbf{typically requiring layer-wise caches and thus nontrivial cache management}.
Beyond caching, more aggressive temporal methods attempt to \emph{forecast} future states using multi-step information and (often) higher-order rules to fill in several skipped steps:
TaylorSeer~\citep{liu2025taylor} forecasts future \emph{features} via Taylor expansion, and AB-Cache~\citep{yu2025ab} adopts Adams--Bashforth style multi-step updates to model step-to-step relations.
Orthogonal to forecasting, adaptive strategies such as AdaptiveDiffusion~\citep{ye2024training} and SADA~\citep{jiang2025sada} use input- and timestep-dependent \emph{adaptive step skipping} (and in SADA, also token-wise sparsification) to trade off sample quality and latency.


\textbf{How ZEUS differs from Prior works.} ZEUS provides \emph{training-free trajectory} acceleration via \emph{uniform} step skipping beyond $2\times$.
Since it only changes the sampling schedule, ZEUS leaves the denoiser architecture and per-call computation unchanged, making it complementary to token/attention sparsification, and it also avoids layer-wise activation caches and the associated cache policies/overheads.
Moreover, under uniform skipping with stride $>2$ (i.e., skipping at least two intermediate timesteps per update), higher-order multi-step updates, i.e., Taylorseer~\cite{liu2025taylor}, become unreliable because they are computed directly from previously approximated steps; accordingly, ZEUS does not use feature forecasting or higher-order solvers, and instead follows a simple deterministic rule, in contrast to adaptive skipping methods that rely on per-step heuristics or search.


%% file: texts/methodology.tex
\section{Background}
\label{sec:background}

Denoising generative models sample by integrating the probability flow ODE (PF--ODE), a deterministic process that preserves diffusion marginals~\citep{song2019generative,song2020score}.
The forward noising process is $\mathbf{x}_s = \alpha_s \mathbf{x}_0 + \sigma_s \boldsymbol{\epsilon}$ for $s \in [0,1]$, with data $\mathbf{x}_0 \sim p_{\mathrm{data}}$ and noise $\boldsymbol{\epsilon} \sim \mathcal{N}(0,I)$.
The PF--ODE reverses this process: $d\mathbf{x}_s = \mathbf{v}(\mathbf{x}_s, s)\,ds$, where the velocity $\mathbf{v}$ depends on the score $\nabla_{\mathbf{x}_s} \log q_s(\mathbf{x}_s)$ and $q_s$ is the marginal density of $\mathbf{x}_s$.
In practice, we discretize on a grid $\{s_t\}_{t=0}^{T}$ with $s_T = 1$ (noise) and $s_0 = 0$ (data) and apply a numerical solver; we write $\hat{\mathbf{x}}_t$ for the denoising state at step~$t$.

Different training conventions predict different--but equivalent--targets ($\boldsymbol{\epsilon}$, $\mathbf{x}_0$, $v$, the score, or a flow-matching velocity), which are linearly interconvertible given $\mathbf{x}_s$ and $(\alpha_s,\sigma_s)$. In ZEUS, we apply our approximation directly to the model output and do not assume any particular target form, so the method is parameterization-agnostic. We therefore use a unified notation: $\psi_\theta:\mathbb R^d\times[0,1]\to\mathbb R^d$ denotes the model output under the chosen parameterization. We write $\psi_t:=\psi_\theta(\hat{\mathbf{x}}_t,s_t)$ when the network is evaluated at step $t$ and $\hat\psi_t$ for an approximation used at reduced steps. Table~\ref{tab:parameterization_main} summarizes the corresponding prediction targets, and our analysis applies uniformly across these parameterizations.


Evaluating $\psi_t$ at each solver step typically dominates runtime latency, making denoiser calls the primary computational bottleneck.
ZEUS therefore aims to reduce the number of denoiser calls by sparsifying evaluations along the sampling trajectory while preserving sample fidelity.

\section{Proposed Method}
\label{sec:observation}


\begin{figure}[t]
  \centering
  \includegraphics[width=0.85\linewidth]{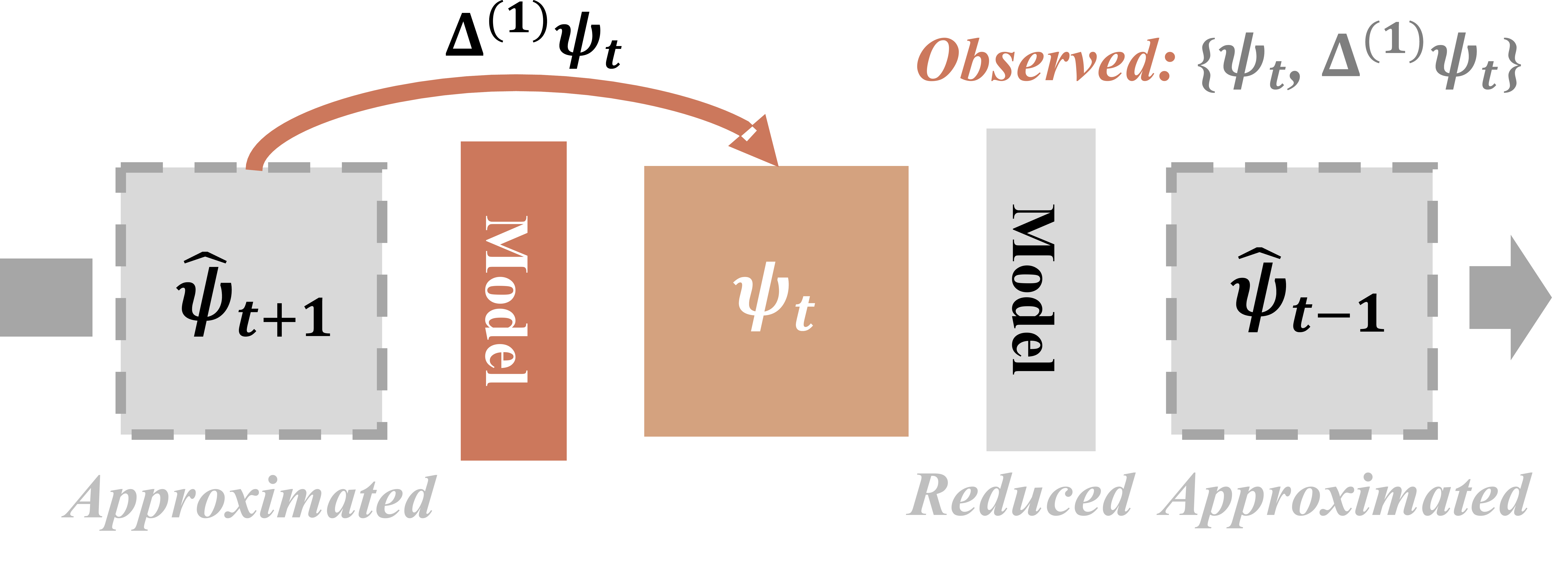}
  \caption{\textbf{Scarcity of full evaluations.}
  Under an aggressive acceleration ratio, we have limited denoiser evaluations, creating a scarcity of real information. 
  In this paper, we find that the (executed) denoising trajectory yields the \emph{observed, path-wise} information set
  $\{\psi_t,\ \Delta^{(1)}\psi_t\}$, where $\Delta^{(1)}\psi_t=\psi_t-\hat{\psi}_{t+1}$.}
  \label{fig:observed-set}
\end{figure}


We study training-free acceleration for discretized PF-ODE sampling through \best{trajectory step skipping}. Specifically, we use a uniform interleaved \(1{:}r\) schedule: within each block of \(r{+}1\) solver steps, we perform \best{one full denoiser evaluation} and skip the remaining \(r\) evaluations.
To execute these \best{reduced steps}, the solver still requires a denoiser output at each step; therefore, we must provide an approximation \(\hat{\psi}_t\) using a numerical predictor built from the trajectory information available at the full-evaluation steps.
This setting naturally raises two questions:
\textbf{(i)} at a full-evaluation step \(t\), what predictor most effectively uses the information observed along the denoising trajectory?
\textbf{(ii)} how can we approximate multiple (possibly consecutive) reduced steps while maintaining stable sampling?

\begin{figure*}[t]
  \centering
  \tiny
  \begin{subfigure}[t]{0.245\linewidth}
    \includegraphics[width=\linewidth, trim=2pt 2.5pt 2.2pt 12pt, clip]{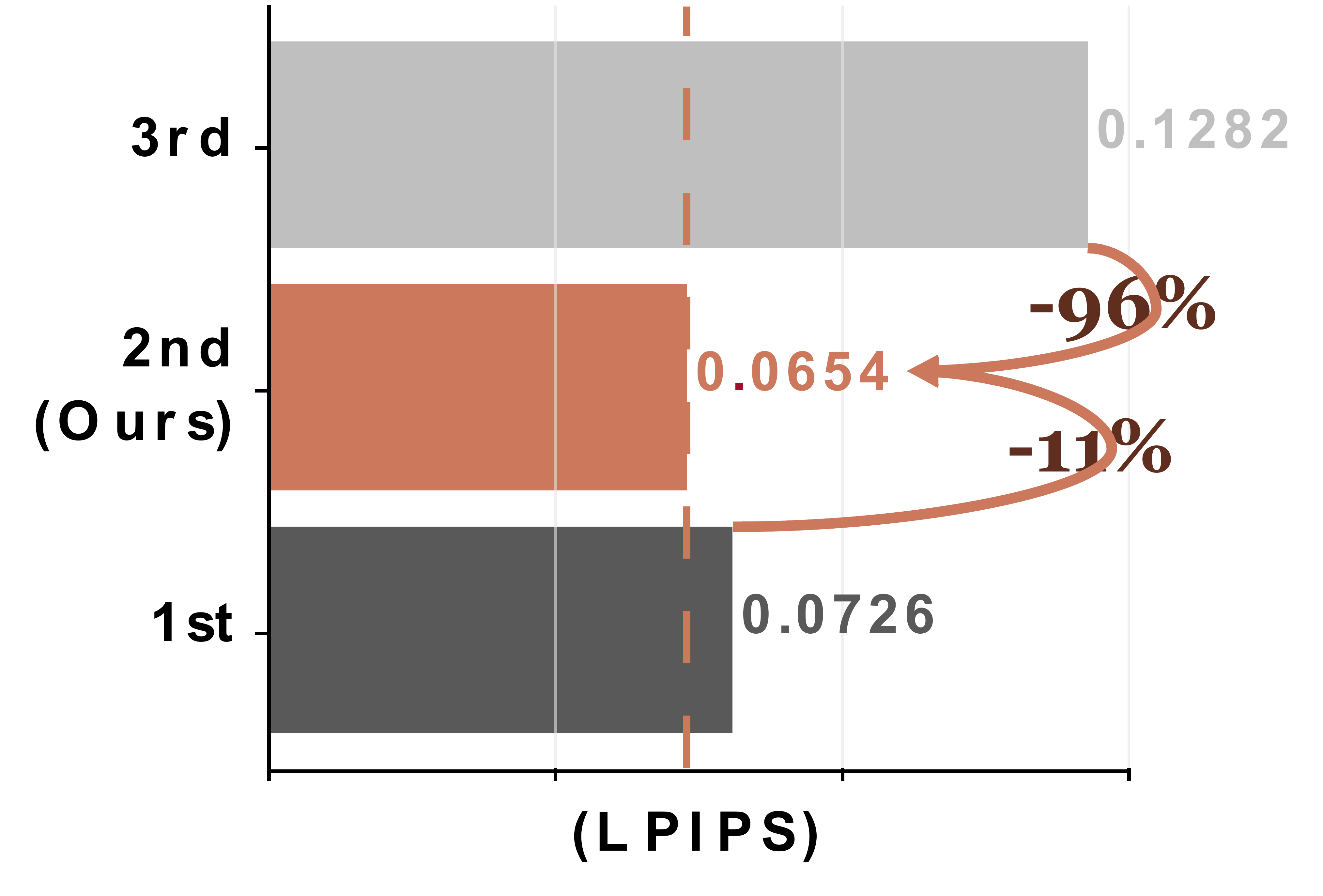}
    \caption{Perceptual deviation vs.\ predictor order (LPIPS~$\downarrow$)}
  \end{subfigure}\hfill
  \begin{subfigure}[t]{0.245\linewidth}
    \includegraphics[width=\linewidth, trim=2pt 2.5pt 2.2pt 12pt, clip]{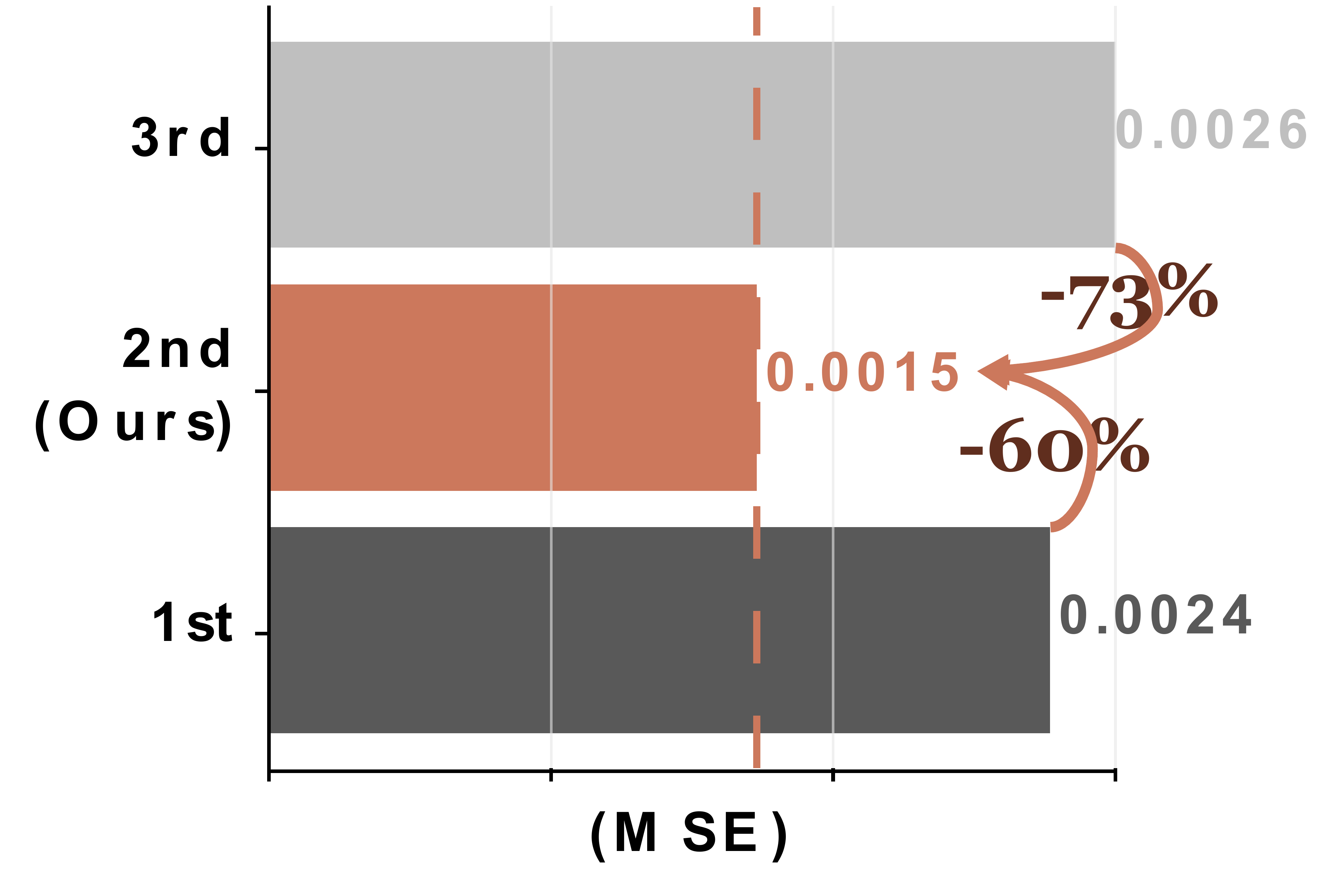}
    \caption{Per-step reconstruction error vs.\ predictor order (MSE~$\downarrow$)}
  \end{subfigure}\hfill
  \begin{subfigure}[t]{0.245\linewidth}
    \includegraphics[width=\linewidth, trim=3.75pt 2.5pt 2.2pt 12pt, clip]{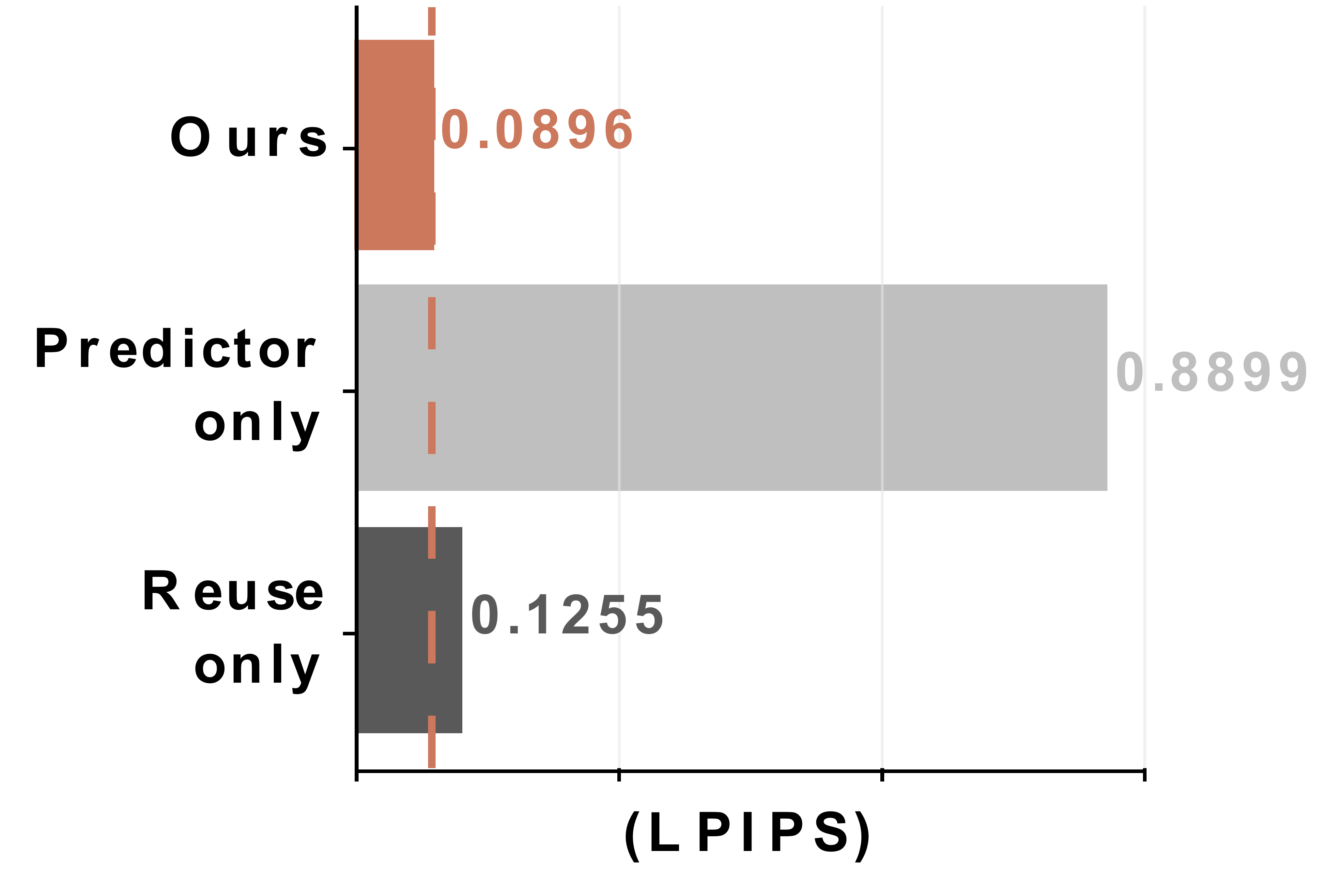}
    \caption{LPIPS for reuse-only, predictor-only, and ZEUS}
  \end{subfigure}\hfill
  \begin{subfigure}[t]{0.245\linewidth}
    \includegraphics[width=\linewidth, trim=2pt 2.5pt 2.2pt 10pt, clip]{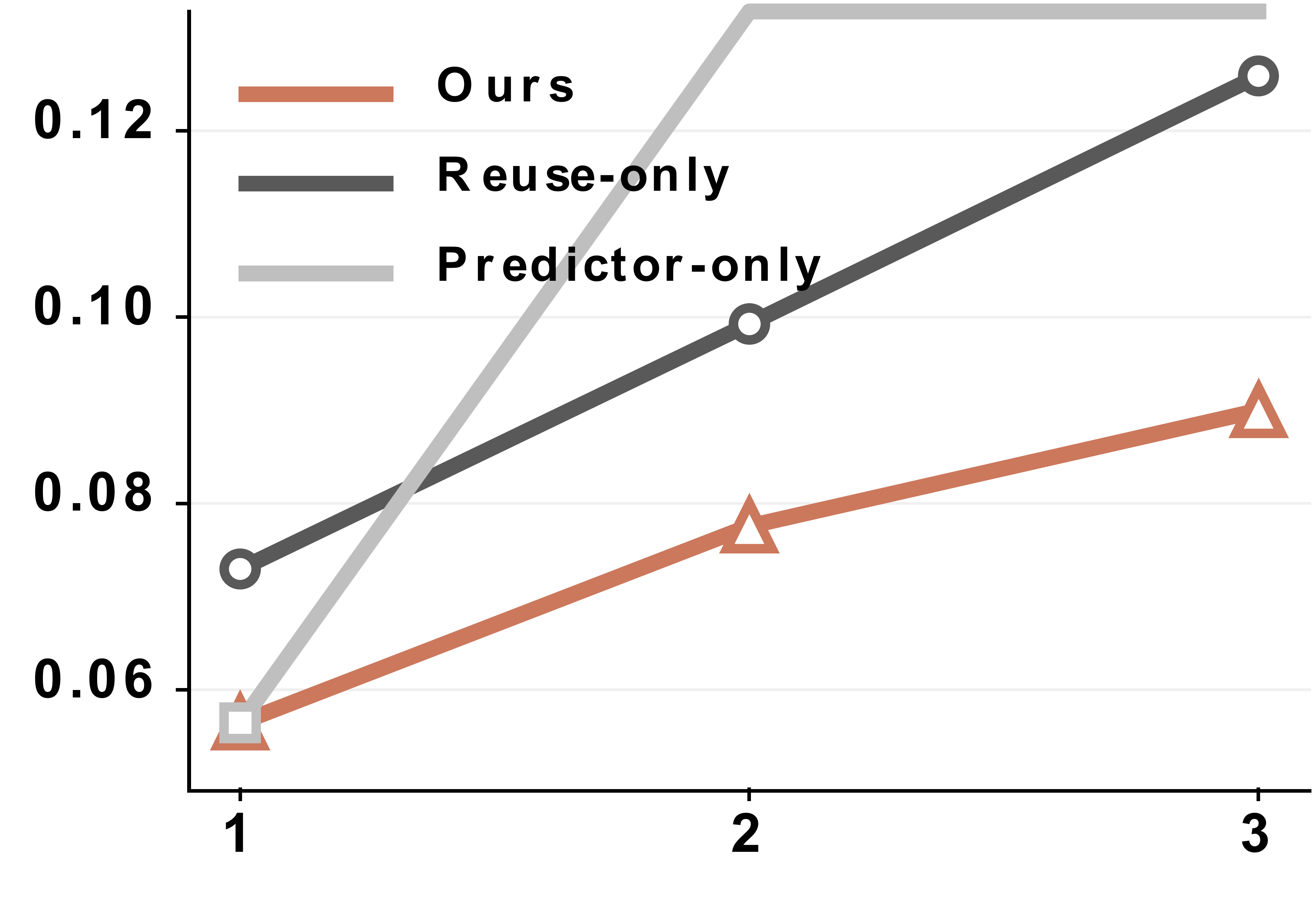}
    \caption{LPIPS vs.\ the number of consecutive reduced steps}
  \end{subfigure}

  \caption{
    \textbf{ZEUS ablations on SDXL.}
    We generate 1{,}000 samples from random MS-COCO 2017 prompts using DPM-Solver++ (50 steps).
    \textbf{(a,b) Predictor order under a uniform $1{:}1$ schedule.}
    \textbf{(c,d) Stability from reusing the observed information set under a uniform $1{:}3$ schedule.}
    This shows that ZEUS remains stable as the reduced-step run length increases.}
  \label{fig:zeus-ablations}
\end{figure*}


\subsection{The Effective Predictor Under Scarcity} 
\paragraph{Aggressive scarcity---1:$r$ ($r\geq2$) uniform interleaved schedule.}
Consider a full evaluation at step $t$ under a uniform interleaved $1{:}r$ schedule.
Because a full denoiser call happens only once every $r{+}1$ steps, the adjacent steps $t{\pm}1$ are reduced and their denoiser outputs $\hat{\psi}_{t\pm1}$ are provided by a predictor rather than computed.
As a result, this creates a fundamental \best{scarcity}: step $t$ is the only point in the local window with a fresh network output $\psi_t$; the remaining values are approximations derived from earlier evaluations.


\input{tables/rebuttal/order_ablation}

\paragraph{Higher numerical order does not imply higher accuracy under scarcity.}
We first investigate whether higher-order predictors yield lower reconstruction error under aggressive acceleration.
To isolate the effect of numerical order, we construct a $k$-point Lagrange extrapolation baseline: given the $k$ most recent full denoiser evaluations $\{\psi_{t}, \psi_{t+r+1}, \ldots, \psi_{t+k(r+1)}\}$, the baseline fits a $k$ order accurate predictor and approximate to $\hat\psi_{t-1} \cdots \hat\psi_{t-r}$.
This baseline mirrors the approximation scheme of TaylorSeer~\citep{liu2025taylor}, but operates on denoiser outputs $\psi_t$ directly rather than intermediate features.
We apply acceleration for the \textit{entire denoising trajectory} (no warm-up).

 
We show the results in Tab. \ref{tab:order_ablate}, which reports predictors of order $k \in \{2, 3\}$.
Two observations emerge.
First, surprisingly, the third-order predictor ($k{=}3$) consistently underperforms the second-order predictor ($k{=}2$) on both metrics, despite accessing one additional evaluation.
Second, error grows rapidly as the number of consecutive reduced steps $r$ increases, regardless of predictor order, therefore the effective predictor should use local information.

Therefore, rather than asking which predictor achieves the highest asymptotic order, we ask: which predictor \emph{exploits the most from a single full evaluation and the trajectory that produced it}?
To answer this, we first discuss what a predictor can actually access when fresh computation is scarce, and then derive the effective predictor that uses it.


\paragraph{The observed information set at full denoiser evaluation.}
At step $t$, the denoiser has computed $\psi_t = \psi_\theta(\hat{\mathbf{x}}_t, t)$.
But observe that the denoising state $\hat{\mathbf{x}}_t$ itself exists only because the solver advanced from $\hat{\mathbf{x}}_{t+1}$ using the signal $\hat\psi_{t+1}$.
Even though $\hat\psi_{t+1}$ was approximated, it is the \emph{committed input} that produced the current state.

This yields a key insight: the backward difference $\Delta^{(1)}\psi_t \coloneqq \psi_t - \hat\psi_{t+1}$ encodes information that $\psi_t$ alone cannot provide---the \textit{response} of the denoiser to the given (approximated) state.
Together, the pair $\{\psi_t, \Delta^{(1)}\psi_t\}$ constitutes what we call the \best{observed information set}: the complete causal record of the denoising trajectory at step $t$, available at zero marginal cost.

We emphasize that $\hat\psi_{t+1}$ need not be accurate for $\Delta^{(1)}\psi_t$ to be useful. 
Whatever approximation error $\hat\psi_{t+1}$ contains is already absorbed into $\hat{\mathbf{x}}_t$; the difference $\Delta^{(1)}\psi_t$ measures the model's response to that realized state. 
Discarding this signal---as reuse-only methods do---ignores the only trajectory-aware information available under scarcity.

\paragraph{The second-order predictor.}
Given the observed information set $\{\psi_t, \Delta^{(1)}\psi_t\}$, we can add the elements together, which forms a second-order predictor:
\begin{empheq}[box=\fcolorbox{gray!40}{gray!6}]{equation}
    \hat\psi_{t-1} = \psi_t + \Delta^{(1)}\psi_t = 2\psi_t - \hat\psi_{t+1}.
    \label{eq:backward-second}
\end{empheq}
This predictor uses exactly the two available signals: the fresh output and its backward difference.
In practice, this predictor requires only element-wise operations on two model output tensors, rather than layer-wise feature caching in \cite{liu2025faster} and \cite{liu2025taylor}.

We now show that \eqref{eq:backward-second} is optimal under scarcity.
The denoiser output along the sampling trajectory admits a decomposition $\psi_t = \phi(t) + \eta_t$, where $\phi$ is a smooth deterministic trend and $\eta_t$ is zero-mean noise (\cref{thm:decomp}).
Since the trajectory evolves smoothly, the trend $\phi$ is well-approximated as locally affine--and the coefficients $(2, -1)$ are the \emph{unique} weights that yield unbiased predictions for all affine $\phi$.
Under homoscedastic noise, these weights also minimize variance, making \eqref{eq:backward-second} the best linear unbiased estimator (BLUE) among two-point predictors (\cref{thm:blue}).

\textbf{Why not simpler or more complex?}
Reuse-only methods, equivalently first-order predictors, set $\hat\psi_{t-1} = \psi_t$, discarding slope information entirely.
Formally, any such one-point predictor suffers $\Omega(\Delta)$ bias (\cref{thm:one-point-lb}), which is an order of magnitude worse than the $\mathcal O(\Delta^2)$ achieved by \eqref{eq:backward-second}.

Meanwhile, a higher-order predictor offers no benefit under scarcity.
The additional inputs $\hat\psi_{t+2}, \hat\psi_{t+3}, \ldots$ are themselves approximations, so extending the predictor window induces more error, not more information.
Formally, $k$-th order Lagrange extrapolation amplifies variance by $\Omega(4^k)$ (\cref{thm:noise}), while the minimax bias rate remains $\Omega(\Delta^2)$ under $C^2$ smoothness.

We empirically show the gap via an ablation study on MS-COCO~\citep{lin2014microsoft}.
The surprising effectiveness of the second-order predictor holds when our numerical predictor takes local information at $t+1$ and $t+2$.
Fig.~\ref{fig:zeus-ablations}(a--b) demonstrates that both first- and third-order predictors degrade both MSE and LPIPS relative to the second-order predictor.
In sum, the \best{second}-order predictor \eqref{eq:backward-second} sits at the sweet spot: it fully exploits the observed
information set while avoiding the bias floor of reuse-only methods and the variance explosion of higher-order predictors.

\subsection{Approximating Multiple Consecutive Steps}
\paragraph{Stability matters when approximating consecutive steps.}
The second-order predictor \eqref{eq:backward-second} is optimal for a single reduced step.
However, aggressive speedups require $r \geq 2$ consecutive reduced steps, and the choice of how to approximate steps $j = 1, \ldots, r$ significantly affects stability.

We consider two natural baselines.
\textit{Predictor-only} chains the second-order rule.
Unrolling this recurrence yields $\hat\psi_{t-j}{=}(j{+}1)\cdot\psi_t{-}j\cdot\hat\psi_{t+1}$.
Because the coefficients grow linearly in $j$, any error in $\hat\psi_{t+1}$ is amplified at each step—the approximations progressively drift from the true trajectory (Fig.~\ref{fig:overshoot} left).
This baseline is used in \citet{liu2025taylor} and \citet{yu2025ab}.
\textit{Reuse-only} sets $\hat\psi_{t-j} = \psi_t$ for all $j$.
This avoids error amplification but ignores the slope information $\Delta^{(1)}\psi_t$ entirely, causing the approximations to lag behind the evolving trajectory and degrading fine details (Fig.~\ref{fig:overshoot} middle).
This baseline is used in \citet{ma2024deepcache} and \citet{ye2024training}.
In short, Predictor-only is precise but unstable; Reuse-only is stable but imprecise.
The question is whether we can achieve both.

\textbf{The interleaved approximation scheme achieves bounded variance with second-order bias.}
We resolve this tradeoff by \emph{reusing the observed information set} across all $r$ reduced steps.
Recall that the observed information set yields two values: the full evaluation $\psi_t$ and its second-order extrapolation $2\psi_t - \hat\psi_{t+1}$.
Rather than computing new approximations, we simply cycle between them:
\begin{empheq}[box=\fcolorbox{gray!40}{gray!6}]{equation}
    \hat\psi_{t-j} \;=\;
    \begin{cases}
        2\psi_t - \hat\psi_{t+1} & j \text{ odd}, \\[3pt]
        \psi_t & j \text{ even}.
    \end{cases}
    \label{eq:zigzag}
\end{empheq}
This \best{interleaved} pattern applies the second-order predictor once (at $j=1$), then resets to $\psi_t$, then reuses the same pair—never chaining predictions.
Because no new extrapolation occurs beyond $j=1$, variance remains bounded while second-order accuracy is preserved where it matters most.
A visualization of this approximation scheme is provided in Fig.~\ref{fig:teaser}

We now provide a bias–variance view for the approximation schemes.
Table~\ref{tab:bias-var} summarizes the theoretical tradeoffs (\cref{app:multistep-error}).
The \textit{Predictor-only} baseline achieves $\mathcal O(j^2\Delta^2)$ bias but suffers variance growth of $\Theta(j^2)\sigma^2$—the source of overshoot.
The \textit{Reuse-only} baseline maintains $\mathcal O(\sigma^2)$ variance but incurs $\mathcal O(j\Delta)$ bias.

The interleaved scheme combines the best of both \textbf{(i)} at $j=1$ (the most critical step), it coincides with the BLUE, achieving $\mathcal O(\Delta^2)$ bias; and \textbf{(ii)} for $j > 1$, resetting to $\psi_t$ caps variance at $\mathcal O(\sigma^2)$, avoiding $\Theta(j^2)$ growth.
The result is a schedule that is \emph{precise where precision matters} (small $j$) and \emph{stable where stability matters} (large $j$).
Fig.~\ref{fig:zeus-ablations}(c–d) confirms this: the interleaved scheme matches or outperforms both baselines across all reduction ratios.

\begin{figure}[t]
  \centering
  \includegraphics[width=\linewidth]{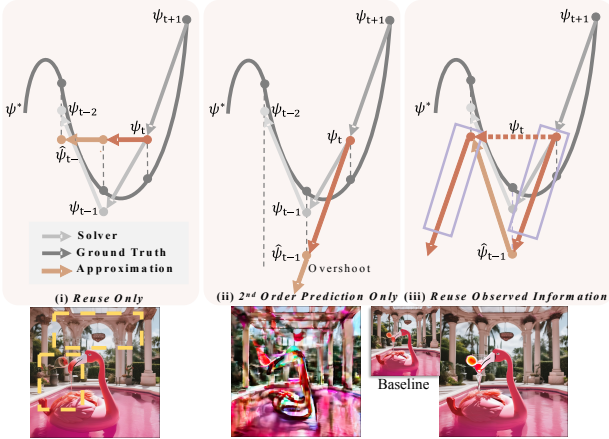}
  \caption{\textbf{Three approximation schemes.}
  We compare reuse-only, predictor-only, and ZEUS (reuse of the observed information pair).
  Dark gray: reference trajectory \(\psi^{\star}\).
  Light gray: solver-computed outputs \(\psi_t\).
  Crimson: approximated segments.
  \textbf{Left--Reuse only:} numerically stable but limited in expressivity; fine details erode (bottom).
  \textbf{Middle--Predictor only:} chaining second-order extrapolations overshoots without re-anchoring, producing artifacts (bottom).
  \textbf{Right--Reuse observed information (ZEUS):} alternating reuse of \(\{\psi_t,\ \psi_t+\Delta^{(1)}\psi_t\}\) prevents overshoot and preserves detail, yielding the best perceptual quality.}
  \label{fig:overshoot}
\end{figure}

%% file: tables/rebuttal/order_ablation.tex




\definecolor{zeusGreen}{RGB}{231,246,231}   
\definecolor{zeusYellow}{RGB}{255,247,214}  

\begin{table}[t]
  \centering
  \small
  \caption{\textbf{Ablation on numerical predictor order and input information.}
Second- and third-order Lagrange baselines extrapolate directly on the true previous full denoiser evaluation, whereas ZEUS (ours) exploits local solver information. Results are reported for different acceleration ratios
$r$ and predictor order $k$. This shows that (i) a higher-order numerical predictor does not necessarily come with higher accuracy and (ii) incorporating true but stale evaluation increases error under aggressive acceleration. The training-free strategy should exploit local information.}
  \label{tab:order_ablate}
  \setlength{\tabcolsep}{8pt}
  \renewcommand{\arraystretch}{1.15}
  \begin{tabular}{lccc}
    \toprule
    & \textbf{r=1} & \textbf{r=2} & \textbf{r=3} \\
    \midrule
    \rowcolor{zeusGreen} \textbf{2nd-order lagrange} & 0.1115 & 0.1907 & 0.2246 \\
    \rowcolor{zeusYellow}\textbf{3rd-order lagrange} & 0.1283 & 0.2431 & 0.2593 \\
    \rowcolor{zeusGreen} \textbf{ZEUS (ours)}              & 0.119  & 0.1745 & 0.2034 \\
    \textbf{speedup}                                & 1.80$\times$  & 2.41$\times$  & 2.81$\times$  \\
    \bottomrule
  \end{tabular}
\end{table}

%% file: texts/experiment.tex
\section{Experiment}
\label{sec:experiment}

\input{tables/image_exp}

\begin{table}[t]
\centering
\caption{Bias and variance for consecutive reduced steps. $\Delta$: step size; $\sigma^2$: noise variance.}
\label{tab:bias-var}
\small
\begin{tabular}{lcc}
\toprule
\textbf{Strategy} & \textbf{Bias} & \textbf{Variance} \\
\midrule
Predictor-only & $\mathcal O(j^2 \Delta^2)$ & $\Theta(j^2) \sigma^2$ \\
Reuse-only & $\mathcal O(j \Delta)$ & $\sigma^2$ \\
ZEUS & $\mathcal O(\Delta^2)$ at $j{=}1$; $\mathcal O(j\Delta)$ else & $\leq 5\sigma^2$ \\
\bottomrule
\end{tabular}
\end{table}

\subsection{Setup}

\paragraph{Models.}
We evaluate ZEUS on five text-conditioned generative models.
For text-to-image, we use Stable Diffusion~2.1 (865M parameters; $768{\times}768$; $v$-prediction), SDXL-Base (2.6B; $1024{\times}1024$; $\epsilon$-prediction), and Flux.1-dev (12B; $1024{\times}1024$; flow).
For text-to-video, we use Wan2.1-T2V-14B (14B; $480{\times}720{\times}81$; flow) and CogVideoX-v1.5 (5B; $480{\times}720{\times}49$; $\epsilon$-prediction),
where video resolution is H${\times}$W${\times}$T.
All experiments use official checkpoints, HuggingFace Diffusers, FP16 precision, and 50 sampling steps unless otherwise noted.
For images, we evaluate Euler (first-order) and DPM-Solver++ (second-order; \cite{lu2022dpm}); for videos, we use each model’s default solver.
All runs are performed on a single NVIDIA A100-80GB GPU.

\paragraph{Metrics.}
We assess the similarity of generated samples compared to full evaluation using the following metrics: PSNR ($\uparrow$), LPIPS~\citep{zhang2018unreasonable} ($\downarrow$), and SSIM ($\uparrow$).
For images, we additionally report FID~\citep{heusel2017gans} ($\downarrow$) between the baseline and accelerated sample sets to quantify distributional drift.
For videos, we report frame-averaged metrics; FID is omitted due to computational cost.
We report end-to-end wall-clock speed up.

\paragraph{Datasets.}
For text-to-image generation, we adopt the 5,000 prompts in the MSCOCO-2017 \citep{lin2014microsoft} validation set.
For text-to-video generation, we adopt prompts from the Penguin Benchmark, after optimizing them as provided by the VBench team.

\paragraph{Baselines.}
We compare against recent training-free acceleration methods.
We skip the method if it does not have an official implementation on the evaluated model.
For text-to-image generation, we compare against DeepCache~\citep{ma2024deepcache}, AdaptiveDiffusion~\citep{ye2024training}, SADA~\citep{jiang2025sada} on classical Stable Diffusion models, and ToCa~\citep{zou2024accelerating}, TaylorSeer~\citep{liu2025taylor}, TeaCache~\citep{liu2024timestep}, DiCache \citep{bu2025dicache}, and SADA on FLUX.1-dev.
For text-to-video generation, as we strictly mirror the setup in SVG series \citep{xi2025sparse, yang2025sparse}, we directly use the reported numbers of SVG, SVG2, TeaCache, DiTFastAttn \citep{yuan2024ditfastattn, zhang2025ditfastattnv2}, Minference \citep{jiang2024minference}, PAB \citep{zhao2024real}, and SpargeAttn \citep{zhang2025spargeattention}.

\paragraph{Implementation. }
ZEUS uses a $1{:}r$ reduction ratio, approximating $k$ consecutive steps after each full evaluation via interleave (Eq.~\ref{eq:zigzag}).
We test $r\in \{2, 3, 4\}$ (Medium/Fast/Turbo), applying ZEUS to the middle 70\% of the trajectory--the first 20\% and last 10\% use full evaluations due to higher curvature.
Medium is evaluated on all models; Fast on SDXL/Flux; Turbo on Flux only.

\subsection{Main Results}


\paragraph{Image results.}
Across SD~2.1, SDXL, and Flux, \textbf{ZEUS} consistently moves the speed--quality frontier outward under both Euler and DPM-Solver++ (Table~\ref{tab:zeus-main}). In classic Stable Diffusion backbones, ZEUS variants achieve the best or near-best end-to-end speedups while matching or improving perceptual similarity to full evaluation, indicating that the gains are not tied to a particular solver choice. On Flux (flow-matching), the improvement is larger: ZEUS traces a smooth Pareto curve from quality-oriented to throughput-oriented presets, and the quality-leaning setting remains on the perceptual frontier (LPIPS/FID) even when other methods are competitive on raw runtime. Together, these results support the central claim that a small, model-agnostic, training-free adjustment to step selection and cache-aware reconstruction can deliver state-of-the-art (or near-SOTA) efficiency without sacrificing perceptual realism. 

\input{tables/video_exp}

\paragraph{Video results.}
The same model-agnostic policy transfers to T2V: on both Wan~2.1 and CogVideoX-v1.5, ZEUS establishes a new quality--efficiency frontier without introducing video-specific kernels or modifying the backbone (Table~\ref{tab:main-video}). In balanced presets, ZEUS achieves the strongest perceptual similarity while still improving throughput over the strongest reported training-free baselines, and when pushed toward higher speed it remains close to the quality frontier rather than trading away fidelity disproportionately. This consistent shift in the speedup--LPIPS trade-off is also visible in Fig.~\ref{tab:ablation_flux}, supporting that ZEUS improves the latency-fidelity curve in a backbone-agnostic manner rather than exploiting quirks of a single model or evaluation protocol.



\subsection{Ablation Studies}

\paragraph{Few-step sampling.}
To stress-test ZEUS in the low-step regime, we evaluate few-step sampling across backbones and schedulers (Table~\ref{tab:ablation_flux}). As step budgets shrink, ZEUS continues to preserve (and often improve) perceptual similarity relative to the corresponding few-step baseline, while incurring only modest distributional drift as measured by FID, showing that the method remains effective even when inference becomes aggressively truncated. The same qualitative trend holds across Flux and SDXL and across Euler/DPM++ schedules, suggesting the benefit is not an artifact of a specific backbone or solver. 

\noindent\textbf{Solvers.}
Unless otherwise specified, we report results with standard diffusion samplers, including Euler (1st order) and DPM-Solver++ (2nd order). We additionally evaluate ZEUS with a higher-order solver setting, i.e., DPM-Solver++ (3rd order), and present the results in figure~\ref{fig:zeus-ablations}.

\paragraph{Higher-order solver ablation.}
ZEUS is solver-agnostic: it operates on denoiser outputs and can be paired with numerical solvers of different orders.
To verify this, we fix the model and sampling budget and vary only the solver order from Euler (1st order) to higher-order DPM-Solver++.
Table~\ref{tab:ablation_higher_order_solver} shows that increasing solver order consistently improves perceptual/distributional fidelity (lower LPIPS/FID), while PSNR and speedup remain essentially unchanged. This supports our claim that The second-order difference predictor is a simple, sufficient building block that remains effective across solver orders and improves as the underlying solver becomes more accurate.

\input{tables/ablation}

\input{tables/rebuttal/higher_order_solver_ablate}


%% file: tables/image_exp.tex
\begin{table*}[t]
\centering
\setlength{\tabcolsep}{8pt}        
\renewcommand{\arraystretch}{1.08} 
\caption{\textbf{Image quality vs.\ speed on SD-2, SDXL, and FLUX.}
Higher is better for PSNR/Speedup; lower is better for LPIPS/FID.
Per \emph{model+scheduler} block, best is \best{bold}, second best is \secondbest{underlined}.}
\label{tab:zeus-main}

\begin{tabular}{@{} l l l r r r r @{}}
\toprule
\textbf{Model} & \textbf{Scheduler} & \textbf{Method} &
\textbf{PSNR}~$\uparrow$ & \textbf{LPIPS}~$\downarrow$ & \textbf{FID}~$\downarrow$ & \textbf{Speedup}~$\uparrow$ \\
\midrule

\multirow{8}{*}{\makecell[l]{\textbf{SD-2}}}
  & \multirow{4}{*}{DPM++}
    & DeepCache           & 17.70 & 0.271 & 7.83 & \(1.43\times\) \\
  & & AdaptiveDiffusion   & 24.30 & \secondbest{0.100} & \secondbest{4.35} & \(1.45\times\) \\
  & & SADA                & \best{26.34} & \best{0.094} & \best{4.02} & \secondbest{\(1.80\times\)} \\
  & & ZEUS-Medium         & \secondbest{25.72} & 0.104 & 4.46 & \best{\(1.85\times\)} \\
\cmidrule(l){2-7}
  & \multirow{4}{*}{Euler}
    & DeepCache           & 18.91 & 0.239 & 7.40 & \(1.45\times\) \\
  & & AdaptiveDiffusion   & 21.94 & 0.173 & 7.58 & \best{\(1.89\times\)} \\
  & & SADA                & \best{26.25} & \best{0.100} & \best{4.26} & \(1.81\times\) \\
  & & ZEUS-Medium         & \secondbest{25.37} & \secondbest{0.118} & \secondbest{5.06} & \secondbest{\(1.86\times\)} \\
\midrule

\multirow{10}{*}{\makecell[l]{\textbf{SDXL}}}
  & \multirow{5}{*}{DPM++}
    & DeepCache           & 21.35 & 0.255 & 8.48 & \(1.74\times\) \\
  & & AdaptiveDiffusion   & 26.16 & \secondbest{0.125} & 4.59 & \(1.65\times\) \\
  & & SADA                & \best{29.36} & \best{0.084} & \best{3.51} & \(1.86\times\) \\
  & & ZEUS-Medium         & \secondbest{29.17} & \best{0.084} & \secondbest{3.59} & \secondbest{\(1.87\times\)} \\
  & & ZEUS-Fast           & 26.38 & 0.129 & 5.39 & \best{\(1.93\times\)} \\
\cmidrule(l){2-7}
  & \multirow{5}{*}{Euler}
    & DeepCache           & 22.00 & 0.223 & 7.36 & \best{\(2.16\times\)} \\
  & & AdaptiveDiffusion   & 24.33 & 0.168 & 6.11 & \secondbest{\(2.01\times\)} \\
  & & SADA                & \best{28.97} & \best{0.093} & \best{3.76} & \(1.85\times\) \\
  & & ZEUS-Medium         & \secondbest{28.66} & \secondbest{0.095} & \secondbest{3.87} & \(1.85\times\) \\
  & & ZEUS-Fast           & 25.15 & 0.153 & 6.47 & \(1.93\times\) \\
\midrule

\multirow{8}{*}{\makecell[l]{\textbf{FLUX}}}
  & \multirow{8}{*}{Euler (Flow)}
    & TeaCache            & 19.14 & 0.216 & 4.89 & \(2.00\times\) \\
  & & SADA                & \secondbest{29.44} & \secondbest{0.060} & \secondbest{1.95} & \(2.02\times\) \\
  & & ToCa                & 17.70 & 0.352 & 8.84 & \(1.52\times\) \\
  & & TaylorSeer          & 15.36 & 0.430 & 10.08 & \secondbest{\(3.13\times\)} \\
  & & DiCache             & 22.39 & 0.270 & /    & \best{\(3.22\times\)} \\
  & & ZEUS-Medium         & \best{30.19} & \best{0.047} & \best{1.53} & \(2.09\times\) \\
  & & ZEUS-Fast           & 26.77 & 0.079 & 2.49 & \(2.47\times\) \\
  & & ZEUS-Turbo          & 21.80 & 0.171 & 4.52 & \best{\(3.22\times\)} \\
\bottomrule
\end{tabular}
\end{table*}

%% file: tables/video_exp.tex

\begin{table*}[t]
\centering
\setlength{\tabcolsep}{8pt}
\renewcommand{\arraystretch}{0.82}
\caption{\textbf{Quality \& efficiency on video generation.} Higher is better for PSNR/SSIM/Speedup; lower is better for LPIPS.
Per-\emph{model} best is \best{bold}, second-best is \secondbest{underlined}.}
\label{tab:main-video}
\begin{tabular}{llcccc}
\toprule
\textbf{Model} & \textbf{Method} & \textbf{PSNR}~$\uparrow$ & \textbf{SSIM}~$\uparrow$ & \textbf{LPIPS}~$\downarrow$ & \textbf{Speedup}~$\uparrow$ \\
\midrule

\multicolumn{1}{l}{\textbf{Wan 2.1}} &  & & & & \\
\cmidrule(lr){1-6}
& SpargeAttn     & 20.52 & 0.623  & 0.343 & 1.44$\times$ \\
& SVG            & 22.99 & 0.785  & 0.199 & 1.58$\times$ \\
& SVG2           & \secondbest{25.81} & \best{0.854} & \secondbest{0.138} & 1.58$\times$ \\
& SVG2-Turbo     & 23.68 & 0.789  & 0.196 & \secondbest{1.89$\times$} \\
\rowcolor{OurHL}
& Ours & \best{29.00} & \secondbest{0.846} & \best{0.117} & 1.76$\times$ \\
\rowcolor{OurHL}
& Ours-Fast      & 26.59 & 0.792 & 0.179 & \best{2.24$\times$} \\
\addlinespace[2pt]


\multicolumn{1}{l}{\textbf{CogVideoX-v1.5 }} & & & & & \\
\cmidrule(lr){1-6}
& DiTFastAttn    & 23.20 & 0.741  & 0.256 & 1.56$\times$ \\
& Minference     & 22.45 & 0.691  & 0.304 & 1.48$\times$ \\
& PAB            & 22.49 & 0.740  & 0.234 & 1.41$\times$ \\
& SVG            & \secondbest{29.99} & \best{0.910} & 0.112 & \best{2.28$\times$} \\
\rowcolor{OurHL}
& Ours & \best{32.50} & \secondbest{0.893} & \best{0.060} & 1.71$\times$ \\
\rowcolor{OurHL}
& Ours-Fast      & 30.97 & 0.876  & \secondbest{0.073} & \secondbest{2.00$\times$} \\
\bottomrule
\end{tabular}
\end{table*}

%% file: tables/ablation.tex
\begin{table}[t]
\centering
\scriptsize
\caption{Ablation study on few-step sampling across schedulers. Results on MS-COCO 2017.}
\label{tab:ablation_flux}
\resizebox{\linewidth}{!}{ 
\begin{tabular}{lcccc}
\toprule
\multicolumn{5}{c}{\textbf{Flux}} \\
\midrule
Scheduler & Steps & LPIPS $\downarrow$ & FID $\downarrow$ & Speedup \\
\midrule
Euler  & 50 & 0.047 & 1.53 & 2.09$\times$ \\
       & 25 & 0.040 & 2.65 & 1.58$\times$ \\
       & 15 & 0.036 & 2.47 & 1.22$\times$ \\
\midrule
\multicolumn{5}{c}{\textbf{SDXL}} \\
\midrule
Euler  & 50 & 0.095 & 3.87 & 1.85$\times$ \\
       & 25 & 0.064 & 4.98 & 1.46$\times$ \\
       & 15 & 0.065 & 4.72 & 1.20$\times$ \\
\midrule
DPM++  & 50 & 0.084 & 3.59 & 1.87$\times$ \\
       & 25 & 0.066 & 5.28 & 1.45$\times$ \\
       & 15 & 0.066 & 4.87 & 1.20$\times$ \\
\bottomrule
\end{tabular}
}
\end{table}

%% file: tables/rebuttal/higher_order_solver_ablate.tex

\begin{table}[t]
  \centering
  \small
  \caption{Ablation on higher-order solvers.}
  \label{tab:ablation_higher_order_solver}
  \setlength{\tabcolsep}{7pt}
  \renewcommand{\arraystretch}{1.15}
  \begin{tabular}{l S[table-format=2.2] S[table-format=1.3] S[table-format=1.2] c}
    \toprule
    \textbf{Solver} &
    {\textbf{PSNR} $\uparrow$} &
    {\textbf{LPIPS} $\downarrow$} &
    {\textbf{FID} $\downarrow$} &
    {\textbf{Speedup} $\uparrow$} \\
    \midrule
    Euler-1  & 28.66 & 0.095 & 3.87 & \textbf{1.85}$\times$ \\
    DPM++-2  & 29.17 & 0.084 & 3.59 & \textbf{1.87}$\times$ \\
    DPM++-3  & 29.05 & 0.079 & 3.28 & \textbf{1.88}$\times$ \\
    \bottomrule
  \end{tabular}
\end{table}

%% file: texts/conclusions.tex
\section{Conclusion}
In this paper, we introduce ZEUS: a minimal, training-free, method-agnostic plug-in that consistently shifts the speed–fidelity Pareto frontier across five backbones and two solvers—achieving up to 3.2$\times$ end-to-end speedup on Flux while improving LPIPS/FID/PSNR.
ZEUS addresses the scarcity of fresh computation in an ambitious acceleration scenario.
Reusing and leveraging the observed information set from this constraint, yielding either better quality at the same cost or higher speed at comparable quality.
ZEUS conveys a counterintuitive but compelling message: \textit{\textbf{accelerating diffusion is as easy as a second-order predictor.}}



%% file: texts/impact_statement.tex
\section*{Impact Statement}

This work proposes ZEUS, a training-free acceleration method for diffusion/flow-based generative models that reduces sampling latency by skipping denoiser evaluations and approximating the missing outputs with a simple second-order predictor and an interleaved schedule, without modifying model architectures or weights and with essentially zero additional overhead. 

The primary positive impact is improved efficiency: lowering inference cost can reduce energy per generated sample, improve accessibility for researchers and practitioners with limited compute budgets, and enable more interactive applications (e.g., rapid prototyping, iterative design, or real-time creative tools) for both image and video generation settings. 

At the same time, accelerating generative models can amplify existing societal risks associated with high-fidelity synthetic media. Faster generation lowers the marginal cost of producing large volumes of outputs, potentially increasing misuse such as misinformation, impersonation or non-consensual synthetic content, and large-scale spam. Because ZEUS is model-agnostic and training-free, it may be applied to pretrained checkpoints that differ widely in their safety mitigations; the method itself does not add content safeguards or change the underlying model’s alignment properties. We therefore encourage pairing acceleration with responsible deployment practices, including adherence to model and dataset licenses, platform policies, provenance and transparency mechanisms, safety filtering and monitoring, and appropriate rate limiting for public-facing systems.

Finally, while ZEUS reduces per-sample compute, a rebound effect is possible if lower latency substantially increases total generation volume. Users should evaluate efficiency gains alongside safety constraints and application context, especially in high-stakes domains where approximation artifacts or biased generations may cause harm.

%% file: texts/appendix.tex
\setcounter{equation}{0}
\setcounter{figure}{0}
\setcounter{table}{0}
\renewcommand\thefigure{A.\arabic{figure}}
\renewcommand\theequation{A.\arabic{equation}}
\renewcommand\thetable{A.\arabic{table}}

\section{Mathematical Foundations}
\label{sec:math_foundation}
\subsection{Notation}
In this section, we firstly formalize all the specific notation used in the paper.

\begin{definition}
    \label{def:prob-space}
    Let $(\Omega,\mathcal F,\Pr)$ be the probability space that generates the random pair $(x_{0},\boldsymbol{\epsilon})$ and hence the noisy latent \(x_{t}=\alpha_{t}\,x_{0}+\sigma_{t}\boldsymbol{\epsilon}\)
    at any $t\!\in\{1,\dots,T\}$. For a fixed, deterministic network
    \(
    \psi_\theta(\cdot,\,\cdot):\mathbb R^{d}\!\times[0,1]\!\to\!\mathbb R^{d}
    \),
    which is the network training result for $\psi_t$ like $\boldsymbol{\epsilon}$ or $v$ in the forward process, define the random output
    \(
    \psi_{\theta, t}:=\psi_\theta(x_{t},t)\in L^{2}(\Omega)
    \) and define the output in inference process 
     \(
    \hat{\psi}_{t}:=\psi_\theta(\hat{x}_{t},t).
    \)
\end{definition}

\paragraph{General forward process.}
Let $(\Omega,\mathcal F,\Pr)$ be a probability space.  
A data sample is represented by a random vector 
\(
\mathbf{x}_{0}:\Omega\to\mathbb R^{d}
\)
with distribution \(p_{\rm data}\).  
Let \(\boldsymbol{\epsilon}:\Omega\to\mathbb R^{d}\) be an independent standard Gaussian noise, i.e.\ \(\boldsymbol{\epsilon}\sim\mathcal N(\mathbf{0},\mathbf{I}_{d})\). 
We distinguish continuous time $s\in[0,1]$ and a uniform discrete grid
$s_t:=t/T$ for $t\in\{0,1,\ldots,T\}$ with step size $\Delta:=1/T$. For each $s\in[0,1]$, define deterministic schedule functions $\alpha_s\in(0,1],\;\sigma_s\in(0,1]$.

The forward latent at time \(s\) is defined as
\begin{equation}\label{eq:forward-process}
\mathbf{x}_s \;:=\; \alpha_s \,\mathbf{x}_0 + \sigma_s \,\boldsymbol{\epsilon},
\qquad
\mathbf{x}_s \in L^2(\Omega;\mathbb R^d).
\end{equation}

Equivalently, conditioned on $\mathbf{x}_0$, the marginal distribution is Gaussian:
\[
q(\mathbf{x}_s \mid \mathbf{x}_0) \;=\; 
\mathcal N\!\bigl(\alpha_s \mathbf{x}_0,\;\sigma_s^2 \mathbf{I}_d \bigr).
\]

Thus the diffusion forward process is the family 
\(
\{\mathbf{x}_s : s\in[0,1]\},
\)
with discrete samples \(\{\mathbf{x}_t : t=0,\dots,T\}\) obtained by evaluating at time grid points.

\paragraph{Reverse process.}
The forward process \(\{\mathbf{x}_s : s\in[0,1]\}\) defined in \eqref{eq:forward-process} is Markovian.  
In particular, for any \(0\leq s'<s\leq 1\), the conditional distribution of $\mathbf{x}_{s'}$ given $(\mathbf{x}_s,\mathbf{x}_0)$ is Gaussian:
\begin{equation}\label{eq:reverse-kernel}
q(\mathbf{x}_{s'} \mid \mathbf{x}_s,\mathbf{x}_0)
\;=\;
\mathcal N\!\Bigl(
\mu_{s',s}(\mathbf{x}_s,\mathbf{x}_0),
\;\;\Sigma_{s',s}
\Bigr),
\end{equation}
with mean
\[
\mu_{s',s}(\mathbf{x}_s,\mathbf{x}_0)
= \frac{\alpha_{s'}}{\alpha_s}\,\mathbf{x}_s
+ \Bigl(\alpha_0 - \frac{\alpha_{s'}}{\alpha_s}\alpha_s\Bigr)\mathbf{x}_0
= \frac{\alpha_{s'}}{\alpha_s}\,\mathbf{x}_s
+ \Bigl(1-\frac{\alpha_{s'}}{\alpha_s}\Bigr)\mathbf{x}_0,
\]
and covariance
\[
\Sigma_{s',s} = 
\Bigl(\sigma_{s'}^2 - \frac{\alpha_{s'}^2}{\alpha_s^2}\,\sigma_s^2\Bigr)\mathbf{I}_d .
\]

Equivalently, marginalizing out $\mathbf{x}_0$, the reverse-time transition kernel can be expressed as
\begin{equation}\label{eq:reverse-marginal}
q(\mathbf{x}_{s'} \mid \mathbf{x}_s)
= \int q(\mathbf{x}_{s'} \mid \mathbf{x}_s,\mathbf{x}_0)\,p(\mathbf{x}_0 \mid \mathbf{x}_s)\,d\mathbf{x}_0,
\end{equation}
which is generally intractable.  
The role of the neural network is precisely to approximate the conditional dependence on $\mathbf{x}_0$ by predicting either the noise $\boldsymbol{\epsilon}$, the clean sample $\mathbf{x}_0$, or equivalent parameterizations.

\paragraph{Training objective.}
The network is trained by conditional regression: for a chosen
ground-truth target $\psi_0(\mathbf{x}_0,\boldsymbol{\epsilon},s)$, 
we minimize
\[
\min_\theta\;
\mathbb{E}_{\mathbf{x}_0\sim p_{\rm data},\,\boldsymbol{\epsilon}\sim\mathcal N(0,I),\,s\sim\mathcal U[0,1]}
\bigl[
\ell\!\bigl(\psi_\theta(\mathbf{x}_s,s),\;\psi_0(\mathbf{x}_0,\boldsymbol{\epsilon},s)\bigr)
\bigr],
\]
where $\ell$ is typically the squared error. 

\paragraph{Unified network parameterization.}
To express all variants within a single notation, we define 
\[
\psi_\theta:\mathbb{R}^d\times[0,1]\to\mathbb{R}^d, \qquad
\psi_\theta(\mathbf{x}_s,s)\;\text{trained to predict}\;\psi_0(\mathbf{x}_0,\boldsymbol{\epsilon},s).
\]
Here $\psi_0$ is chosen from a finite family of equivalent targets, e.g. 
\[ \psi_0 \in \bigl\{\; \boldsymbol{\epsilon},\; \mathbf{x}_0,\; \alpha_s\boldsymbol{\epsilon}-\sigma_s\mathbf{x}_0,\; \nabla_{\mathbf{x}_s}\log q_s(\mathbf{x}_s),\; \boldsymbol{\epsilon}-\mathbf{x}_0 \;\bigr\}, \]
corresponding to $\epsilon$-prediction, $\mathbf{x_0}$-prediction, $v$-prediction, 
score-prediction, and flow-matching.
Thus, different implementations correspond to linearly related instances
of the same abstract map $\psi_\theta$, which enables uniform analysis of
loss functions and inference rules.

\paragraph{Continuous-time probability flow ODE.}
The forward process $\{\mathbf{x}_s : s\in[0,1]\}$ in \eqref{eq:forward-process} admits a continuous-time formulation
as a linear Itô SDE:
\begin{equation}\label{eq:forward-sde}
d\mathbf{x}_s \;=\; f(s)\,\mathbf{x}_s\,ds + g(s)\,d\mathbf{w}_s,
\end{equation}
where $\mathbf{w}_s$ is a standard Wiener process in $\mathbb{R}^d$, and the drift/diffusion coefficients
$(f(s),g(s))$ are determined by the schedules $(\alpha_s,\sigma_s)$.  
Concretely, the marginal law of $\mathbf{x}_s$ given $\mathbf{x}_0$ is Gaussian with mean $\alpha_s\mathbf{x}_0$
and variance $\sigma_s^2\mathbf{I}_d$.

\medskip

Following \citep{song2020score,song2019generative,chen2022sampling}, the reverse-time SDE that generates the same marginal distributions runs backward from $s=1$ to $s=0$:
\begin{equation}\label{eq:reverse-sde}
d\mathbf{x}_s \;=\; \bigl[f(s)\,\mathbf{x}_s - g(s)^2\,\nabla_{\mathbf{x}_s}\log q_s(\mathbf{x}_s)\bigr]\,ds + g(s)\,d\bar{\mathbf{w}}_s,
\end{equation}
where $q_s$ denotes the density of $\mathbf{x}_s$, and $\bar{\mathbf{w}}_s$ is a standard Wiener process running backward in time.

\medskip
The \emph{probability flow ODE} (PF-ODE) is the deterministic counterpart of \eqref{eq:reverse-sde}, obtained by removing the stochastic term:
\begin{equation}\label{eq:pf-ode}
d\mathbf{x}_s \;=\; \bigl[f(s)\,\mathbf{x}_s - \tfrac{1}{2} g(s)^2\,\nabla_{\mathbf{x}_s}\log q_s(\mathbf{x}_s)\bigr]\,ds.
\end{equation}
This ODE preserves the exact marginal distributions $\{q_s\}_{s\in[0,1]}$ of the forward process, and therefore provides a deterministic generative sampling procedure.


\subsection{Unification of Prediction Objectives}
\label{sec:theorem}
\paragraph{Consistency of the training objective.}
Recall that the network is trained by conditional regression:
\[
\min_\theta\;
\mathbb{E}_{\mathbf{x}_0\sim p_{\rm data},\,\boldsymbol{\epsilon}\sim\mathcal N(0,I),\,s\sim\mathcal U[0,1]}
\bigl[
\ell\!\bigl(\psi_\theta(\mathbf{x}_s,s),\;\psi_0(\mathbf{x}_0,\boldsymbol{\epsilon},s)\bigr)
\bigr],
\]
where $\mathbf{x}_s = \alpha_s \mathbf{x}_0 + \sigma_s \boldsymbol{\epsilon}$ and $\ell$ is the squared error loss.

\begin{theorem}[Optimal predictor under $L^2$ training]
\label{thm:l2-optimality}
Let $\ell(a,b) = \|a-b\|_2^2$. 
Define the population risk
\[
\mathcal L(\theta)
:= \mathbb{E}_{\mathbf{x}_0,\boldsymbol{\epsilon},s}
\bigl[
\|\psi_\theta(\mathbf{x}_s,s) - \psi_0(\mathbf{x}_0,\boldsymbol{\epsilon},s)\|_2^2
\bigr].
\]
Then any minimizer $\psi^*$ of $\mathcal L$ satisfies, for all $(\mathbf{x}_s,s)$,
\begin{equation}
\psi^*(\mathbf{x}_s,s)
= \mathbb{E}\!\left[\,\psi_0(\mathbf{x}_0,\boldsymbol{\epsilon},s) \;\middle|\; \mathbf{x}_s, s \,\right].
\label{eq:l2-conditional-expectation}
\end{equation}
In other words, in the $L^2$ sense, training recovers the conditional expectation of the regression target $\psi_0$ given the noisy input $(\mathbf{x}_s,s)$. Let the hypothesis class be all measurable maps with finite second moment; equivalently, consider the Bayes risk minimization.”
\end{theorem}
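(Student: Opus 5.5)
The plan is the standard $L^2$ projection argument, applied pointwise in the conditioning variable after disintegrating the joint law. Write $Z := (\mathbf{x}_s, s)$ for the network input and $Y := \psi_0(\mathbf{x}_0,\boldsymbol{\epsilon},s)$ for the regression target. We assume $\mathbb{E}\|Y\|_2^2<\infty$, which holds for each target in the family listed above under a finite-second-moment data distribution. Define $m(Z) := \mathbb{E}[Y\mid Z]$, which exists in $L^2$ and is measurable in $Z$ by Doob--Dynkin. For any admissible measurable $\psi$ with $\mathbb{E}\|\psi(Z)\|_2^2<\infty$, expand
\[
\mathcal L(\psi) = \mathbb{E}\|\psi(Z)-m(Z)\|_2^2 + 2\,\mathbb{E}\bigl\langle \psi(Z)-m(Z),\, m(Z)-Y\bigr\rangle + \mathbb{E}\|m(Z)-Y\|_2^2 .
\]

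The key step is to show the cross term vanishes. By the tower property, conditioning on $Z$ and using that $\psi(Z)-m(Z)$ is $\sigma(Z)$-measurable gives
\[
\mathbb{E}\bigl\langle \psi(Z)-m(Z),\,\mathbb{E}[m(Z)-Y\mid Z]\bigr\rangle = 0 .
\]
Cauchy--Schwarz guarantees integrability of the inner product, since both factors lie in $L^2$. This leaves the Pythagorean identity
\[
\mathcal L(\psi) = \mathcal L(m) + \mathbb{E}\|\psi(Z)-m(Z)\|_2^2 .
\]
Consequently $m$ is a minimizer, and any minimizer $\psi^*$ must satisfy $\mathbb{E}\|\psi^*(Z)-m(Z)\|^2=0$, i.e.\ $\psi^*=m$ almost surely with respect to the law of $Z$. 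Here $s\sim\mathcal U[0,1]$ and $\mathbf{x}_s$ has a positive density because $\sigma_s>0$ and the noise is Gaussian.

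The main obstacle is the phrase ``for all $(\mathbf{x}_s,s)$'' in \eqref{eq:l2-conditional-expectation}. The minimizer is determined only up to a null set of the input law, so literal pointwise equality cannot hold for every measurable minimizer. I would handle this by reading the identity as equality $P_Z$-almost everywhere, which is the natural meaning in the stated hypothesis class of measurable maps. I would then remark that the Gaussian convolution makes the conditional expectation admit a version that is smooth in $\mathbf{x}_s$; choosing that version, any continuous minimizer agrees with it everywhere, because the law of $Z$ has full support on $\mathbb{R}^d\times[0,1]$. Finally, since the targets are linearly interconvertible given $(\mathbf{x}_s,s)$, the same argument applies to each parameterization in the family, and conditional expectation commutes with those $Z$-measurable linear maps.
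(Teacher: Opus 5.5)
Your argument is correct, but it takes a slightly different route from the paper.

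\textbf{The paper's route.} It fixes $(\mathbf{x}_s,s)$ and observes that the conditional risk $c\mapsto\mathbb{E}[\|c-\psi_0\|_2^2\mid \mathbf{x}_s,s]$ is a convex quadratic minimized at the conditional mean. It then asserts that the global minimizer must coincide with this pointwise minimizer.

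\textbf{Your route.} You prove the global Pythagorean identity $\mathcal L(\psi)=\mathcal L(m)+\mathbb{E}\|\psi(Z)-m(Z)\|_2^2$ directly, using the tower property and Cauchy--Schwarz for integrability.

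\textbf{How they relate.} The mathematical content is the same. The paper's conditional quadratic equals $\|c-m(z)\|_2^2$ plus the conditional variance, which is the disintegrated form of your identity.

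\textbf{What your version buys.} It needs no regular conditional distributions, so your opening mention of disintegration is not actually used. It delivers existence and uniqueness of the minimizer in one line. It also makes explicit that uniqueness holds only $P_Z$-almost everywhere. The paper's step from pointwise to global minimization, and its claim of equality ``for all $(\mathbf{x}_s,s)$'', silently pass over exactly this null-set issue. Your reading (a.e., or everywhere for a continuous version) is the correct form of the statement.

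\textbf{What the paper's version buys.} Its pointwise phrasing gives the more transparent intuition that each input's optimal output is fixed by a local, per-input problem.

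\textbf{A refinement to your final remark.} Upgrading a.e.\ equality to equality everywhere needs a version of $m$ that is jointly continuous in $(\mathbf{x}_s,s)$. That uses continuity of the schedules $\alpha_s,\sigma_s$, which the paper only assumes later (as $C^2$). Smoothness in $\mathbf{x}_s$ for each fixed $s$ alone gives equality only for almost every $s$.

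\textbf{On your moment assumption.} $\mathbb{E}\|\psi_0\|_2^2<\infty$ for the score targets requires $\int_0^1\sigma_s^{-2}\,ds<\infty$, for instance $\sigma_s$ bounded away from zero. The paper's pointwise condition $\sigma_s\in(0,1]$ does not guarantee this by itself without continuity.
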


\begin{proof}
For fixed $(\mathbf{x}_s,s)$, define the conditional distribution 
\[
p(\mathbf{x}_0,\boldsymbol{\epsilon} \mid \mathbf{x}_s,s).
\]
The contribution of $(\mathbf{x}_s,s)$ to the expected loss is
\[
\mathbb{E}_{\mathbf{x}_0,\boldsymbol{\epsilon}\mid \mathbf{x}_s,s}
\Bigl[
\|\psi_\theta(\mathbf{x}_s,s) - \psi_0(\mathbf{x}_0,\boldsymbol{\epsilon},s)\|_2^2
\Bigr].
\]
This is a convex quadratic in $\psi_\theta(\mathbf{x}_s,s)$, uniquely minimized at
\[
\psi^*(\mathbf{x}_s,s)
= \mathbb{E}\!\left[\,\psi_0(\mathbf{x}_0,\boldsymbol{\epsilon},s) \;\middle|\; \mathbf{x}_s, s \,\right].
\]
Therefore, the global risk minimizer $\psi^*$ coincides with the conditional expectation~\eqref{eq:l2-conditional-expectation}.
\end{proof}

\paragraph{Discussion.}
The theorem shows that, under exact optimization of the $L^2$ objective, the learned network $\psi_\theta$ does not in general recover the ground-truth target $\psi_0(\mathbf{x}_0,\boldsymbol{\epsilon},s)$ pointwise.  
Instead, it recovers its conditional expectation given the accessible input $(\mathbf{x}_s,s)$.  
Thus, the choice of $\psi_0$ (noise, clean data, or equivalent parameterizations) directly determines which conditional expectation is realized by the trained model.

\paragraph{Unified network parameterization.}
We now establish that all common training targets in diffusion-type models
are instances of the same conditional regression principle. 

\begin{theorem}[Equivalence of parameterizations]
\label{thm:unified-param}
Let the forward process follow Definition~A.1, i.e.
$\mathbf{x}_s = \alpha_s \mathbf{x}_0 + \sigma_s \boldsymbol{\epsilon}$ 
with $\boldsymbol{\epsilon}\sim\mathcal N(0,I)$.  
Fix a target functional $\psi_0(\mathbf{x}_0,\boldsymbol{\epsilon},s)$ from a finite set of admissible forms.  
Suppose the network is trained with the $L^2$ objective
\[
\mathcal L(\theta)
=\mathbb E_{\mathbf{x}_0,\boldsymbol{\epsilon},s}
\bigl[\|\psi_\theta(\mathbf{x}_s,s)-\psi_0(\mathbf{x}_0,\boldsymbol{\epsilon},s)\|^2\bigr].
\]
Then, in the limit of exact optimization, the optimal predictor satisfies
\[
\psi^*(\mathbf{x}_s,s)
=\mathbb E\!\left[\,\psi_0(\mathbf{x}_0,\boldsymbol{\epsilon},s)\;\middle|\;\mathbf{x}_s,s\,\right].
\]
Moreover, for each admissible $\psi_0$, there exists deterministic coefficients $(a_s,b_s)$ such that
the clean data is exactly recovered by
\begin{equation}
\hat{\mathbf{x}}_0^{(s)} 
= a_s \mathbf{x}_s + b_s \psi^*(\mathbf{x}_s,s).
\label{eq:unified-linear}
\end{equation}
Thus, all parameterizations are equivalent in expressive power: they differ only in the choice of regression target $\psi_0$ and in the reconstruction formula~\eqref{eq:unified-linear}.
\end{theorem}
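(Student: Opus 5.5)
The first claim follows at once from \cref{thm:l2-optimality}. The population risk for any admissible target $\psi_0$ is a squared-error conditional regression over all measurable square-integrable maps. Minimizing it pointwise in $(\mathbf{x}_s,s)$ gives $\psi^*(\mathbf{x}_s,s)=\mathbb E[\psi_0(\mathbf{x}_0,\boldsymbol{\epsilon},s)\mid \mathbf{x}_s,s]$. Nothing about the specific form of $\psi_0$ is used here beyond square integrability. For the score target we would note that $\nabla\log q_s(\mathbf{x}_s)$ is already a function of $\mathbf{x}_s$, so its conditional expectation is itself. Alternatively, one can use the denoising score matching target $-\boldsymbol{\epsilon}/\sigma_s$, whose conditional expectation equals the score by Tweedie's formula.

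For the reconstruction claim, the plan is to exploit linearity of conditional expectation together with the identity $\mathbf{x}_s=\alpha_s\mathbf{x}_0+\sigma_s\boldsymbol{\epsilon}$, which is $\sigma(\mathbf{x}_s)$-measurable. Let $\bar{\mathbf{x}}_0:=\mathbb E[\mathbf{x}_0\mid\mathbf{x}_s]$ and $\bar{\boldsymbol{\epsilon}}:=\mathbb E[\boldsymbol{\epsilon}\mid\mathbf{x}_s]$. Conditioning the forward identity gives $\mathbf{x}_s=\alpha_s\bar{\mathbf{x}}_0+\sigma_s\bar{\boldsymbol{\epsilon}}$. Each admissible target is affine in $(\mathbf{x}_0,\boldsymbol{\epsilon})$ with $s$-dependent coefficients, say $\psi_0=c_s\mathbf{x}_0+d_s\boldsymbol{\epsilon}$. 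Hence $\psi^*=c_s\bar{\mathbf{x}}_0+d_s\bar{\boldsymbol{\epsilon}}$. Combined with the forward identity, this is a $2\times2$ linear system in $(\bar{\mathbf{x}}_0,\bar{\boldsymbol{\epsilon}})$ with determinant $\alpha_s d_s-\sigma_s c_s$. When this determinant is nonzero, solving gives $\bar{\mathbf{x}}_0=a_s\mathbf{x}_s+b_s\psi^*$ with $a_s=d_s/(\alpha_s d_s-\sigma_s c_s)$ and $b_s=-\sigma_s/(\alpha_s d_s-\sigma_s c_s)$.

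We would then verify this case by case:
\begin{itemize}
\item $\boldsymbol{\epsilon}$-prediction: $(c,d)=(0,1)$, giving $a_s=1/\alpha_s$ and $b_s=-\sigma_s/\alpha_s$.
\item $\mathbf{x}_0$-prediction: $(c,d)=(1,0)$, giving $a_s=0$ and $b_s=1$.
\item $v$-prediction: $(c,d)=(-\sigma_s,\alpha_s)$, giving determinant $\alpha_s^2+\sigma_s^2>0$.
\item Flow matching: $(c,d)=(-1,1)$, giving determinant $\alpha_s+\sigma_s>0$.
\item Score: via Tweedie, $(c,d)=(0,-1/\sigma_s)$, giving determinant $-\alpha_s/\sigma_s\neq0$.
\end{itemize}
Since $\alpha_s,\sigma_s\in(0,1]$, every determinant is nonzero.

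The main subtlety is interpreting ``exactly recovered''. What is recovered is the posterior mean $\mathbb E[\mathbf{x}_0\mid\mathbf{x}_s]$, i.e.\ the optimal $\mathbf{x}_0$-predictor, not $\mathbf{x}_0$ itself; by the discussion after \cref{thm:l2-optimality}, exact pointwise recovery is impossible in general. So we would state $\hat{\mathbf{x}}_0^{(s)}$ as the Bayes-optimal clean-data estimate, and note that it coincides across parameterizations. That coincidence is what justifies the expressive-power equivalence.
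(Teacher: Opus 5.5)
Your proposal is correct and follows essentially the same route as the paper. Both use \cref{thm:l2-optimality} for the first claim, write the target as a linear combination of $(\mathbf{x}_0,\boldsymbol{\epsilon})$, and solve the resulting $2\times 2$ system with determinant $\alpha_s u_s-\sigma_s v_s$ (your $\alpha_s d_s-\sigma_s c_s$); the paper inverts pointwise to get $\mathbf{x}_0=a_s\mathbf{x}_s+b_s\psi_0$ and then conditions, while you condition first, which is equivalent by linearity, and your posterior-mean reading of ``exactly recovered'' and Tweedie treatment of the score case match the paper as well.
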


\begin{proof}
From Theorem~\ref{thm:l2-optimality}, the $L^2$ minimizer satisfies
\[
\psi^*(\mathbf{x}_s,s)\;=\;\mathbb{E}\!\left[\psi_0(\mathbf{x}_0,\boldsymbol{\epsilon},s)\,\middle|\,\mathbf{x}_s,s\right].
\]
By construction, the forward process is linear-Gaussian:
\begin{equation}
\label{eq:fwd}
\mathbf{x}_s \;=\; \alpha_s\,\mathbf{x}_0 \;+\; \sigma_s\,\boldsymbol{\epsilon},
\qquad 
\boldsymbol{\epsilon}\sim\mathcal{N}(\mathbf{0},\mathbf{I}),\ \ \boldsymbol{\epsilon}\perp\!\!\!\perp\mathbf{x}_0.
\end{equation}
Each admissible training target is an affine (here linear) functional of $(\mathbf{x}_0,\boldsymbol{\epsilon})$.  
We unify them by writing
\begin{equation}
\label{eq:psi0-uv}
\psi_0(\mathbf{x}_0,\boldsymbol{\epsilon},s)\;=\;u_s\,\boldsymbol{\epsilon}\;+\;v_s\,\mathbf{x}_0,
\end{equation}
where $(u_s,v_s)$ are scalar (or diagonal, coordinate-wise) coefficients depending only on $s$.
Combining \eqref{eq:fwd}--\eqref{eq:psi0-uv},
\[
\begin{bmatrix}
\mathbf{x}_s\\[2pt]
\psi_0
\end{bmatrix}
\;=\;\begin{bmatrix}
\alpha_s & \sigma_s\\[2pt]
v_s & u_s
\end{bmatrix}
\begin{bmatrix}
\mathbf{x}_0\\[2pt]
\boldsymbol{\epsilon}
\end{bmatrix}:=~\mathbf{M}_s\begin{bmatrix}
\mathbf{x}_0\\[2pt]
\boldsymbol{\epsilon}
\end{bmatrix}.
\]
Assume $\det(\mathbf{M}_s)\neq 0$, i.e.\ $\Delta_s:=\alpha_s u_s-\sigma_s v_s\neq 0$.  
Then $\mathbf{M}_s$ is invertible and we have the \emph{algebraic identity}
\begin{equation}
\label{eq:x0-linear}
\mathbf{x}_0 \;=\; a_s\,\mathbf{x}_s \;+\; b_s\,\psi_0,
\qquad
a_s \;=\; \frac{u_s}{\Delta_s},\quad
b_s \;=\; -\,\frac{\sigma_s}{\Delta_s}.
\end{equation}
Taking conditional expectation of \eqref{eq:x0-linear} given $(\mathbf{x}_s,s)$ and using $\psi^*=\mathbb{E}[\psi_0\mid \mathbf{x}_s,s]$, we obtain
\begin{equation}
\label{eq:unified-linear}
\mathbb{E}[\mathbf{x}_0\mid \mathbf{x}_s,s]
\;=\;
a_s\,\mathbf{x}_s \;+\; b_s\,\psi^*(\mathbf{x}_s,s).
\end{equation}
Thus the reconstruction rule $\hat{\mathbf{x}}_0^{(s)}:=a_s\mathbf{x}_s+b_s\psi^*(\mathbf{x}_s,s)$ exactly equals the posterior mean $\mathbb{E}[\mathbf{x}_0\mid \mathbf{x}_s,s]$; when $\psi^*\equiv\psi_0$ (ideal limit), \eqref{eq:x0-linear} is a pointwise identity. 
In particular, $\hat{\mathbf{x}}_0^{(s)}$ is an \emph{unbiased estimator} of the clean sample $\mathbf{x}_0$. 
Therefore, any admissible $\psi_0$ induces a unique pair $(a_s,b_s)$ and an equivalent reconstruction of $\mathbf{x}_0$.

\textbf{Instantiations.} We now instantiate \eqref{eq:x0-linear} for the common parameterizations by plugging the corresponding $(u_s,v_s)$:

\begin{enumerate}
\item \textbf{$\boldsymbol{\epsilon}$-prediction:} $\psi_0=\boldsymbol{\epsilon}$, i.e.\ $(u_s,v_s)=(1,0)$.
Then $\Delta_s=\alpha_s$ and
\[
a_s=\tfrac{1}{\alpha_s},\quad b_s=-\,\tfrac{\sigma_s}{\alpha_s},
\qquad
\Rightarrow\quad
\hat{\mathbf{x}}_{0}^{(s)}=\tfrac{1}{\alpha_s}\,\mathbf{x}_s-\tfrac{\sigma_s}{\alpha_s}\,\psi^*(\mathbf{x}_s,s).
\]

\item \textbf{$\mathbf{x}_0$-prediction:} $\psi_0=\mathbf{x}_0$, i.e.\ $(u_s,v_s)=(0,1)$.
Then $\Delta_s=-\sigma_s$ and
\[
a_s=0,\quad b_s=1,
\qquad
\Rightarrow\quad
\hat{\mathbf{x}}_{0}^{(s)}=\psi^*(\mathbf{x}_s,s).
\]

\item \textbf{$v$-prediction:} $\psi_0=\alpha_s\boldsymbol{\epsilon}-\sigma_s\mathbf{x}_0$, i.e.\ $(u_s,v_s)=(\alpha_s,-\sigma_s)$.
Then $\Delta_s=\alpha_s^2+\sigma_s^2$ and
\[
a_s=\frac{\alpha_s}{\alpha_s^2+\sigma_s^2},\quad
b_s=-\,\frac{\sigma_s}{\alpha_s^2+\sigma_s^2},
\qquad
\Rightarrow\quad
\hat{\mathbf{x}}_{0}^{(s)}=\frac{\alpha_s}{\alpha_s^2+\sigma_s^2}\,\mathbf{x}_s-\frac{\sigma_s}{\alpha_s^2+\sigma_s^2}\,\psi^*(\mathbf{x}_s,s).
\]
\item \textbf{Score-prediction (conditional score).}
For the conditional Gaussian 
\(q(\mathbf{x}_s\mid \mathbf{x}_0)=\mathcal{N}(\alpha_s\mathbf{x}_0,\sigma_s^2\mathbf{I})\),
\[
\nabla_{\mathbf{x}_s}\log q(\mathbf{x}_s\mid \mathbf{x}_0)
= -\,\frac{\mathbf{x}_s-\alpha_s\mathbf{x}_0}{\sigma_s^2}
= -\,\frac{1}{\sigma_s}\,\boldsymbol{\epsilon}.
\]
Hence choosing \(\psi_0=\nabla_{\mathbf{x}_s}\log q(\mathbf{x}_s\mid \mathbf{x}_0)\) corresponds to 
\((u_s,v_s)=(-1/\sigma_s,0)\).
Then \(\Delta_s=-\alpha_s/\sigma_s\) and the reconstruction coefficients are
\[
a_s=\tfrac{1}{\alpha_s},\quad b_s=\tfrac{\sigma_s^2}{\alpha_s},
\qquad
\Rightarrow\quad
\hat{\mathbf{x}}_{0}^{(s)}=\tfrac{1}{\alpha_s}\,\mathbf{x}_s+\tfrac{\sigma_s^2}{\alpha_s}\,\psi^*(\mathbf{x}_s,s).
\]

\item \textbf{Score-prediction (marginal score).}
Define the marginal distribution
\[
q_s(\mathbf{x}_s)=\int q(\mathbf{x}_s\mid \mathbf{x}_0)\,p_{\rm data}(\mathbf{x}_0)\,d\mathbf{x}_0.
\]
By the Gaussian score identity,
\[
\nabla_{\mathbf{x}_s}\log q_s(\mathbf{x}_s)
= \mathbb{E}_{\mathbf{x}_0\mid \mathbf{x}_s}\!\big[\nabla_{\mathbf{x}_s}\log q(\mathbf{x}_s\mid \mathbf{x}_0)\big]
= -\,\frac{\mathbf{x}_s-\alpha_s\,\mathbb{E}[\mathbf{x}_0\mid \mathbf{x}_s]}{\sigma_s^2}.
\]
Rearranging yields the posterior mean in closed form:
\begin{equation}
\label{eq:post-mean-from-marginal-score}
\mathbb{E}[\mathbf{x}_0\mid \mathbf{x}_s]
= \frac{1}{\alpha_s}\,\mathbf{x}_s \;+\; \frac{\sigma_s^2}{\alpha_s}\,\nabla_{\mathbf{x}_s}\log q_s(\mathbf{x}_s).
\end{equation}
If the network is trained by DSM with squared loss so that
\(\psi^*(\mathbf{x}_s,s)=\nabla_{\mathbf{x}_s}\log q_s(\mathbf{x}_s)\),
then \eqref{eq:post-mean-from-marginal-score} gives the same reconstruction rule as in the conditional case:
\[
\hat{\mathbf{x}}_{0}^{(s)}
= \tfrac{1}{\alpha_s}\,\mathbf{x}_s + \tfrac{\sigma_s^2}{\alpha_s}\,\psi^*(\mathbf{x}_s,s),
\]
with coefficients \((a_s,b_s)=(1/\alpha_s,\;\sigma_s^2/\alpha_s)\).

\textbf{Remark.} The coefficients depend only on the forward schedule \((\alpha_s,\sigma_s)\); 
the data distribution \(p_{\rm data}\) appears solely through the value of the marginal score 
\(\nabla_{\mathbf{x}_s}\log q_s(\mathbf{x}_s)\) that the network estimates.

\item \textbf{Flow matching.}
For the linear path $\mathbf{x}_s=(1-s)\mathbf{x}_0+s\,\boldsymbol{\epsilon}$ we have $\alpha_s=1-s,\ \sigma_s=s$.
A common target is $\psi_0=\boldsymbol{\epsilon}-\mathbf{x}_0$, i.e.\ $(u_s,v_s)=(1,-1)$, so $\Delta_s=\alpha_s+\sigma_s=1$ and
\[
a_s=1,\quad b_s=-\,\sigma_s=-\,s,
\qquad
\Rightarrow\quad
\hat{\mathbf{x}}_{0}^{(s)}=\mathbf{x}_s-s\,\psi^*(\mathbf{x}_s,s).
\]
Other normalizations (e.g.\ scaling $\psi_0$ by $(1-s)$) yield the corresponding $(a_s,b_s)$ via \eqref{eq:x0-linear} with the modified $(u_s,v_s)$.
\end{enumerate}

In all cases, \eqref{eq:unified-linear} shows that training with any admissible $\psi_0$ recovers the same posterior mean of $\mathbf{x}_0$ from $(\mathbf{x}_s,s)$ up to a deterministic linear readout $(a_s,b_s)$ determined solely by $(\alpha_s,\sigma_s)$ and the chosen $(u_s,v_s)$.  
Hence the parameterizations are equivalent in expressive power: they differ only in the regression target and in the reconstruction coefficients $(a_s,b_s)$.
\end{proof}

\begin{table}[t]
\centering
\caption{Unified view of common parameterizations. Each training target $\psi_0$ corresponds to a conditional regression, and the clean data $\mathbf{x}_0$ is exactly reconstructed via \eqref{eq:unified-linear}.}

\begin{tabular}{lll}
\toprule
Prediction mode & Target $\psi_0(\mathbf{x}_0,\epsilon,s)$ & Reconstruction $\hat{\mathbf{x}}_0^{(s)}$ \\
\midrule
$\epsilon$-prediction 
& $\epsilon$ 
& $\hat{\mathbf{x}}_0^{(s)} = \tfrac{1}{\alpha_s}\mathbf{x}_s - \tfrac{\sigma_s}{\alpha_s}\,\psi^*(\mathbf{x}_s,s)$ \\
$\mathbf{x_0}$-prediction 
& $\mathbf{x}_0$ 
& $\hat{\mathbf{x}}_0^{(s)} = \psi^*(\mathbf{x}_s,s)$ \\
$v$-prediction 
& $v = \alpha_s\epsilon - \sigma_s \mathbf{x}_0$ 
& $\hat{\mathbf{x}}_0^{(s)} = \tfrac{\alpha_s}{\alpha_s^2+\sigma_s^2}\mathbf{x}_s - \tfrac{\sigma_s}{\alpha_s^2+\sigma_s^2}\,\psi^*(\mathbf{x}_s,s)$ \\
$s$-prediction 
& $\nabla_{\mathbf{x}_s}\log q_s(\mathbf{x}_s) = -\tfrac{1}{\sigma_s}(\mathbf{x}_s-\tfrac{\alpha_s}{\alpha_s^2+\sigma_s^2}\mathbf{x}_0)$ 
& $\hat{\mathbf{x}}_0^{(s)} = \tfrac{1}{\alpha_s}\mathbf{x}_s + \tfrac{\sigma_s^2}{\alpha_s}\,\psi^*(\mathbf{x}_s,s)$ \\
Flow matching 
& $\epsilon-\mathbf{x}_0$ 
& $\hat{\mathbf{x}}_0^{(s)} = \mathbf{x}_s -s\,\psi^*(\mathbf{x}_s,s)$ \\
\bottomrule
\end{tabular}
\label{tab:parameterization_main}
\end{table}

\newpage
\subsection{Why Second-Order Differences are Optimal}
\label{sec:why_second_order}
We now provide a mathematical justification, entirely driven by the setting of \cref{sec:theorem} and \cref{sec:math_foundation}, for why second-order differences are (i) \emph{necessary and advantageous}, (ii) \emph{sufficient and information-theoretically optimal}, and (iii) why \emph{higher-order schemes degrade in practice}. Throughout, $t$ indexes discrete steps, while $s\in[0,1]$ denotes continuous time.

\subsubsection{Why Second-Order Differences Are Good}

\textbf{Setting.}
Let $(\Omega,\mathcal F,\Pr)$ be a probability space.
A data sample is $\mathbf \mathbf{x_0}\in L^2(\Omega;\mathbb R^d)$ with distribution $p_{\rm data}$.
Let $\boldsymbol{\epsilon}\sim\mathcal N(\mathbf 0,\mathbf I_d)$ be independent of $\mathbf \mathbf{x_0}$.
Fix $C^2$ schedules $\alpha_s,\sigma_s:[0,1]\to\mathbb R$ and define the forward latent
\[
\mathbf \mathbf{x_s}=\alpha_s\mathbf \mathbf{x_0}+\sigma_s\boldsymbol{\epsilon},\qquad s\in[0,1].
\]
Let $\psi_\theta:\mathbb R^d\times[0,1]\to\mathbb R^d$ be a trained network, twice continuously differentiable in both arguments and of at most linear growth so that $\sup_{s\in[0,1]}\mathbb E\|\psi_\theta(\mathbf \mathbf{x_s},s)\|^2<\infty$. For discrete steps $t\in\mathbb Z$ we write $\psi_t:=\psi_\theta(\mathbf x_{s_t},s_t)$ with a locally uniform grid $s_{t\pm1}=s_t\pm\Delta$.

\begin{theorem}[Signal–noise decomposition and invariance]\label{thm:decomp}
There exists a unique $\phi\in C^2([0,1];\mathbb R^d)$ and a zero-mean perturbation $\eta_t$ with $\sup_t\mathbb E\|\eta_t\|^2<\infty$ such that
\[
\psi_t=\phi(s_t)+\eta_t,\qquad \mathbb E[\eta_t\mid s_t]=0.
\]

Hence the statement holds uniformly for the usual parameterizations ($\epsilon$, $\mathbf{x_0}$, $v$, score, and flow), which are related by deterministic affine readouts in $s$.
\end{theorem}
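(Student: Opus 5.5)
The plan is to define the trend as the mean of the network output along the forward marginal. I will set $\phi(s):=\mathbb E[\psi_\theta(\mathbf{x}_s,s)]$ and $\eta_t:=\psi_t-\phi(s_t)$. Since $s_t$ is a deterministic grid point, conditioning on $s_t$ is trivial. Hence $\mathbb E[\eta_t\mid s_t]=\mathbb E[\psi_t]-\phi(s_t)=0$ follows directly from the definition. The second-moment bound comes from the triangle inequality and Jensen:
\[
\mathbb E\|\eta_t\|^2\le 2\,\mathbb E\|\psi_t\|^2+2\|\phi(s_t)\|^2\le 4\sup_{s}\mathbb E\|\psi_\theta(\mathbf{x}_s,s)\|^2<\infty,
\]
which is finite by the standing linear-growth assumption.

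The main work is showing $\phi\in C^2([0,1];\mathbb R^d)$. I will write
\[
\phi(s)=\mathbb E\bigl[\psi_\theta(\alpha_s\mathbf{x}_0+\sigma_s\boldsymbol{\epsilon},s)\bigr]
\]
and differentiate under the expectation twice. By the chain rule, the first derivative of the integrand is $\nabla_x\psi_\theta\,(\dot\alpha_s\mathbf{x}_0+\dot\sigma_s\boldsymbol{\epsilon})+\partial_s\psi_\theta$. The second derivative adds Hessian terms quadratic in $(\dot\alpha_s\mathbf{x}_0+\dot\sigma_s\boldsymbol{\epsilon})$ plus terms involving $\ddot\alpha_s,\ddot\sigma_s$.

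This interchange is where I expect the main obstacle. Dominated convergence needs an integrable envelope, uniform in $s$, for these derivatives. $C^2$ smoothness with linear growth of $\psi_\theta$ alone does not bound $\nabla^2_x\psi_\theta$. I would first try to assume, or argue it is implicit in the setting, that the first and second derivatives of $\psi_\theta$ have at most polynomial growth. The quadratic terms then require moments of $\mathbf{x}_0$ of matching order, e.g.\ $\mathbb E\|\mathbf{x}_0\|^2$ when the Hessian is bounded, while $\boldsymbol{\epsilon}$ has all Gaussian moments. If that extra assumption is unwanted, I would fall back on a Gaussian smoothing trick for $\sigma_s>0$. Conditioning on $\mathbf{x}_0$, $\phi(s)=\mathbb E_{\mathbf{x}_0}\int\psi_\theta(y,s)\,\mathcal N(y;\alpha_s\mathbf{x}_0,\sigma_s^2I)\,dy$. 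The $s$-derivatives can then fall on the Gaussian density, which yields polynomial weights in $(y-\alpha_s\mathbf{x}_0)/\sigma_s$, and linear growth of $\psi_\theta$ plus finite second moments suffice, at least on compact subintervals where $\sigma_s$ is bounded below. Continuity of the second derivative then follows from the same domination together with continuity of the integrand in $s$.

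Uniqueness I will handle by noting that any decomposition with $\mathbb E[\eta_t\mid s_t]=0$ forces $\phi(s_t)=\mathbb E[\psi_t]$ at every grid point. Uniqueness of $\phi$ as a function on $[0,1]$ then holds in the sense that the canonical choice is the mean function $s\mapsto\mathbb E\psi_\theta(\mathbf{x}_s,s)$, which is identified by requiring the zero-mean condition at every $s\in[0,1]$ rather than only on the grid. I will state this explicitly, since otherwise $\phi$ is only pinned down at the grid points.

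For the invariance claim, I will use \cref{thm:unified-param}: any two parameterizations are related by $\psi'=c_s\mathbf{x}_s+d_s\psi$ with $C^2$ coefficients determined by $(\alpha_s,\sigma_s,u_s,v_s)$. Taking expectations gives $\phi'(s)=c_s(\alpha_s\mathbb E\mathbf{x}_0)+d_s\phi(s)$, which is again $C^2$, and the new noise term is $\eta'_t=c_{s_t}(\mathbf{x}_{s_t}-\mathbb E\mathbf{x}_{s_t})+d_{s_t}\eta_t$. This term is zero mean with bounded second moment, so the decomposition transfers across all the listed parameterizations.
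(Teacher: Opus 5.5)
Your proposal is correct and is essentially the paper's proof: the paper also sets $\phi(s)=\mathbb E[\psi_\theta(\alpha_s\mathbf{x}_0+\sigma_s\boldsymbol{\epsilon},s)]$, phrased as the $L^2$ projection $\mathbb E[\psi_t\mid s_t]$ (and, as you rightly observe, the uniqueness of that projection only fixes $\phi$ at the grid points unless the zero-mean condition is required for every $s$); it then obtains $\phi\in C^2$ by differentiating twice under the expectation via dominated convergence, and bounds $\mathbb E\|\eta_t\|^2$ with the same triangle-inequality estimate. The growth condition on the derivatives of $\psi_\theta$ that you flag is exactly what the paper uses without stating it (``by the polynomial growth assumption''), so your main route is as complete as the paper's; your Gaussian-smoothing fallback would not actually remove that condition, though, because the explicit $s$-dependence of $\psi_\theta(y,s)$ still yields $\partial_s\psi_\theta,\partial_s^2\psi_\theta$ terms needing growth control, and two $s$-derivatives on the density produce weights quadratic in $\mathbf{x}_0$, which against a linearly growing $\psi_\theta$ require $\mathbb E\|\mathbf{x}_0\|^3<\infty$ rather than second moments.
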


\begin{proof}
Let 
$$\mathcal G:=\{\varphi(s_t):\ \varphi:[0,1]\to\mathbb R^d,\ \mathbb E\|\varphi(s_t)\|^2<\infty\}\subset L^2(\Omega;\mathbb R^d)
$$
It is a closed subspace. The orthogonal projection of $\psi_t$ onto $\mathcal G$ is $\phi(s_t):=\mathbb E[\psi_t\mid s_t]$, and $\eta_t:=\psi_t-\phi(s_t)$ satisfies $\mathbb E[\eta_t\mid s_t]=0$. 

Uniqueness follows from the uniqueness of Hilbert projections. 

Write \(F(s,\mathbf{x}_0,\boldsymbol{\epsilon}):=\psi_\theta(\alpha_s\mathbf{x}_0+\sigma_s\boldsymbol{\epsilon},s)\),
so \(\phi(s)=\mathbb E[F(s,\mathbf{x}_0,\boldsymbol{\epsilon})]\).
Since \(\psi_\theta\in C^2\) and \(\alpha_s,\sigma_s\in C^2\), chain rule yields
\[
\partial_s F
=\partial_s\psi_\theta(\mathbf{x}_s,s)+\nabla_x\psi_\theta(\mathbf{x}_s,s)\,(\alpha'_s\mathbf{x}_0+\sigma'_s\boldsymbol{\epsilon}),
\]
\[
\partial_s^2 F
=\partial_s^2\psi_\theta
+2\,\partial_s\nabla_x\psi_\theta\,(\alpha'_s\mathbf{x}_0+\sigma'_s\boldsymbol{\epsilon})
+\nabla_x\psi_\theta\,(\alpha''_s\mathbf{x}_0+\sigma''_s\boldsymbol{\epsilon})
+(\alpha'_s\mathbf{x}_0+\sigma'_s\boldsymbol{\epsilon})^\top\nabla_x^2\psi_\theta\,(\alpha'_s\mathbf{x}_0+\sigma'_s\boldsymbol{\epsilon}),
\]
all evaluated at \((\mathbf{x}_s,s)\).
By the polynomial growth assumption and \(\mathbf{x}_0,\boldsymbol{\epsilon}\in L^2\),
\(\partial_s F\) and \(\partial_s^2 F\) are dominated by integrable envelopes.
Thus, by dominated convergence (interchange of limit and expectation),
\[
\phi'(s)=\mathbb E[\partial_s F],\qquad \phi''(s)=\mathbb E[\partial_s^2 F],
\]
so \(\phi\in C^2([0,1])\).

Bounded variance follows from $\mathbb E\|\eta_t\|^2\le 2\sup_s\mathbb E\|\psi_\theta(\mathbf \mathbf{x_s},s)\|^2+2\sup_s\|\phi(s)\|^2<\infty$.
\end{proof}

\medskip
Theorem~\ref{thm:decomp} justifies the local model $\psi_u=\phi(u)+\eta_u$ with a smooth deterministic trend $\phi$ and a zero-mean perturbation $\eta_u$. We now study the task of \emph{reconstructing a skipped state} $\psi_{t-\Delta}$ from the two most recent computed states $\{\psi_t,\psi_{t+\Delta}\}$.

\begin{theorem}[Second-order backward extrapolation is BLUE and second-order accurate]\label{thm:blue}
Assume the decomposition in Theorem~\ref{thm:decomp} and a locally uniform grid $s_{t\pm1}=s_t\pm\Delta$. Consider linear estimators $\widehat\psi_{t-\Delta}=a\,\psi_t+b\,\psi_{t+\Delta}$ that are unbiased for all affine trends $\phi(u)=\beta_0+\beta_1 u$. Then $a=2$, $b=-1$ is the unique unbiased choice, and under homoscedastic uncorrelated perturbations $\mathrm{Var}([\eta_t,\eta_{t+\Delta}]^\top)=\sigma^2 I_2$ it minimizes the variance among all unbiased linear estimators. Moreover,
\[
\mathbb{E}[\psi_{t-\Delta}-(2\psi_t-\psi_{t+\Delta})]=\Delta^2\,\phi''(s_t)+o(\Delta^2),
\]
so the bias is $O(\Delta^2)$ for $\phi\in C^2$.
\end{theorem}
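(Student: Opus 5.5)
The plan is to reduce everything to deterministic statements about the trend $\phi$ from Theorem~\ref{thm:decomp}. Writing $\psi_u=\phi(u)+\eta_u$ with $\mathbb E[\eta_u]=0$, the expectation of any linear estimator is
\[
\mathbb E[a\psi_t+b\psi_{t+\Delta}]=a\,\phi(s_t)+b\,\phi(s_t+\Delta).
\]
Its random part is $a\eta_t+b\eta_{t+\Delta}$. So unbiasedness and bias are questions about $\phi$ alone, and variance is a question about the perturbations alone.

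\textbf{Unbiasedness.} I will plug in an arbitrary affine trend $\phi(u)=\beta_0+\beta_1u$ and require
\[
a\,\phi(s_t)+b\,\phi(s_t+\Delta)=\phi(s_t-\Delta)\quad\text{for all }(\beta_0,\beta_1).
\]
Matching the coefficient of $\beta_0$ gives $a+b=1$. Matching the coefficient of $\beta_1$ gives $a s_t+b(s_t+\Delta)=s_t-\Delta$, which simplifies to $(a+b)s_t+b\Delta=s_t-\Delta$. Together these force $b=-1$ and $a=2$, and this solution is unique because $\Delta\neq0$.

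\textbf{Variance.} Under $\mathrm{Var}([\eta_t,\eta_{t+\Delta}]^\top)=\sigma^2 I_2$, the variance of a linear estimator is $(a^2+b^2)\sigma^2$. Since the unbiased class is the single point $(2,-1)$, minimality holds trivially, with value $5\sigma^2$. This matches the ``$\le 5\sigma^2$'' entry of Table~\ref{tab:bias-var}. I will remark that the BLUE claim is therefore degenerate: it is really the uniqueness statement, and no Gauss--Markov machinery is needed.

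\textbf{Bias for a general $C^2$ trend.} Using zero-mean noise, the bias is the negative of the backward second difference of the trend:
\[
\mathbb E\bigl[\psi_{t-\Delta}-(2\psi_t-\psi_{t+\Delta})\bigr]=\phi(s_t-\Delta)-2\phi(s_t)+\phi(s_t+\Delta).
\]
I will expand $\phi(s_t\pm\Delta)$ by Taylor's theorem with integral (or Peano) remainder to second order. The $\phi(s_t)$ terms cancel, the first-order terms $\pm\Delta\phi'(s_t)$ cancel, and the quadratic terms sum to $\Delta^2\phi''(s_t)$. The remainders are $o(\Delta^2)$ by continuity of $\phi''$. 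Equivalently, the second difference equals $\Delta^2\int_{-1}^{1}(1-|\tau|)\,\phi''(s_t+\tau\Delta)\,d\tau$, and uniform continuity of $\phi''$ on $[0,1]$ gives $\Delta^2\phi''(s_t)+o(\Delta^2)$ uniformly in $t$. Since this is vector-valued, I will apply the expansion componentwise.

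The only step needing care is this last one. Since $\phi$ is merely $C^2$, I must use the remainder form rather than a third-order expansion, and invoke the uniform continuity of $\phi''$ on the compact interval, as supplied by Theorem~\ref{thm:decomp}. I also need $s_t\pm\Delta\in[0,1]$, which the locally uniform grid assumption provides.
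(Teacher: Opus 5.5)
Your proposal is correct and follows essentially the same route as the paper: the same coefficient matching for unbiasedness (the paper writes it as solving $X^\top w=c$), and the same second-order Taylor expansion of $\phi(s_t\pm\Delta)$ for the bias. Your remark that variance-optimality is automatic, because the unbiased class is the single point $(2,-1)$, is accurate: since $X$ is square and invertible, the paper's Gauss--Markov formula $X(X^\top X)^{-1}c$ simply reduces to $X^{-\top}c$.
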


\begin{proof}
Write $y_t:=\psi_t$, $y_{t+\Delta}:=\psi_{t+\Delta}$ and stack $y=\begin{bmatrix}y_t\\ y_{t+\Delta}\end{bmatrix}$. Under an affine trend, $y=X\beta+\eta$ with
\[
X=\begin{bmatrix} 1 & s_t\\ 1 & s_t+\Delta\end{bmatrix},\quad
\beta=\begin{bmatrix}\beta_0\\ \beta_1\end{bmatrix},\quad
\eta=\begin{bmatrix}\eta_t\\ \eta_{t+\Delta}\end{bmatrix}.
\]
The target is $\theta:=\phi(s_t-\Delta)=c^\top\beta$ with $c=\begin{bmatrix}1\\ s_t-\Delta\end{bmatrix}$. A linear estimator $w^\top y$ is unbiased for all affine $\phi$ iff $w^\top X=c^\top$, i.e., $X^\top w=c$. Solving yields $w=(2,-1)^\top$ and hence $\widehat\psi_{t-\Delta}=2\psi_t-\psi_{t+\Delta}$.

For variance optimality with $\mathrm{Var}(\eta)=\sigma^2 I_2$, Gauss–Markov gives
\[
w^\star=\arg\min_{w:\,X^\top w=c}\ \mathrm{Var}(w^\top y)=\arg\min \sigma^2\|w\|_2^2\quad\Rightarrow\quad
w^\star=X(X^\top X)^{-1}c=(2,-1)^\top.
\]
Thus $2\psi_t-\psi_{t+\Delta}$ is the BLUE.

For bias, expand $\phi$ at $s_t$:
\[
\phi(s_t\pm\Delta)=\phi(s_t)\pm\Delta\,\phi'(s_t)+\tfrac{\Delta^2}{2}\phi''(s_t)+o(\Delta^2).
\]
Hence
\[
\phi(s_t-\Delta)-\bigl(2\phi(s_t)-\phi(s_t+\Delta)\bigr)=\Delta^2\,\phi''(s_t)+o(\Delta^2).
\]
Adding the zero-mean perturbations on both sides preserves the expansion in mean, proving the stated local truncation error.
\end{proof}

\begin{remark}[On conditioning in Theorem~A.4: $\phi(s)$ is a population-level trend]
Let $\mathbf{x_s}=\alpha_s \mathbf{x_0}+\sigma_s\boldsymbol{\epsilon}$ with $\mathbf{x_0}\sim p_{\rm data}$, $\boldsymbol{\epsilon}\sim\mathcal N(0,I)$, and assume $s$ is independent of $(\mathbf{x_0},\boldsymbol{\epsilon})$. 
For a target $\psi_t=\psi_0(\mathbf{x_0},\boldsymbol{\epsilon},s_t)$, Theorem~A.4 defines the signal part by
\[
\phi(s)\ :=\ \mathbb E[\psi_t\,|\,s_t=s].
\]
Importantly, the conditioning is \emph{only} on the time index $s$ (not on the realization $\mathbf{x_s}$), hence $\phi$ is a deterministic function of $s$.
By the tower property,
\[
\phi(s)
= \mathbb E\!\big[\psi_0(\mathbf{\mathbf{x_0}},\boldsymbol{\epsilon},s)\,\big|\,s\big]
= \mathbb E\!\big[\,\mathbb E\!\big[\psi_0(\mathbf{x_0},\boldsymbol{\epsilon},s)\,\big|\,\mathbf{x_s},s\big]\,\big|\,s\big],
\]
so $\phi(s)$ is obtained by first taking the posterior mean given $(\mathbf{x_s},s)$ and then marginalizing over $\mathbf{x_s}\sim q_s$.
This is fundamentally different from \emph{posterior reconstruction} (e.g., $\hat{ \mathbf{x}}_0(\mathbf{x_s},s):=\mathbb E[\mathbf{x_0}\,|\,\mathbf{x_s},s]$), which depends on the particular observation $\mathbf{x_s}$.

For the usual parameterizations we get closed forms. 
We use that $\mathbb E[\boldsymbol{\epsilon}\,|\,s]=\mathbb E[\boldsymbol{\epsilon}]=0$ and $\mathbb E[\mathbf{x_0}\,|\,s]=\mathbb E[\mathbf{x_0}]$ by independence.
\begin{itemize}
\item \textbf{$\epsilon$-prediction:} $\psi_0(\mathbf{x_0},\boldsymbol{\epsilon},s)=\boldsymbol{\epsilon}$. Then $\phi(s)=\mathbb E[\boldsymbol{\epsilon}\,|\,s]=0$.
\item \textbf{$\mathbf{x_0}$-prediction:} $\psi_0(\mathbf{x_0},\boldsymbol{\epsilon},s)=\mathbf{x_0}$. Then $\phi(s)=\mathbb E[\mathbf{x_0}\,|\,s]=\mathbb E[\mathbf{x_0}]$.
\item \textbf{$v$-prediction:} $\psi_0(\mathbf{x_0},\boldsymbol{\epsilon},s)=\alpha_s\boldsymbol{\epsilon}-\sigma_s \mathbf{x_0}$. Hence 
$\phi(s)=\alpha_s\mathbb E[\boldsymbol{\epsilon}\,|\,s]-\sigma_s\mathbb E[\mathbf{x_0}\,|\,s]=-\sigma_s\,\mathbb E[\mathbf{x_0}]$.
\item \textbf{Score-prediction:} $\psi_0(\mathbf{x_0},\boldsymbol{\epsilon},s)=\nabla_{\mathbf{x_s}}\log q_s(\mathbf{x_s})$, so
$\phi(s)=\mathbb E_{\mathbf{x_s}\sim q_s}[\nabla\log q_s(\mathbf{x_s})]=0$.
\item \textbf{Flow-prediction:} $\psi_0(\mathbf{x_0},\boldsymbol{\epsilon},s)=\frac{d}{ds}\mathbf{x_s}=\alpha_s' \mathbf{x_0}+\sigma_s'\boldsymbol{\epsilon}$, so
$\phi(s)=\alpha_s'\mathbb E[\mathbf{x_0}\,|\,s]+\sigma_s'\mathbb E[\boldsymbol{\epsilon}\,|\,s]=\alpha_s'\mathbb E[\mathbf{x_0}]$.
\end{itemize}

Equivalently, whenever a parameterization is an affine readout 
$\psi_0(\mathbf{x_0},\boldsymbol{\epsilon},s)=a(s)\,\boldsymbol{\epsilon}+b(s)\,\mathbf{x_0}+c(s)$, we have the general identity
\[
\phi(s)=b(s)\,\mathbb E[\mathbf{x_0}]+c(s),
\]
since $\mathbb E[\boldsymbol{\epsilon}]=0$ and $s\!\perp\!(\mathbf{x_0},\boldsymbol{\epsilon})$.

\medskip
\begin{table}[t]
\centering
\caption{Summary of different parameterizations.}
\label{tab:parameterization}
\begin{tabular}{lc}
\hline
Parameterization & $\phi(s) = \mathbb{E}[\psi_t \mid s_t=s]$ \\
\hline
$\epsilon$-prediction   & $0$ \\
$\mathbf{x_0}$-prediction        & $\mathbb{E}[\mathbf{x_0}]$ \\
$v$-prediction          & $-\,\sigma_s\,\mathbb{E}[\mathbf{x_0}]$ \\
Score-prediction        & $0$ \\
Flow-prediction         & $\alpha_s'\,\mathbb{E}[\mathbf{x_0}]$ \\
\hline
\end{tabular}
\end{table}

Thus assuming that $\phi(s)$ is an afine trends is reasonable.
\end{remark}

\begin{proposition}[No two-point linear estimator beats the $O(\Delta^2)$ order]\label{prop:order}
Let $\widehat\psi_{t-\Delta}=a\,\psi_t+b\,\psi_{t+\Delta}$ be any estimator that is unbiased for all affine $\phi$. Then for every such choice,
\[
\psi_{t-\Delta}-\widehat\psi_{t-\Delta}=K(a,b)\,\Delta^2\,\phi''(s_t)+o(\Delta^2)
\quad\text{with }K(a,b)\neq 0,
\]
so the order $O(\Delta^2)$ cannot be improved using only $\{\psi_t,\psi_{t+\Delta}\}$.
\end{proposition}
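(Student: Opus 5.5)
The plan is to reduce the proposition to the uniqueness part of \cref{thm:blue} and then read off the constant $K$ from the Taylor expansion already carried out there. As in \cref{thm:blue}, the error identity is understood in expectation, i.e.\ for the trend component $\phi$, since the perturbations $\eta_t,\eta_{t+\Delta},\eta_{t-\Delta}$ are zero-mean by \cref{thm:decomp}.

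\textbf{Step 1: determine the admissible estimators.} Unbiasedness for all affine $\phi(u)=\beta_0+\beta_1u$ is the linear system $a+b=1$ (from constants) and $a s_t+b(s_t+\Delta)=s_t-\Delta$ (from the linear term). Subtracting $s_t$ times the first equation from the second gives $b\Delta=-\Delta$. Hence $b=-1$ and $a=2$, exactly the unique solution of $X^\top w=c$ found in the proof of \cref{thm:blue}. So the quantifier ``for every such choice'' ranges over the single pair $(a,b)=(2,-1)$.

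\textbf{Step 2: compute $K$.} I would expand $\phi(s_t\pm\Delta)=\phi(s_t)\pm\Delta\phi'(s_t)+\tfrac{\Delta^2}{2}\phi''(s_t)+o(\Delta^2)$, which is valid since $\phi\in C^2$ by \cref{thm:decomp}. Substituting gives
\[
\phi(s_t-\Delta)-a\phi(s_t)-b\phi(s_t+\Delta)=(1-a-b)\phi(s_t)-(1+b)\Delta\phi'(s_t)+\tfrac{1-b}{2}\Delta^2\phi''(s_t)+o(\Delta^2).
\]
Writing it with general $(a,b)$ makes clear why the constraints of Step~1 kill the zeroth- and first-order terms. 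With $(a,b)=(2,-1)$ this leaves $K(2,-1)=1\neq 0$, matching the bias in \cref{thm:blue}.

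\textbf{Step 3: sharpness.} The point needing care is what ``cannot be improved'' means, since if $\phi''(s_t)=0$ the error may be $o(\Delta^2)$. I would state the conclusion as follows: for any trend with $\phi''(s_t)\neq0$, for instance $\phi(u)=u^2 e$ for a fixed nonzero vector $e$, the bias equals $\Delta^2\phi''(s_t)(1+o(1))$, which is exactly of order $\Delta^2$. So no admissible two-point rule attains $o(\Delta^2)$ uniformly over $C^2$ trends.

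I would also remark on the case where affine unbiasedness is dropped. Then the display in Step~2 shows that either $1-a-b\neq0$, giving an $O(1)$ bias, or $1+b\neq0$, giving an $\Omega(\Delta)$ bias whenever $\phi'(s_t)\neq0$. Thus no two-point linear rule does better than $O(\Delta^2)$ at all.

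The main obstacle is this interpretive point in Step~3: $K$ must be a nonzero constant independent of $\phi$, while the $\Delta^2$ rate is only exhibited on trends with nonvanishing curvature. The algebra itself is routine.
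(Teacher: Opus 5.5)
Your proposal is correct and follows essentially the same route as the paper. The two moment constraints $a+b=1$ and $as_t+b(s_t+\Delta)=s_t-\Delta$ force $(a,b)=(2,-1)$, a second-order Taylor expansion gives $K(2,-1)=1$, and your general-$(a,b)$ display does the job of the paper's brief Peano-kernel remark on relaxing unbiasedness; your explicit reading of the identity in expectation (for the trend $\phi$) and the example $\phi(u)=u^2e$ usefully make precise points the paper leaves implicit.
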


\begin{proof}
Unbiasedness for all affine $\phi$ enforces the constraints $a+b=1$ and $a s_t+b(s_t+\Delta)=s_t-\Delta$, which have the unique solution $a=2$, $b=-1$. For any $C^2$ trend, Taylor's theorem with remainder gives the error coefficient $K(2,-1)=1$. If one relaxed unbiasedness, matching constants and linears is still necessary to avoid $O(1)$ or $O(\Delta)$ bias uniformly in $\phi$; with only two samples, the highest degree one can reproduce is $1$, hence the Peano kernel argument yields an $O(\Delta^2)$ remainder with a nonzero coefficient for some $\phi''$.
\end{proof}

\begin{corollary}[Curvature observability]\label{cor:curv}
The second difference $\Delta^{(2)}\psi_t:=\psi_{t+\Delta}-2\psi_t+\psi_{t-\Delta}$ cancels any affine trend and isolates curvature:
\[
\Delta^{(2)}\phi(s_t)=\Delta^2\,\phi''(s_t)+o(\Delta^2).
\]
Thus the BLUE extrapolator $2\psi_t-\psi_{t+\Delta}$ explicitly exploits $\Delta^{(2)}$ to reconstruct the skipped state $\psi_{t-\Delta}$ while being insensitive to large affine drifts in $\phi$.
\end{corollary}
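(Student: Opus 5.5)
The plan is a direct Taylor computation. First I check that the second difference annihilates affine trends, which is what the statement means by "cancels any affine trend". For $\phi(u)=\beta_0+\beta_1 u$ we have
\[
\phi(s_t+\Delta)-2\phi(s_t)+\phi(s_t-\Delta)=\beta_0(1-2+1)+\beta_1\bigl((s_t+\Delta)-2s_t+(s_t-\Delta)\bigr)=0 .
\]
The same holds coordinatewise for vector-valued $\phi$. Consequently $\mathbb E[\Delta^{(2)}\psi_t]$ does not depend on the drift coefficients $(\beta_0,\beta_1)$, by linearity of expectation and the decomposition of Theorem~\ref{thm:decomp}.

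Next comes the curvature expansion. Since $\phi\in C^2([0,1];\mathbb R^d)$ by Theorem~\ref{thm:decomp}, I would reuse the expansion already written in the proof of Theorem~\ref{thm:blue}:
\[
\phi(s_t\pm\Delta)=\phi(s_t)\pm\Delta\,\phi'(s_t)+\tfrac{\Delta^2}{2}\phi''(s_t)+o(\Delta^2).
\]
Adding the two expansions and subtracting $2\phi(s_t)$ cancels the constant and linear terms. The two quadratic halves combine to give $\Delta^2\phi''(s_t)+o(\Delta^2)$, which is the claimed identity.

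To make the $o(\Delta^2)$ rigorous rather than formal, I would use the integral (Peano) form of the remainder:
\[
\Delta^{(2)}\phi(s_t)=\int_{-\Delta}^{\Delta}(\Delta-|u|)\,\phi''(s_t+u)\,du .
\]
The kernel $\Delta-|u|$ is nonnegative and has total mass $\Delta^2$. Writing $\phi''(s_t+u)=\phi''(s_t)+\bigl(\phi''(s_t+u)-\phi''(s_t)\bigr)$, the first piece integrates to exactly $\Delta^2\phi''(s_t)$. The second piece is bounded by $\Delta^2\,\omega(\Delta)$, where $\omega$ is the modulus of continuity of $\phi''$. Because $[0,1]$ is compact, $\phi''$ is uniformly continuous, so $\omega(\Delta)\to0$. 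This bound is uniform in $s_t$, with the caveat that $s_t\pm\Delta$ must lie in $[0,1]$, which is guaranteed by the locally uniform grid assumption. I expect this uniformity bookkeeping, and keeping $s_t\pm\Delta$ inside the domain, to be the only delicate point. I will not attempt anything beyond $C^2$.

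Finally I interpret the result against the extrapolator. The key identity is
\[
\psi_{t-\Delta}-(2\psi_t-\psi_{t+\Delta})=\Delta^{(2)}\psi_t .
\]
So the extrapolation residual is exactly the second difference, and its expectation is $\Delta^{(2)}\phi(s_t)=\Delta^2\phi''(s_t)+o(\Delta^2)$, because the $\eta$ terms have zero mean. This recovers the bias formula of Theorem~\ref{thm:blue}. It shows that $2\psi_t-\psi_{t+\Delta}$ equals $\psi_{t-\Delta}$ minus the curvature term, so its error involves only $\phi''$, independent of any affine drift.
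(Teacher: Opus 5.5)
Your proposal is correct and follows essentially the paper's own (implicit) argument. The paper gives no separate proof; the corollary is read off from the second-order Taylor expansion in the proof of Theorem~\ref{thm:blue}, combined with the identity $\psi_{t-\Delta}-(2\psi_t-\psi_{t+\Delta})=\Delta^{(2)}\psi_t$. Your integral-remainder bound via the modulus of continuity of $\phi''$ makes the $o(\Delta^2)$ term rigorous and uniform in $s_t$, and unlike the Lagrange mean-value form used in Theorem~\ref{thm:achievability}, it remains valid for $\mathbb{R}^d$-valued $\phi$.
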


\begin{remark}[Correlated perturbations and generalized least squares]
If $\mathrm{Var}(\eta)=\Sigma\succ0$ (not necessarily diagonal), the BLUE weights become
\[
(w^\star)^\top=c^\top\bigl(X^\top\Sigma^{-1}X\bigr)^{-1}X^\top\Sigma^{-1}.
\]
When $\Sigma=\sigma^2 I_2$ this reduces to $(2,-1)$.
\end{remark}

\textbf{Takeaway.}
Under the mild, parameterization-invariant decomposition of Theorem~\ref{thm:decomp}, the backward second-order rule
\[
\widehat\psi_{t-\Delta}=2\psi_t-\psi_{t+\Delta}
\]
is simultaneously (i) unbiased for all affine trends, (ii) variance-optimal among all linear unbiased two-point estimators, (iii) second-order accurate with bias $O(\Delta^2)$, and (iv) curvature-aware through $\Delta^{(2)}$. None of these guarantees is achievable with zeroth- or first-order reuse from $\{\psi_t,\psi_{t+\Delta}\}$ alone.

From the next section onward, we omit the perturbation terms~$\eta$ and focus solely on the underlying trend $\phi$, as the $(2,-1)$ rule has already been shown to be the unique BLUE. Any additive zero-mean noise merely inflates the estimation variance but cannot reduce the inherent bias floor.

\subsubsection{Why Second-Order is Sufficient}
\label{sec:second-order-sufficient}

\paragraph{Setup.}
Let the unknown target function $\phi:[0,1]\to\mathbb R^d$ belong to the class
\[
\mathcal F(M_2)\;=\;\Big\{\phi\in C^2([0,1];\mathbb R^d):\ \sup_{s\in[0,1]}\|\phi''(s)\|\le M_2\Big\}.
\]
At discrete step $t$, we only have access to the model evaluations
\[
\psi_t=\phi(s_t),\qquad \psi_{t+\Delta}=\phi(s_{t+\Delta}),
\]
with $s_{t\pm \Delta}=s_t\pm \Delta$.
The goal is to estimate $\phi(s_{t-\Delta})$ based on these observations.
Denote a generic estimator by
\[
\widehat\psi_{t-\Delta}\;=\;\mathsf{Alg}\big(\psi_t,\psi_{t+\Delta}\big).
\]

We show that under only $C^2$ regularity, every estimator incurs worst-case bias of order $\Delta^2$, while the standard second-order extrapolation
\begin{equation}\label{eq:backward-extrap}
\widehat\psi_{t-\Delta}^{(2)} \;=\; 2\,\psi_t - \psi_{t+\Delta}
\end{equation}
achieves this rate. Thus, second-order is minimax optimal.

\paragraph{Lower Bound via Two-Point Method.}

\begin{theorem}[Two-point lower bound]\label{thm:two-point-lb}
For any estimator $\widehat\psi_{t-\Delta}$ depending only on $\{\psi_t,\psi_{t+\Delta}\}$, there exist $\phi_\pm\in\mathcal F(M_2)$ such that
\[
\phi_\pm(s_t)=\phi_\pm(s_{t+\Delta})=0,\qquad 
\big\|\phi_+(s_{t-\Delta})-\phi_-(s_{t-\Delta})\big\|\;\ge\;c\,M_2\,\Delta^2,
\]
for some absolute constant $c>0$. Consequently,
\[
\inf_{\widehat\psi_{t-\Delta}}\ \sup_{\phi\in\mathcal F(M_2)}
\big\|\widehat\psi_{t-\Delta}-\phi(s_{t-\Delta})\big\|
\ =\ \Omega(M_2\Delta^2).
\]
\end{theorem}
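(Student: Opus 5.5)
We use Le Cam's two-point method with an explicit quadratic bump. Take a scalar profile and embed it along a fixed unit vector $e\in\mathbb R^d$. Set
\[
q(u):=(u-s_t)(u-s_t-\Delta),
\]
which vanishes at both observation points $s_t$ and $s_{t+\Delta}$. Its second derivative is $q''\equiv 2$, and $q(s_t-\Delta)=(-\Delta)(-2\Delta)=2\Delta^2$. Define
\[
\phi_\pm(u):=\pm\tfrac{M_2}{2}\,q(u)\,e.
\]
Then $\phi_\pm\in C^2([0,1];\mathbb R^d)$ and $\|\phi_\pm''\|=M_2$, so $\phi_\pm\in\mathcal F(M_2)$. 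We will assume the three grid points lie in $[0,1]$, which is implicit in the setup.

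We then check the two required properties. First, $\phi_\pm(s_t)=\phi_\pm(s_{t+\Delta})=0$, so both hypotheses produce identical observations $(\psi_t,\psi_{t+\Delta})=(0,0)$. Second, $\phi_\pm(s_{t-\Delta})=\pm M_2\Delta^2 e$, hence
\[
\|\phi_+(s_{t-\Delta})-\phi_-(s_{t-\Delta})\|=2M_2\Delta^2 .
\]
This gives the first claim with $c=2$.

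For the minimax consequence, fix any estimator $\widehat\psi_{t-\Delta}=\mathsf{Alg}(\psi_t,\psi_{t+\Delta})$. We allow arbitrary measurable maps, not just linear ones. On both hypotheses it outputs the same vector $v:=\mathsf{Alg}(0,0)$. By the triangle inequality,
\[
\|v-\phi_+(s_{t-\Delta})\|+\|v-\phi_-(s_{t-\Delta})\|\ \ge\ 2M_2\Delta^2 .
\]
So at least one of the two errors is at least $M_2\Delta^2$. Taking the supremum over $\mathcal F(M_2)$ and then the infimum over estimators gives the $\Omega(M_2\Delta^2)$ bound.

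There is no real obstacle here. The only care needed is bookkeeping. In this subsection the observations are noiseless (the perturbation $\eta$ has been dropped, as stated before the setup), so "same observations" means exactly equal inputs to $\mathsf{Alg}$. The bump should be chosen globally quadratic rather than localized, so that its $C^2$ norm is exactly $M_2$ with no cutoff constants. As an optional remark, one can pair this with the upper bound from Theorem~\ref{thm:blue}/Taylor's theorem, namely $\|\phi(s_{t-\Delta})-(2\phi(s_t)-\phi(s_{t+\Delta}))\|\le M_2\Delta^2$. This shows the rule \eqref{eq:backward-extrap} attains the lower bound up to a constant, and in fact exactly: on the bump the rule outputs $0$, which is precisely the midpoint $v$ of the two hypotheses.
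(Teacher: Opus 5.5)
Your proposal is correct and is essentially the paper's proof: your bump $\tfrac{M_2}{2}(u-s_t)(u-s_t-\Delta)$ is exactly the paper's $g(s)=M_2\Delta^2\,p\bigl((s-s_t)/\Delta\bigr)$ with $p(u)=\tfrac12 u(u-1)$. It yields the same separation $2M_2\Delta^2$ at $s_{t-\Delta}$ and is followed by the same Le Cam two-point argument. Your explicit triangle-inequality version of the Le Cam step and the embedding along a unit vector $e$ (which makes $\phi_\pm$ genuinely $\mathbb R^d$-valued) are only cleaner bookkeeping; note that the explicit $M_2\Delta^2$ upper bound in your optional remark is \cref{thm:achievability}, not \cref{thm:blue}.
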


\begin{proof}
Let $u=(s-s_t)/\Delta$ and consider the quadratic Lagrange basis polynomial
\[
p(u) = \tfrac12 u(u-1),\qquad p(0)=p(1)=0,\; p(-1)=1,\; p''(u)\equiv 1.
\]
Define $g(s) = M_2\Delta^2\,p\!\big((s-s_t)/\Delta\big)$ and set $\phi_\pm=\pm g$.
Then $\phi_\pm\in\mathcal F(M_2)$, agree at $s_t$ and $s_{t+\Delta}$, but differ by
\[
\|\phi_+(s_{t-\Delta})-\phi_-(s_{t-\Delta})\|
=2M_2\Delta^2.
\]
Since the data are indistinguishable, Le Cam’s two-point method \citep{lecam1973convergence} implies any estimator incurs at least half this separation on one of the two instances, yielding the stated $\Omega(M_2\Delta^2)$ bound.
\end{proof}

\paragraph{Transition.}
Theorem~1 shows that $\Delta^2$ bias is an information-theoretic lower bound. Next we show that the second-order extrapolation \eqref{eq:backward-extrap} matches this rate.

\paragraph{Upper Bound via Second-Order Extrapolation.}
\begin{theorem}[Achievability]\label{thm:achievability}
For any $\phi\in\mathcal F(M_2)$,
\[
\big\|\phi(s_{t-\Delta})-(2\phi(s_t)-\phi(s_{t+\Delta}))\big\|
\ \le\ M_2\,\Delta^2
\ =\ \mathcal O(M_2\Delta^2).
\]
\end{theorem}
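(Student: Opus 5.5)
The bound is a pure Taylor-remainder estimate, so I will prove it with the integral form of Taylor's theorem, which needs only $\phi\in C^2$ and no $o(\Delta^2)$ terms. Write $s:=s_t$ and define the second-difference error
\[
E:=\phi(s-\Delta)-2\phi(s)+\phi(s+\Delta),
\]
which equals $\phi(s_{t-\Delta})-(2\phi(s_t)-\phi(s_{t+\Delta}))$. It is exactly the centered second difference $\Delta^{(2)}\phi(s_t)$ appearing in Corollary~\ref{cor:curv}. I assume, as in the setup, that the stencil $[s-\Delta,s+\Delta]$ lies in $[0,1]$.

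\textbf{Step 1: expand the two outer points around $s$.} With integral remainder,
\[
\phi(s\pm\Delta)=\phi(s)\pm\Delta\phi'(s)+\int_0^{\Delta}(\Delta-\tau)\,\phi''(s\pm\tau)\,d\tau .
\]
This identity holds componentwise in $\mathbb R^d$ and hence as a vector identity.

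\textbf{Step 2: add the expansions.} Summing the two, the $\phi(s)$ terms combine with $-2\phi(s)$ and the first-order terms cancel, leaving
\[
E=\int_0^{\Delta}(\Delta-\tau)\bigl[\phi''(s+\tau)+\phi''(s-\tau)\bigr]\,d\tau .
\]

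\textbf{Step 3: bound the remainder.} Move the norm inside the integral, valid for Bochner or Riemann integrals of continuous vector functions. Using $\sup\|\phi''\|\le M_2$,
\[
\|E\|\le 2M_2\int_0^{\Delta}(\Delta-\tau)\,d\tau=M_2\Delta^2 .
\]
This is precisely the claimed constant.

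\textbf{Main obstacle.} The only delicate point is working with vector-valued $\phi$. A scalar mean-value argument would give $E=\Delta^2\phi''(\xi)$ for a single intermediate point $\xi$, but this fails in $\mathbb R^d$ because different components may need different points $\xi$. The integral-remainder route avoids this, which is why I use it rather than Lagrange's form. An equivalent alternative is to apply the scalar argument to $\langle e,\phi\rangle$ for an arbitrary unit vector $e$ and then take the supremum over $e$.

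Combining the result with Theorem~\ref{thm:two-point-lb} shows that the extrapolation \eqref{eq:backward-extrap} is minimax rate-optimal up to the constant.
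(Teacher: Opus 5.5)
Your proof is correct and follows essentially the same route as the paper: expand $\phi(s_t\pm\Delta)$ to second order about $s_t$, let the first-order terms cancel, and bound each remainder by $M_2\Delta^2/2$. The one difference is that the paper writes the remainders in Lagrange form, $\tfrac12\phi''(s_t+\theta_1\Delta)\Delta^2$ and $\tfrac12\phi''(s_t-\theta_2\Delta)\Delta^2$, which need not hold for $\mathbb{R}^d$-valued $\phi$ with $d>1$, since no single pair $\theta_1,\theta_2$ need work for all components at once; your integral remainder (or your projection onto a unit vector $e$) is the rigorous form of that step and gives the same constant $M_2\Delta^2$.
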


\begin{proof}
By Taylor’s theorem, for some $\theta_1,\theta_2\in(0,1)$,
\[
\begin{aligned}
\phi(s_{t+\Delta})&=\phi(s_t)+\phi'(s_t)\Delta+\tfrac12\,\phi''(s_t+\theta_1\Delta)\Delta^2,\\
\phi(s_{t-\Delta})&=\phi(s_t)-\phi'(s_t)\Delta+\tfrac12\,\phi''(s_t-\theta_2\Delta)\Delta^2.
\end{aligned}
\]
Subtracting yields
\[
\phi(s_{t-\Delta})-\big(2\phi(s_t)-\phi(s_{t+\Delta})\big)
=\tfrac12\big[\phi''(s_t-\theta_2\Delta)+\phi''(s_t+\theta_1\Delta)\big]\Delta^2,
\]
whose norm is bounded by $M_2\Delta^2$.
\end{proof}

\begin{corollary}[Minimax rate]
Combining Theorems 1 and 2, the minimax rate for estimating $\phi(s_{t-\Delta})$ under $C^2$ regularity is $\Delta^2$, achieved by the second-order extrapolation \eqref{eq:backward-extrap}.
\end{corollary}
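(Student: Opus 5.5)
The Corollary (Minimax rate) follows by sandwiching the minimax risk
\[
\mathcal R^\star(\Delta):=\inf_{\widehat\psi_{t-\Delta}}\ \sup_{\phi\in\mathcal F(M_2)}\big\|\widehat\psi_{t-\Delta}-\phi(s_{t-\Delta})\big\|
\]
between the lower bound of Theorem~\ref{thm:two-point-lb} and the upper bound of Theorem~\ref{thm:achievability}. The sandwich shows that $\mathcal R^\star(\Delta)\asymp M_2\Delta^2$, and that the rule \eqref{eq:backward-extrap} is one estimator attaining this order.

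\textbf{Lower bound.} Theorem~\ref{thm:two-point-lb} applies to an arbitrary, possibly nonlinear, map $\mathsf{Alg}$ of $(\psi_t,\psi_{t+\Delta})$. Its two instances $\phi_\pm=\pm g$ both produce the data $(0,0)$, so every estimator returns the same value $v$ on both. The triangle inequality then gives
\[
\max_\pm\|v-\phi_\pm(s_{t-\Delta})\|\ \ge\ \tfrac12\|\phi_+(s_{t-\Delta})-\phi_-(s_{t-\Delta})\|=M_2\Delta^2 .
\]
I would check one membership detail here. Since $g''\equiv M_2$, the functions $\phi_\pm$ lie in $\mathcal F(M_2)$. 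They are defined on all of $[0,1]$ as polynomials, so no extension or boundary issue arises provided $s_{t\pm\Delta}\in[0,1]$. Taking the supremum over $\mathcal F(M_2)$ and then the infimum over estimators gives $\mathcal R^\star(\Delta)\ge M_2\Delta^2$.

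\textbf{Upper bound.} Specialize the infimum to the estimator \eqref{eq:backward-extrap}. Theorem~\ref{thm:achievability} bounds its error by $M_2\Delta^2$ uniformly over $\phi\in\mathcal F(M_2)$, so $\mathcal R^\star(\Delta)\le M_2\Delta^2$.

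Combining the two bounds gives $\mathcal R^\star(\Delta)=\Theta(M_2\Delta^2)$, with the extrapolation \eqref{eq:backward-extrap} achieving it. In fact the two constants coincide, so the rule is exactly minimax, not just rate-optimal.

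The one place needing care is the Taylor step inside Theorem~\ref{thm:achievability}. For vector-valued $\phi$ the mean-value form with $\theta_1,\theta_2$ is not valid in general. I would replace it with the integral remainder,
\[
\phi(s_t\pm\Delta)=\phi(s_t)\pm\Delta\phi'(s_t)+\int_0^{\Delta}(\Delta-h)\,\phi''(s_t\pm h)\,dh .
\]
Bounding each remainder in norm by $M_2\Delta^2/2$ still gives the total bound $M_2\Delta^2$, so the conclusion is unaffected.
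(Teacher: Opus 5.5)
Your proposal is correct and follows the paper's argument: the corollary comes from sandwiching the minimax risk between the two-point lower bound (Theorem~\ref{thm:two-point-lb}) and the achievability bound (Theorem~\ref{thm:achievability}) for the rule $2\psi_t-\psi_{t+\Delta}$. Your two additions are sound refinements within that route. The matching constants $M_2\Delta^2$ do show the rule is exactly minimax, and the integral-remainder form correctly repairs the mean-value Taylor step, which is not valid as written for $\mathbb R^d$-valued $\phi$.
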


\paragraph{Variance and BLUE.}
Suppose further that observations are corrupted by i.i.d.\ noise:
\[
\psi_t=\phi(s_t)+\eta_t,\qquad 
\psi_{t+\Delta}=\phi(s_{t+\Delta})+\eta_{t+\Delta},\qquad 
\mathbb E[\eta_t]=0,\ \operatorname{Var}(\eta_t)=\sigma^2.
\]
Consider linear unbiased estimators $\widehat\psi_{t-\Delta}=a\psi_t+b\psi_{t+\Delta}$.
Unbiasedness for all affine $\phi(s)$ requires
\[
\begin{bmatrix}1&1\\ s_t&s_{t+\Delta}\end{bmatrix}
\begin{bmatrix}a\\ b\end{bmatrix}
=\begin{bmatrix}1\\ s_{t-\Delta}\end{bmatrix},
\]
whose unique solution is $(a,b)=(2,-1)$. The variance is then
\[
\operatorname{Var}(\widehat\psi_{t-\Delta})=(a^2+b^2)\sigma^2=5\sigma^2.
\]
Thus \eqref{eq:backward-extrap} is the unique best linear unbiased estimator (BLUE).

\paragraph{Complexity Perspective.}
Three points uniquely determine a quadratic interpolant, achieving order-$\Delta^2$ bias. Adding more points to fit higher-degree polynomials cannot improve the minimax rate, since functions in $\mathcal F(M_2)$ need not possess bounded higher derivatives. In fact, the Lebesgue constant
\[
\Lambda_n=\max_{s}\sum_{j=0}^n |\ell_j(s)|
\]
of polynomial interpolation typically grows with the number of nodes, degrading stability. Hence ``more points’’ only increase constants without reducing the minimax order. Moreover, since our sampling points are nearly uniform, the resulting polynomial interpolant 
is susceptible to Runge’s phenomenon~\citep{runge1901empirische} (see \cref{fig:runge}), which may further degrade stability (see \cref{thm:lebesgue} for more details).

\begin{figure}[t]
    \centering
    \includegraphics[width=0.8\linewidth]{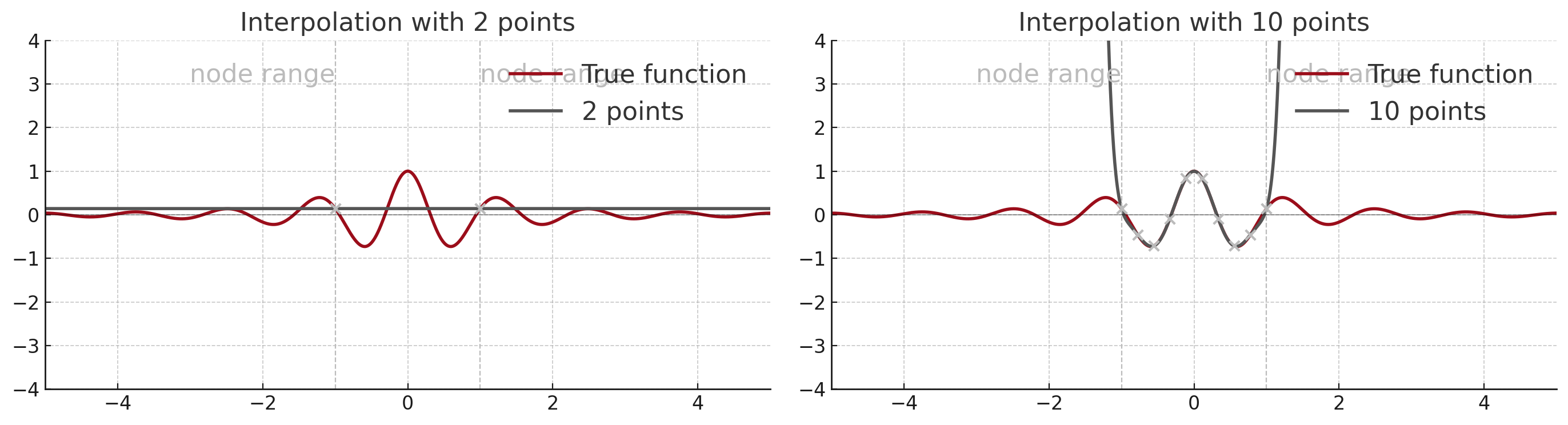}
    \caption{Illustration of the Runge phenomenon.
    Left: polynomial interpolation with only 2 nodes. 
    Right: interpolation with 10 nodes, which exhibits severe oscillations and divergence near the boundary.}
    \label{fig:runge}
\end{figure}

\paragraph{Conclusion.}
Under scarce fresh evaluations and only $C^2$ regularity (the weakest assumption justified by the forward process, cf.\ Appendix~A.1), second-order extrapolation \eqref{eq:backward-extrap} is \emph{information-theoretically optimal}: it matches the $\Delta^2$ minimax lower bound and is BLUE among linear unbiased estimators. Thus it lies on the Pareto frontier \citep{pareto1919manuale} of the bias–variance tradeoff, and additional points cannot improve the minimax rate while often worsening numerical stability.

\subsubsection{Why First-Order Reuse Is Insufficient}
\label{sec:first-order-bad}

We quantify the best possible accuracy if one only reuses a \emph{single} model evaluation (e.g., $\widehat\psi_{t-\Delta}=\psi_t$ or $\widehat\psi_{t-\Delta}=\psi_{t+\Delta}$). Let

$$
\mathcal F_1(M_1)\;=\;\Big\{\phi\in C^1([0,1];\mathrm R^d):\ \sup_{s\in[0,1]}\|\phi'(s)\|\le M_1\Big\}.
$$

Since $\phi\in C^2([0,1])$ on a compact interval implies $\phi'\in C^1$ and hence bounded, the class $\mathcal F_1(M_1)$ is compatible with the $C^2$ setting of \S\ref{sec:second-order-sufficient}.

\paragraph{Information-theoretic lower bound for one-point estimators.}
Consider any estimator that depends on a single point,

$$
\widehat\psi_{t-\Delta} \;=\; \mathsf{Alg}\big(\psi_\tau\big),\qquad \tau\in\{t,\ t{+}\Delta\},
$$

with $\psi_u=\phi(s_u)$ and $s_{t\pm\Delta}=s_t\pm\Delta$.

\begin{theorem}[One-point minimax lower bound]
\label{thm:one-point-lb}
For any measurable estimator $\widehat\psi_{t-\Delta}=\mathsf{Alg}(\psi_\tau)$ using only one of ${\psi_t,\psi_{t+\Delta}}$,

$$
\inf_{\widehat\psi_{t-\Delta}}\ \sup_{\phi\in\mathcal F_1(M_1)}
\big\|\widehat\psi_{t-\Delta}-\phi(s_{t-\Delta})\big\|
\;=\;\Omega(M_1\Delta).
$$

\end{theorem}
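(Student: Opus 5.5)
The plan mirrors the two-point (Le Cam) argument of \cref{thm:two-point-lb}, now with first-order regularity: for each admissible choice of $\tau$ I will exhibit two functions in $\mathcal F_1(M_1)$ that agree at $s_\tau$ but are separated at $s_{t-\Delta}$ by a constant multiple of $M_1\Delta$.

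\textbf{Case $\tau=t$.} Take $\phi_\pm(s)=\pm M_1 (s-s_t)\,e$ for a fixed unit vector $e\in\mathbb R^d$. Both functions are affine, hence $C^1$ (indeed $C^2$ with $\phi''\equiv 0$). Their derivatives satisfy $\|\phi_\pm'\|=M_1$, so $\phi_\pm\in\mathcal F_1(M_1)$. At the observed point they agree, $\phi_\pm(s_t)=0$. At the target point they differ by
\[
\|\phi_+(s_{t-\Delta})-\phi_-(s_{t-\Delta})\|=2M_1\Delta .
\]

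\textbf{Case $\tau=t+\Delta$.} Use $\phi_\pm(s)=\pm M_1(s-s_{t+\Delta})\,e$. These again lie in $\mathcal F_1(M_1)$ and agree at $s_{t+\Delta}$. The separation at $s_{t-\Delta}$ is now $4M_1\Delta$, because the target is $2\Delta$ away from the observed point. If one also wants the construction to respect the $C^2$ class of \S\ref{sec:second-order-sufficient}, nothing changes, since affine functions have zero curvature.

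\textbf{Reduction to the lower bound.} In either case the estimator receives identical input $\psi_\tau$ under $\phi_+$ and $\phi_-$, because the observations are noiseless and equal. Hence it outputs the same value $v$ in both instances. The triangle inequality then gives
\[
\max_\pm\|v-\phi_\pm(s_{t-\Delta})\|\ge \tfrac12\|\phi_+(s_{t-\Delta})-\phi_-(s_{t-\Delta})\|\ge M_1\Delta .
\]
Taking the supremum over $\mathcal F_1(M_1)$ and then the infimum over estimators (and over $\tau$) yields $\Omega(M_1\Delta)$. Measurability of the estimator plays no role in this deterministic argument.

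\textbf{Tightness (optional).} For completeness I will note that reuse $\widehat\psi_{t-\Delta}=\psi_t$ attains this rate. By the mean value inequality, $\|\phi(s_{t-\Delta})-\phi(s_t)\|\le M_1\Delta$, so the rate is tight.

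\textbf{Main obstacle.} The only delicate point is interpretive: the bound holds over the class $\mathcal F_1(M_1)$, which controls only $\phi'$. Against the smaller class $\mathcal F(M_2)$ a single point would still suffer $\Omega(\Delta)$ error, and the affine construction above shows this, since its second derivative vanishes. I will remark on this explicitly, because it is what makes the comparison with the $O(\Delta^2)$ rate of \cref{thm:achievability} meaningful.
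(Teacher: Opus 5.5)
Your proposal is correct and follows the paper's proof essentially verbatim: you use the same affine pair $\phi_\pm(s)=\pm M_1(s-s_t)$, which agrees at the observed point and is separated by $2M_1\Delta$ at the target, and you make the Le Cam step explicit through the triangle inequality; beyond the paper, you also spell out the $\tau=t+\Delta$ case it calls analogous and add the unit vector $e$ that makes $\phi_\pm$ $\mathbb R^d$-valued. One small aside: $\mathcal F(M_2)$ is not a subclass of $\mathcal F_1(M_1)$, since it places no bound on $\phi'$, but your affine pair lies in both classes, so your closing remark stands once ``smaller'' is dropped.
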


\begin{proof}
WLOG take $\tau=t$ (the case $\tau=t{+}\Delta$ is analogous). Define two affine trends

$$
\phi_\pm(s)\;=\;\pm M_1\,(s-s_t).
$$

Then $\phi_\pm\in\mathcal F_1(M_1)$, and they \emph{agree} at $s_t$:
$\phi_+(s_t)=\phi_-(s_t)=0$, hence the observation $\psi_t$ is identical in both cases. But at the target location,

$$
\big\|\phi_+(s_{t-\Delta})-\phi_-(s_{t-\Delta})\big\|
= \big\|{+}M_1(-\Delta)-(-M_1(-\Delta))\big\|=2M_1\Delta.
$$

Since the data are indistinguishable, Le Cam’s two-point method \citep{lecam1973convergence} implies any estimator incurs at least half this separation on one of the two instances, yielding the stated $\Omega(M_1\Delta)$ bound.
\end{proof}

\paragraph{Achievability with naive reuse.}
The trivial reuse $\widehat\psi_{t-\Delta}=\psi_t$ attains this rate.

\begin{proposition}[First-order bias upper bound]
\label{prop:first-order-ub}
For any $\phi\in\mathcal F_1(M_1)$,

$$
\big\|\phi(s_{t-\Delta})-\psi_t\big\|
=\big\|\phi(s_{t-\Delta})-\phi(s_t)\big\|
\ \le\ M_1\,\Delta.
$$

Thus the one-point minimax rate is $\Theta(M_1\Delta)$.
\end{proposition}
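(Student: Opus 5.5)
The bound follows from the mean value theorem (or the fundamental theorem of calculus) applied to $\phi$ on the interval $[s_{t-\Delta},s_t]$. Throughout, I work in the noiseless setting of \S\ref{sec:second-order-sufficient}, so $\psi_t=\phi(s_t)$ exactly, and the displayed equality in Proposition~\ref{prop:first-order-ub} is just this substitution.

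\textbf{Step 1: the upper bound.} Since $\phi\in C^1([0,1];\mathbb R^d)$, I write
\[
\phi(s_{t-\Delta})-\phi(s_t)=-\int_{s_t-\Delta}^{s_t}\phi'(u)\,du .
\]
I use the integral form rather than the scalar mean value theorem because $\phi$ is vector-valued, and the mean value theorem with a single intermediate point can fail in $\mathbb R^d$. Taking norms and applying the triangle inequality for Bochner/Riemann integrals gives
\[
\big\|\phi(s_{t-\Delta})-\phi(s_t)\big\|\le\int_{s_t-\Delta}^{s_t}\|\phi'(u)\|\,du\le M_1\Delta .
\]
This uses $\sup_{s}\|\phi'(s)\|\le M_1$ from the definition of $\mathcal F_1(M_1)$ and assumes $[s_t-\Delta,s_t]\subset[0,1]$. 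That containment holds because both grid points lie in $[0,1]$.

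\textbf{Step 2: the matching rate.} Combining Step 1 with the lower bound $\Omega(M_1\Delta)$ of Theorem~\ref{thm:one-point-lb} gives
\[
M_1\Delta\ \ge\ \inf_{\widehat\psi}\sup_{\phi}\big\|\widehat\psi_{t-\Delta}-\phi(s_{t-\Delta})\big\|\ \ge\ M_1\Delta .
\]
The infimum is bounded above by the worst-case error of the particular estimator $\widehat\psi_{t-\Delta}=\psi_t$. The lower constant follows from the two-point construction $\phi_\pm=\pm M_1(s-s_t)$ with separation $2M_1\Delta$: the two instances produce identical data, so any estimator errs by at least half the separation, namely $M_1\Delta$, on one of them. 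Hence the one-point minimax rate is $\Theta(M_1\Delta)$, and naive reuse is in fact exactly minimax optimal. The analogous argument for $\tau=t+\Delta$ gives the bound $2M_1\Delta$, which is still of order $\Theta(M_1\Delta)$.

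\textbf{Main obstacle.} There is no serious obstacle. The only point needing care is the vector-valued setting, which the integral form of Step 1 handles. It is also worth stating the contrast this yields with Theorem~\ref{thm:achievability}: reuse achieves $\Theta(\Delta)$, while second-order extrapolation achieves $O(\Delta^2)$.
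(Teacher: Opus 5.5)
Your proof is correct and follows the paper's argument: you bound the increment $\phi(s_{t-\Delta})-\phi(s_t)$ by $M_1\Delta$ using the derivative bound, then combine this with the two-point lower bound of Theorem~\ref{thm:one-point-lb} to get the $\Theta(M_1\Delta)$ rate. Your integral form is slightly more careful than the paper's mean value theorem step, since the single-$\theta$ identity $\phi(s_{t-\Delta})-\phi(s_t)=-\Delta\,\phi'(s_t-\theta\Delta)$ holds only componentwise when $d>1$, whereas the norm inequality you derive holds in general.
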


\begin{proof}
By the mean-value theorem,
$\phi(s_{t-\Delta})-\phi(s_t)=-\Delta\,\phi'(s_t-\theta\Delta)$ for some $\theta\in(0,1)$, so the norm is $\le M_1\Delta$.
\end{proof}

\paragraph{Consequences (vs.\ second-order).}
\begin{itemize}
\item \textbf{Bias order:} Any one-point scheme is \emph{at best} first-order accurate (bias $\Theta(\Delta)$), while the two-point second-order extrapolation $2\psi_t-\psi_{t+\Delta}$ is \emph{second-order} (bias $\Theta(\Delta^2)$), cf.\cref{sec:second-order-sufficient}.
\item \textbf{Linear-unbiased restriction:} If we additionally require linear unbiasedness for constants (natural for reuse), $\widehat\psi_{t-\Delta}=a\,\psi_t$ forces $a=1$, so

$$
\phi(s_{t-\Delta})-\widehat\psi_{t-\Delta}
= \phi(s_{t-\Delta})-\phi(s_t)
= -\Delta\,\phi'(s_t)+\tfrac{\Delta^2}{2}\phi''(s_t)+o(\Delta^2),
$$

whose leading term is generically $O(\Delta)$ and cannot be removed without using \emph{two} points to cancel the linear drift.
\end{itemize}

\paragraph{Takeaway.}
Any estimator that “just reuses” a single evaluation is \emph{information-theoretically} limited to an $O(\Delta)$ bias (Theorem~\ref{thm:one-point-lb}), which the trivial reuse $\widehat\psi_{t-\Delta}=\psi_t$ already attains (Proposition~\ref{prop:first-order-ub}). In contrast, the two-point second-order rule cancels the linear drift and reaches the $\Theta(\Delta^2)$ minimax rate under $C^2$ regularity (\cref{thm:two-point-lb} and \cref{thm:achievability}). Hence first-order reuse is \emph{necessarily suboptimal}.

\subsubsection{Why Higher-Order Extrapolation is Detrimental}
\label{sec:higher-order-bad}

We now establish, in a formal manner, why schemes of order $k>2$ are not
advantageous under the $C^2$ regularity available in diffusion ODE
sampling. Despite the apparent flexibility of higher-order stencils,
they suffer from four fundamental drawbacks: exponential noise
amplification, curvature mixing, collapse of stability domain, and
interpolation sensitivity.

\begin{theorem}[Exponential noise amplification]
\label{thm:noise}
Let $w^{(k)}\in\mathrm R^{k+1}$ be the Lagrange weights for extrapolating
$\psi_{-1}$ from $\{\psi_0,\ldots,\psi_k\}$. Then
\[
  \|w^{(k)}\|_2^2 = \binom{2k+2}{k+1}
  \;\sim\; \tfrac{4^{k+1}}{\sqrt{\pi(k+1)}},
\]
and hence the variance of the extrapolate obeys
\(
  \mathrm Var(\widehat\psi^{(k)}_{-1}) = \Omega(4^k\sigma^2).
\)
\end{theorem}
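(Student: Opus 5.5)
The plan is to compute the Lagrange weights in closed form, reduce $\|w^{(k)}\|_2^2$ to a binomial identity, and then use the i.i.d. noise model from the ``Variance and BLUE'' paragraph of \cref{sec:second-order-sufficient}.

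\emph{Step 1: the weights.} Take nodes $u=0,1,\ldots,k$ on the normalized uniform grid, so $s_{t+j\Delta}\leftrightarrow u=j$, and target $u=-1$. The $j$-th weight is $w_j=\ell_j(-1)=\prod_{m\neq j}\frac{-1-m}{j-m}$. For the numerator,
\[
\prod_{m\neq j}(-1-m)=(-1)^k\frac{(k+1)!}{j+1}.
\]
For the denominator,
\[
\prod_{m\neq j}(j-m)=j!\,(-1)^{k-j}(k-j)!.
\]
Dividing gives
\[
w_j=(-1)^j\binom{k+1}{j+1}.
\]
As a sanity check, $k=1$ gives $(2,-1)$, which recovers \cref{thm:blue}. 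Unbiasedness for polynomials of degree $\le k$ then follows automatically from Lagrange interpolation.

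\emph{Step 2: the norm.} By Vandermonde's identity,
\[
\|w^{(k)}\|_2^2=\sum_{i=1}^{k+1}\binom{k+1}{i}^2=\binom{2k+2}{k+1}-1.
\]
The exact count is therefore one less than the closed form displayed in the statement; for $k=1$ it is $5$, not $\binom{4}{2}=6$. I would state the corrected identity $\binom{2k+2}{k+1}-1$ in the proof. This off-by-one does not affect the asymptotic rate, the $\Omega(4^k\sigma^2)$ conclusion, or the qualitative message. Stirling's formula gives $\binom{2n}{n}\sim 4^n/\sqrt{\pi n}$ with $n=k+1$, which yields the displayed asymptotics.

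\emph{Step 3: the variance.} Write the extrapolate as $\widehat\psi^{(k)}_{-1}=\sum_j w_j\psi_j=\sum_j w_j\phi(s_j)+\sum_j w_j\eta_j$. Under uncorrelated homoscedastic perturbations with $\mathrm{Var}(\eta_j)=\sigma^2$, applied coordinatewise, the variance is
\[
\mathrm{Var}\bigl(\widehat\psi^{(k)}_{-1}\bigr)=\sigma^2\|w^{(k)}\|_2^2\ge\sigma^2\Bigl(\binom{2k+2}{k+1}-1\Bigr).
\]
The central binomial coefficient satisfies $\binom{2n}{n}\ge 4^n/(2n)$, hence $\binom{2k+2}{k+1}-1\ge c\,4^k/(k+1)$.

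\emph{Main obstacle.} The hard part of reaching a clean $\Omega(4^k)$ is the polynomial factor $1/\sqrt{k}$ from Stirling, which the statement's $\Omega(4^k)$ does not show. To absorb it, I would interpret the bound as exponential rate $4^{k(1-o(1))}$. Alternatively, I would use the elementary bound $\binom{2n}{n}\ge 4^n/(2\sqrt{n})$ and note that it is $\Omega(4^k/\sqrt{k})$, which is the precise form the proof establishes.
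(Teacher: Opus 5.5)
This is the paper's route (closed-form weights $(-1)^j\binom{k+1}{j+1}$, then Vandermonde, then Stirling), and your off-by-one correction is right. The paper's proof writes $\sum_{m=1}^{k+1}\binom{k+1}{m}^2$ but applies Vandermonde as if the $m=0$ term were present, so its identity $\|w^{(k)}\|_2^2=\binom{2k+2}{k+1}$ already fails at $k=1$ ($5$ versus $6$, which also contradicts the paper's own $5\sigma^2$ in the BLUE paragraph).

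Your ``main obstacle'' should be stated more bluntly, since it is not something to absorb but a false claim. Because $\binom{2n}{n}\le 4^n/\sqrt{\pi n}$, you get $\|w^{(k)}\|_2^2=\Theta(4^k/\sqrt{k})=o(4^k)$, so $\mathrm{Var}(\widehat\psi^{(k)}_{-1})=\Omega(4^k\sigma^2)$ is false as literally written; the paper's proof also silently drops the $1/\sqrt{k}$. The correct conclusion is $\Theta(4^k\sigma^2/\sqrt{k})$, which is still exponential growth and is all the surrounding argument needs.
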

\begin{proof}
For equispaced nodes $\{0,1,\ldots,k\}$ the $j$-th Lagrange basis reads
\[
  l_j(x) = \prod_{\substack{m=0 \\ m\neq j}}^{k} \frac{x-m}{j-m}.
\]
Evaluating at $x=-1$ gives
\[
  w^{(k)}_j = l_j(-1) = \frac{\prod_{m=0,m\neq j}^{k} (-1-m)}{\prod_{m=0,m\neq j}^{k}(j-m)}.
\]
The numerator simplifies as
\[
  \prod_{m=0,m\neq j}^{k} (-1-m)
  = \frac{\prod_{m=0}^{k} (-1-m)}{-1-j}
  = \frac{(-1)^{k+1}(k+1)!}{-(j+1)}
  = (-1)^k \frac{(k+1)!}{j+1}.
\]
The denominator splits into two products:
\[
  \prod_{m=0,m\neq j}^{k}(j-m)
  = \bigg(\prod_{m=0}^{j-1}(j-m)\bigg)\bigg(\prod_{m=j+1}^{k}(j-m)\bigg)
  = j!\,(-1)^{k-j}(k-j)!.
\]
Combining yields
\[
  w^{(k)}_j = (-1)^j \frac{(k+1)!}{(j+1)!\,(k-j)!}
  = (-1)^j \binom{k+1}{j+1}.
\]

Therefore the squared $\ell_2$ norm is
\[
  \|w^{(k)}\|_2^2 = \sum_{j=0}^{k} \binom{k+1}{j+1}^2
  = \sum_{m=1}^{k+1} \binom{k+1}{m}^2.
\]
By the Chu–Vandermonde identity
$\sum_{m=0}^{n}\binom{n}{m}^2=\binom{2n}{n}$, we obtain
\[
  \|w^{(k)}\|_2^2 = \binom{2k+2}{k+1}.
\]

Finally, Stirling’s formula for the central binomial coefficient gives
\[
  \binom{2k+2}{k+1} \sim \frac{4^{k+1}}{\sqrt{\pi(k+1)}},
\]
hence
\[
  \mathrm Var(\widehat\psi_{-1}^{(k)})
  = \sigma^2 \|w^{(k)}\|_2^2
  = \Omega(4^k\sigma^2),
\]
as claimed.
\end{proof}

\begin{theorem}[Interpolation sensitivity]
\label{thm:lebesgue}
Let $\Lambda_k$ be the Lebesgue constant of equispaced interpolation
using $k+1$ nodes. Then $\Lambda_k$ grows exponentially in $k$, whereas
for optimal Chebyshev nodes it grows only logarithmically. Hence,
higher-order extrapolants with equispaced stencils are exponentially
sensitive to perturbations or irregular steps.
\end{theorem}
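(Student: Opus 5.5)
The statement has two parts: exponential growth of the equispaced Lebesgue constant $\Lambda_k$, and logarithmic growth for Chebyshev nodes. The ``hence'' clause then follows from the standard stability bound for interpolation. I would rescale the nodes to $\{0,1,\ldots,k\}$, as in the proof of Theorem~\ref{thm:noise}, and work with $\lambda_k(x)=\sum_{j=0}^k|l_j(x)|$, so that $\Lambda_k=\max_{x\in[0,k]}\lambda_k(x)$. Lebesgue constants are invariant under affine maps of the interval, so nothing is lost by this normalization.

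\textbf{Lower bound for equispaced nodes.} I would evaluate $\lambda_k$ at the midpoint $x=\tfrac12$ of the first subinterval. There the numerator is $\prod_{m\ne j}|x-m|=\frac{\prod_{m=0}^k|\tfrac12-m|}{|\tfrac12-j|}$, and the denominator is $j!(k-j)!$ (same computation as in Theorem~\ref{thm:noise}). Since $\prod_{m=0}^k|\tfrac12-m|=\tfrac12\cdot\prod_{m=1}^{k}(m-\tfrac12)\ge \tfrac{c\,k!}{\sqrt{k}}$, this gives
\[
\lambda_k(\tfrac12)\ \ge\ \frac{c}{\sqrt{k}}\sum_{j=0}^k\frac{1}{|j-\tfrac12|}\binom{k}{j}\ \ge\ \frac{c'}{k^{3/2}}\,2^k .
\]
Hence $\Lambda_k\ge c'2^k/k^{3/2}$, which is exponential. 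For completeness I would also record the easy upper bound $\Lambda_k\le 2^k$-type growth: $|l_j(x)|\le\binom{k}{j}\cdot\max_x\prod|x-m|/k!$ and $\prod_{m}|x-m|\le k!$ on $[0,k]$. This shows the rate is exactly $2^{k(1+o(1))}$; citing the classical Turetskii/Schönhage asymptotic $\Lambda_k\sim 2^{k+1}/(e\,k\log k)$ is an acceptable shortcut.

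\textbf{Chebyshev nodes.} Here I would simply invoke the classical result (Bernstein/Erd\H{o}s; Rivlin) that $\Lambda_k^{\mathrm{Cheb}}=\frac{2}{\pi}\log(k+1)+O(1)$. If a self-contained sketch is wanted, write $l_j(\cos\theta)$ through $T_{k+1}(x)/\big((x-x_j)T'_{k+1}(x_j)\big)$, bound $|T_{k+1}|\le1$, and compare $\sum_j 1/|\cos\theta-\cos\theta_j|$ to a harmonic sum in the angular variable; this yields $O(\log k)$.

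\textbf{Sensitivity consequence.} If the data $\psi_j$ are perturbed by $\delta_j$ with $|\delta_j|\le\varepsilon$, the interpolant changes by $\sum_j\delta_j l_j$, whose sup-norm is at most $\Lambda_k\varepsilon$. This bound is attained up to sign choices, so $\Lambda_k$ is the exact operator norm. For extrapolation to $x=-1$, Theorem~\ref{thm:noise} already gives $\sum_j|w^{(k)}_j|=2^{k+1}-1$. Irregular steps act as node perturbations, and these induce value perturbations controlled by the same norm via $l_j'$. The main obstacle is the equispaced lower bound. Choosing $x$ near the endpoint, rather than at the center where the sum is small, is the key decision, and the central-binomial sum $\sum_j\binom{k}{j}/|j-\tfrac12|\gtrsim 2^k/k$ must be handled carefully. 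A bound via the largest terms near $j\approx k/2$ suffices for this.
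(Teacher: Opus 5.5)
Your proof is correct, but it follows a different and more self-contained route than the paper's. The paper argues essentially by citation. It points back to Theorem~\ref{thm:noise}, which gives exponential growth of the extrapolation weights $w^{(k)}$ at $x=-1$. It then quotes classical theory for $\Lambda_k\sim c\,2^k$ on equispaced nodes, and it does not argue the Chebyshev claim at all.

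You instead prove the equispaced growth directly by evaluating the Lebesgue function at the interior point $x=\tfrac12$. This gives the two-sided bound $c'2^k/k^{3/2}\le\Lambda_k\le 2^k$. That is consistent with the true Turetskii--Sch\"onhage asymptotic $2^{k+1}/(e\,k\log k)$, whereas the paper's ``$\Lambda_k\sim c\,2^k$'' is only right up to a polynomial factor.

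Your route also separates two quantities that the paper's proof runs together. $\Lambda_k$ is a supremum over the interpolation interval $[0,k]$, while the weights of Theorem~\ref{thm:noise} are the Lebesgue function evaluated at the exterior point $x=-1$. You treat the latter separately through $\sum_j|w^{(k)}_j|=2^{k+1}-1$. The $\ell_1$ norm is the correct amplification factor for worst-case bounded perturbations; the paper's $\ell_2$ norm is the one relevant to i.i.d.\ noise. The paper's shortcut buys brevity and a direct link to the extrapolation stencils used by the higher-order baselines; yours buys an actual proof of the Lebesgue-constant statement as written.

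Two small repairs are needed. First, in your upper bound the quantity to control is $\prod_{m\neq j}|x-m|$, not the full product, since dividing by $|x-j|$ can enlarge it. For $x\in[i,i+1]$ one still has $\prod_{m\neq j}|x-m|\le (i+1)!\,(k-i)!\le k!$, so $|l_j(x)|\le\binom{k}{j}$ and $\Lambda_k\le 2^k$ survive. Second, your closing remark about irregular steps is heuristic. The value error induced by moving a node involves the derivative of the interpolant, which is not bounded by $\Lambda_k$ alone; this does, however, match the informal ``hence'' of the statement.
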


\begin{proof}
The exponential growth already appeared in Theorem~\ref{thm:noise}:
the $\ell_2$ norm of the weights $\|w^{(k)}\|_2$ scales like $4^k$,
so any perturbation in the data is magnified accordingly.
This phenomenon is precisely quantified by the Lebesgue constant
\[
  \Lambda_k = \sup_x \sum_{j=0}^k |l_j(x)|,
\]
which measures the operator norm of the interpolation map.
Classical interpolation theory ~\citep{runge1901empirische} shows that for equispaced
nodes $\Lambda_k \sim c\,2^k$ with $c>0$. Since diffusion ODE sampling necessarily
uses uniform timesteps, we inherit the exponential sensitivity of equispaced interpolation.
\end{proof}

\paragraph{Conclusion.}
Together, these theorems show that higher-order extrapolation schemes
amplify stochastic noise by $\Omega(4^k)$, destroy curvature alignment
through alternating differences, shrink the admissible stability domain
beyond usefulness, and become exponentially sensitive to small
perturbations. Under the $C^2$ smoothness regime of diffusion processes,
the minimax bias rate is $\Omega(\Delta^2)$, already attained by
second-order schemes. Thus higher-order methods offer no bias
improvement but introduce severe variance and stability costs. In
realistic sampling budgets, second-order extrapolation is both necessary
and sufficient for robust skipping in diffusion ODEs.

\subsection{Why is second-order differencing the only possibility under ambitious skipping?}

We now make precise the consequence of enforcing uniform skipping with a speed-up factor of at least two. Under such uniform spacing, the pigeonhole principle implies that no two consecutive steps can both be fresh, and every three-step local window contains at most one fresh evaluation. As a result, the only minimally sufficient real-computation unit is formed by a single fresh value together with its deterministic difference against the next state, yielding a unique second-order difference rule.

\begin{theorem}[Uniform $\,\ge 2\times$ skipping forbids consecutive fresh steps and yields a unique minimal tuple]
\label{thm:uniform-no-consecutive}
Let steps be $1,2,\ldots,n$ on a uniform grid. Each step is either produced by a fresh network evaluation (“fresh”) or by purely algebraic reuse of already computed values. 
Assume the skipping pattern is \emph{uniform}: there exists an integer $r\ge 2$ and a residue class $c\in\{1,\ldots,r\}$ such that the fresh steps are exactly those indices
\[
\mathcal F\;=\;\{\,i\in\{1,\ldots,n\}: i\equiv c \!\!\!\pmod r\,\},
\]
and every other step is obtained by algebraic reuse (no extra network calls). 
Then:
\begin{enumerate}
\item No two consecutive steps can both be fresh.
\item In any local window $\{t-1,t,t+1\}$, there is at most one fresh step.
\item If $\psi_{t-1}$ is skipped (to be reconstructed by reuse), the only minimally viable real-computation tuple is
\[
\big(\,\psi_t,\ \Delta\psi_t\,\big),\qquad \Delta\psi_t:=\psi_t-\psi_{t+1},
\]
in the sense that $\psi_t$ is the unique fresh evaluation in the window and $\Delta\psi_t$ is a deterministic difference (trend) computable without an additional network call. 
No strictly smaller tuple can determine both level and local trend, and any tuple containing two fresh entries contradicts the uniform $r\!\ge\!2$ spacing.
\end{enumerate}
\end{theorem}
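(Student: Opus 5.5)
The plan is to argue directly from the residue-class description of $\mathcal F$, proving the three items in order, since each later item uses the earlier ones. For item 1, suppose $i$ and $i+1$ both lie in $\mathcal F$. Then $i\equiv c$ and $i+1\equiv c \pmod r$, so $r\mid 1$, which is impossible for $r\ge 2$. This is the whole argument and is the pigeonhole observation mentioned before the statement.

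For item 2, I would split on whether the centre $t$ is fresh. If $t\in\mathcal F$, item 1 applied to the pairs $(t-1,t)$ and $(t,t+1)$ shows that both neighbours are reused, so the window contains exactly one fresh step. If $t\notin\mathcal F$, the only remaining candidates are $t-1$ and $t+1$. These differ by $2$, so both can be fresh only if $r\mid 2$, that is, $r=2$. Hence for $r\ge 3$ the window always contains at most one fresh step.

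The case $r=2$ is the main obstacle, because there the statement as literally worded fails: with $c=1$, the window $\{1,2,3\}$ contains the fresh steps $1$ and $3$. I would make this explicit rather than hide it. What survives for $r=2$ is the centred version: every window whose centre is fresh contains exactly one fresh step. That centred version is the only form of item 2 that item 3 uses, since item 3 concerns $t$ with $\psi_t$ fresh and $\psi_{t-1}$ skipped. I would therefore record item 2 in full for $r\ge 3$, and in the centred form for all $r\ge 2$.

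For item 3, take $t$ fresh and $t-1$ skipped. By the centred item 2, $\psi_t$ is the unique fresh value in $\{t-1,t,t+1\}$. Moreover $\psi_{t+1}$ is an already committed algebraic value: it is the $\hat\psi_{t+1}$ that produced $\hat{\mathbf x}_t$. So $\Delta\psi_t=\psi_t-\psi_{t+1}$ costs no network call, and the pair $(\psi_t,\Delta\psi_t)$ is available.

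For minimality there are three steps.
\begin{enumerate}
\item A tuple with a single entry carries either only the level or only a difference. If it is $\psi_t$ alone, \cref{thm:one-point-lb} shows that no estimator built from it can recover the local trend better than $\Omega(M_1\Delta)$, via the pair of affine trends $\pm M_1(s-s_t)$ that agree at $s_t$. A difference alone cannot fix the level, since adding a constant to $\phi$ leaves it unchanged.
\item A tuple with two fresh entries inside the window would place two elements of $\mathcal F$ at distance $1$, contradicting item 1, or, for $r\ge 3$, at distance at most $2$, contradicting item 2.
\item The affine-unbiasedness constraints in \cref{thm:blue} force the weights $(2,-1)$, which shows that $(\psi_t,\Delta\psi_t)$ is exactly what determines both level and slope, and hence the rule \eqref{eq:backward-second}.
\end{enumerate}
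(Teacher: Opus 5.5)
Your proofs of (1) and (3) follow the paper's route. For (1) this is the residue-class argument. For (3) it is $\psi_t$ as the unique fresh value, $\psi_{t+1}$ as a committed algebraic value, and minimality by a level/trend identifiability argument. Your treatment of (2) differs from the paper's, and yours is the correct one.

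The paper passes in one line from ``fresh indices are at least $2$ apart'' to ``every window $\{t-1,t,t+1\}$ contains at most one fresh index''. That step does not follow: a gap of $2$ excludes adjacent fresh pairs, but not $t-1,t+1\in\mathcal F$. Your case split on whether the centre is fresh isolates exactly this possibility, which requires $r\mid 2$. Your counterexample ($r=2$, $c=1$, window $\{1,2,3\}$) is valid, so item 2 as literally stated holds only for $r\ge 3$.

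This is an off-by-one in the statement, not in your argument. The introduction speaks of speedups that \emph{exceed} $2\times$. The main text's $1{:}r$ schedule with $r\ge 2$ has period $r+1\ge 3$, which is the theorem's $r\ge 3$. Your centred form of (2), valid for all $r\ge 2$, is all that (3) needs.

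For minimality, your appeal to the one-point lower bound and to the affine-unbiasedness constraints forcing $(2,-1)$ makes the paper's informal ``$\psi_t$ fixes only the level'' more quantitative. The notion of a ``smaller tuple'' remains informal in both.

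One simplification: since $t$ is fresh in (3), any second fresh entry of the window would be $t\pm 1$, adjacent to $t$. Item 1 alone excludes it for every $r\ge 2$, so the $r\ge 3$ branch of your minimality step 2 is unnecessary.
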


\begin{proof}
By uniformity, any two fresh indices differ by a multiple of $r\ge 2$. Hence the gap between consecutive fresh steps is at least $2$, so no two adjacent indices can both be fresh, proving (1). Consequently, any three-consecutive-step window $\{t-1,t,t+1\}$ contains at most one fresh index, proving (2).

For (3), consider a window $\{t-1,t,t+1\}$ in which $\psi_{t-1}$ is skipped. By (2), at most one of these is fresh. If $t-1$ were fresh, no reconstruction would be needed; thus (w.l.o.g.) $t$ is the unique fresh index. Any additional quantity used to infer $\psi_{t-1}$ must be obtained without further network calls. Since $t+1$ cannot be fresh when $t$ is fresh under the uniform $r\!\ge\!2$ spacing, reusing $\psi_{t+1}$ is deterministic. The first difference $\Delta\psi_t:=\psi_t-\psi_{t+1}$ then supplies independent information about the local trend around $t$.

Minimality follows from identifiability: a single fresh value $\psi_t$ fixes only the local “level.” Without at least one independent trend descriptor (e.g., $\Delta\psi_t$), $\psi_{t-1}$ is not determined in general (e.g., under an affine local model one needs both level and slope). Thus any tuple strictly smaller than $(\psi_t,\Delta\psi_t)$ is insufficient. Conversely, any tuple with two fresh entries violates the uniform gap $r\ge 2$. Hence $(\psi_t,\Delta\psi_t)$ is the unique minimally sufficient real-computation unit under uniform $\ge 2\times$ skipping.
\end{proof}

\subsection{Multi-step error under reuse and two-point extrapolation}
\label{app:multistep-error}

We study the bias and variance of three strategies for jumping left by $j$ grid points from an anchor at $s_t=t\Delta$ when only the two most recent observations $\{\psi_t,\psi_{t+1}\}$ are available. 
Assume a $d$-dimensional smooth ground-truth trajectory $\psi^*:[0,1]\to\mathrm R^d$ with $\psi^*\in C^4([0,1])$, and noisy observations
\[
\psi_{u}=\psi^*(s_{u})+\eta_{u},\qquad \mathrm E[\eta_u]=0,\quad \mathrm Var(\eta_u)=\sigma^2 I_d,\quad \eta_u\ \text{uncorrelated across }u.
\]
For any estimator $\widehat\psi_{t-j}$ of $\psi^*_{t-j}:=\psi^*(s_{t-j})$, define its mean-bias and variance by
\[
\mathrm{Bias}(\widehat\psi_{t-j}):=\mathrm E[\widehat\psi_{t-j}]-\psi^*_{t-j},\qquad 
\mathrm{Var}(\widehat\psi_{t-j}):=\mathrm Var(\widehat\psi_{t-j}),
\]
and the mean-squared error $\mathrm{MSE}=\|\mathrm{Bias}\|_2^2+\mathrm{tr}\,\mathrm{Var}$.
Below, all big-$O$ terms are uniform in $t$ and $j$ as $\Delta\to0$; derivatives of $\psi^*$ are evaluated at $s_t$ unless stated otherwise.

\medskip
\noindent\textbf{Three strategies.}
(A) \emph{Two-point linear extrapolation at a single anchor}: for any $j\ge1$,
\[
\widehat\psi^{A}_{t-j}:=(j{+}1)\psi_{t}-j\psi_{t+1}.
\]
(B) \emph{Interval reuse}: first compute $\widehat\psi_{t-1}=2\psi_t-\psi_{t+1}$, and then reuse every other step to the left, i.e.\ $\widehat\psi^{B}_{s}=\widehat\psi^{B}_{s+2}$ for $s\le t-2$. Equivalently,
\[
\widehat\psi^{B}_{t-j}=
\begin{cases}
\psi_t, & j\ \text{even},\\
2\psi_t-\psi_{t+1}, & j\ \text{odd}.
\end{cases}
\]
(C) \emph{Pure reuse}: ignore $\psi_{t+1}$ and set $\widehat\psi^{C}_{t-j}:=\psi_t$ for all $j\ge1$.

\begin{lemma}[Closed form and equivalence]\label{lem:lin-closed}
Strategy {\rm(A)} is the unique sequence generated by the second-order recurrence
$\widehat\psi_{t-(k+1)}=2\widehat\psi_{t-k}-\widehat\psi_{t-(k-1)}$ with initial conditions $\widehat\psi_{t}=\psi_t$ and $\widehat\psi_{t-1}=2\psi_t-\psi_{t+1}$. In particular,
$\widehat\psi_{t-j}^{A}=(j{+}1)\psi_t-j\psi_{t+1}$ for all $j\ge1$.
\end{lemma}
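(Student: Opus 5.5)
The plan is to prove the lemma by treating the recurrence as a linear difference equation and using induction on $j$. I will proceed in three steps: uniqueness, the closed form, and the consistency check at $j=1$.

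\emph{Uniqueness.} The recurrence $\widehat\psi_{t-(k+1)}=2\widehat\psi_{t-k}-\widehat\psi_{t-(k-1)}$ is second order. Once the two initial values $\widehat\psi_t=\psi_t$ and $\widehat\psi_{t-1}=2\psi_t-\psi_{t+1}$ are fixed, every later term $\widehat\psi_{t-(k+1)}$ for $k\ge1$ is determined explicitly by the previous two. So a simple induction on $k$ shows that at most one sequence satisfies both the recurrence and the initial conditions.

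\emph{Closed form.} It then suffices to check that $a_j:=(j+1)\psi_t-j\psi_{t+1}$ satisfies the same recurrence and the same initial data.
\begin{itemize}
\item Initial data: $a_0=\psi_t$ and $a_1=2\psi_t-\psi_{t+1}$.
\item Recurrence: $a_j$ is affine in $j$, so its second difference vanishes. Explicitly, $2a_k-a_{k-1}=(2k+2-k)\psi_t-(2k-k+1)\psi_{t+1}=(k+2)\psi_t-(k+1)\psi_{t+1}=a_{k+1}$.
\end{itemize}
By uniqueness, $\widehat\psi^{A}_{t-j}=a_j$ for all $j\ge0$.

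\emph{Consistency and the obstacle.} The case $j=1$ reproduces the BLUE of \cref{thm:blue}. This confirms that strategy (A) is exactly the chained second-order predictor described in the main text. The only real point of care is the bookkeeping for this identification: the recurrence is the predictor-only rule $\widehat\psi_{t-(k+1)}=2\widehat\psi_{t-k}-\widehat\psi_{t-(k-1)}$, and one must check that it is seeded by the committed pair $(\psi_t,\psi_{t+1})$. Alternatively, one can argue structurally. The characteristic polynomial $(\lambda-1)^2$ has a double root at $1$, so every solution has the form $A+Bj$, and fitting $A$ and $B$ to the two initial values gives the same formula. I would mention this alternative only as a remark.
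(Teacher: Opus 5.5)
Your proof is correct. It differs from the paper's only in packaging. The paper solves $x_{k+1}-2x_k+x_{k-1}=0$ by quoting its general solution $x_k=\alpha+\beta k$ (double root of $(\lambda-1)^2$). It fits $\alpha=\psi_t$ and $\beta=\psi_t-\psi_{t+1}$ from the two initial values, and asserts uniqueness ``from linearity.'' You instead prove uniqueness by forward determination: each term is fixed by the previous two. You then verify that the candidate $a_j=(j+1)\psi_t-j\psi_{t+1}$ matches the initial data and satisfies the recurrence, and you relegate the characteristic-root argument to a remark.

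Your route is more elementary and self-contained. It does not presuppose the structure theorem for linear recurrences with a repeated root, and that theorem itself rests on exactly the two-initial-values uniqueness you prove. It also makes precise the uniqueness claim that the paper leaves vague. The paper's route has the advantage of deriving the formula rather than checking a guess. Here that advantage is moot, since the formula is given by the definition of strategy (A).

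Your ``obstacle'' paragraph is not needed for the lemma as stated. If you want to settle the seeding question, note that $a_{-1}=\psi_{t+1}$. So running the same recurrence from the committed pair $(\psi_{t+1},\psi_t)$ produces the identical sequence.
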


\begin{proof}
Solve the linear homogeneous recurrence $x_{k+1}-2x_k+x_{k-1}=0$, whose general solution is $x_k=\alpha+\beta k$. With $\mathbf{x_0}=\psi_t$ and $x_1=2\psi_t-\psi_{t+1}$ one obtains $x_k=(k+1)\psi_t-k\psi_{t+1}$. Uniqueness follows from linearity.
\end{proof}

\begin{theorem}[Bias of the three strategies]\label{thm:bias}
Let $M_r:=\sup_{s\in[0,1]}\|\,\psi^{*(r)}(s)\,\|$ for $r=1,2,3,4$. Then, uniformly in $t$ and $j$:

\noindent{\rm(A)} For two-point linear extrapolation,
\[
\|\mathrm{Bias}(\widehat\psi^{A}_{t-j})\|
\ \le\ \tfrac12\,j(j{+}1)\,M_2\,\Delta^2\ =\ O(j^2\Delta^2).
\]
Moreover, the exact Taylor expansion is
\[
\mathrm{Bias}(\widehat\psi^{A}_{t-j})
= -\tfrac12\,j(j{+}1)\,\psi^{*\,\prime\prime}_t\Delta^2
+ \tfrac16\,(j^3{-}j)\,\psi^{*\, (3)}_t\Delta^3
- \tfrac1{24}\,(j^4{+}j)\,\psi^{*\, (4)}_t\Delta^4
+ O(\Delta^5).
\]

\noindent{\rm(B)} For interval reuse, letting $j=2k$ or $j=2k{+}1$,
\[
\begin{aligned}
\|\mathrm{Bias}(\widehat\psi^{B}_{t-2k})\|
&\le\ 2k\,M_1\,\Delta+\tfrac12(2k)^2M_2\Delta^2+\tfrac16(2k)^3M_3\Delta^3+O(\Delta^4)
=O(j\Delta),\\
\|\mathrm{Bias}(\widehat\psi^{B}_{t-(2k+1)})\|
&\le\ 2k\,M_1\,\Delta+(2k^2{+}2k{+}1)M_2\Delta^2+O(\Delta^3)=O(j\Delta).
\end{aligned}
\]
The leading-order expansions are, respectively,
\begin{align*}
   \mathrm{Bias}(\widehat\psi^{B}_{t-2k})
&= 2k\,\psi^{*\,\prime}_t\Delta - 2k^2\,\tfrac{\psi^{*\,\prime\prime}_t}{1}\Delta^2 + O(\Delta^3),\\
\mathrm{Bias}(\widehat\psi^{B}_{t-(2k+1)})
&= 2k\,\psi^{*\,\prime}_t\Delta -(2k^2{+}2k{+}1)\psi^{*\,\prime\prime}_t\Delta^2 + O(\Delta^3).
\end{align*}

\noindent{\rm(C)} For pure reuse,
\[
\|\mathrm{Bias}(\widehat\psi^{C}_{t-j})\|
\le j\,M_1\,\Delta+\tfrac12 j^2 M_2\Delta^2+\tfrac16 j^3 M_3\Delta^3+O(\Delta^4)
=O(j\Delta),
\]
with leading expansion
\(
\mathrm{Bias}(\widehat\psi^{C}_{t-j})=j\,\psi^{*\,\prime}_t\Delta-\tfrac12 j^2\psi^{*\,\prime\prime}_t\Delta^2+O(\Delta^3).
\)
\end{theorem}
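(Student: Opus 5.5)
Since the noise terms $\eta_u$ have zero mean and every estimator in (A)--(C) is a fixed linear combination of $\psi_t$ and $\psi_{t+1}$, the first step is to observe that $\mathrm{Bias}(\widehat\psi_{t-j})$ is the same linear combination applied to the deterministic values $\psi^*(s_t)$ and $\psi^*(s_t+\Delta)$, minus $\psi^*(s_t-j\Delta)$. So
\[
\mathrm{Bias}(\widehat\psi^{A}_{t-j})=(j{+}1)\psi^*_t-j\psi^*_{t+1}-\psi^*_{t-j},\qquad
\mathrm{Bias}(\widehat\psi^{C}_{t-j})=\psi^*_t-\psi^*_{t-j},
\]
and for (B) the bias is $\psi^*_t-\psi^*_{t-2k}$ in the even case and $2\psi^*_t-\psi^*_{t+1}-\psi^*_{t-2k-1}$ in the odd case (using the closed form of (B) stated before \cref{lem:lin-closed}). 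The whole proof is therefore a deterministic Taylor computation around $s_t$, with no probabilistic content beyond this reduction.

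For the exact expansions I will Taylor-expand $\psi^*(s_t+\Delta)$ and $\psi^*(s_t-j\Delta)$ to fourth order, which is legitimate since $\psi^*\in C^4$, and collect coefficients. In (A) the constant and $\Delta$ terms cancel, because the weights $(j{+}1,-j)$ reproduce affine functions (equivalently, by \cref{lem:lin-closed}). The $\Delta^2$ coefficient is $-\tfrac12(j+j^2)$, the $\Delta^3$ coefficient is $\tfrac16(j^3-j)$, and the $\Delta^4$ coefficient is $-\tfrac1{24}(j^4+j)$, matching the statement. In (B) the even case gives $2k\psi^{*\prime}_t\Delta-2k^2\psi^{*\prime\prime}_t\Delta^2$. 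In the odd case the terms $-\psi'\Delta$ from $2\psi^*_t-\psi^*_{t+1}$ and $+(2k{+}1)\psi'\Delta$ from $-\psi^*_{t-2k-1}$ combine to $2k\psi'\Delta$, and the $\Delta^2$ coefficient is $-\tfrac12\bigl(1+(2k+1)^2\bigr)=-(2k^2+2k+1)$. Case (C) is the even-case computation with $2k$ replaced by $j$.

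For the norm bounds I will not use the asymptotic expansions. Instead I will write each difference using Taylor's theorem with Lagrange (or integral) remainder at the lowest order that is not cancelled, and then apply the triangle inequality with $M_r$. For (A), expand both $\psi^*_{t+1}$ and $\psi^*_{t-j}$ to first order with second-order remainder. The linear parts cancel, leaving $-\tfrac{j}{2}\psi^{*\prime\prime}(\xi_1)\Delta^2-\tfrac{j^2}{2}\psi^{*\prime\prime}(\xi_2)\Delta^2$, whose norm is at most $\tfrac12 j(j{+}1)M_2\Delta^2$. For (B) and (C), expand to third order with a fourth-order remainder and bound each term by $M_1,M_2,M_3$ times the appropriate power of $j$. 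In the odd case of (B), bound $|2k|\,M_1\Delta$ and $\bigl(\tfrac12+\tfrac12(2k+1)^2\bigr)M_2\Delta^2$, with the remaining terms absorbed into $O(\Delta^3)$. The summary rates $O(j^2\Delta^2)$ and $O(j\Delta)$ then follow, since the higher terms carry a factor $(j\Delta)^r$ with $j\Delta\le 1$ on $[0,1]$.

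The main obstacle is the claimed uniformity in $j$. The remainder terms scale like $j^r\Delta^r$ rather than $\Delta^r$, so ``$O(\Delta^4)$ uniformly in $j$'' is only literally true when $j$ is bounded or when the constants are allowed to absorb powers of $j$. I would handle this by stating the remainders explicitly as $\tfrac1{24}(j^4+j)M_4\Delta^4$ and similar, so that every bound is explicit. I would also note that $s_t-j\Delta\in[0,1]$ keeps $j\Delta\le1$, so the leading-order claims hold uniformly in the intended sense.
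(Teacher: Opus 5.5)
Your proof follows the paper's route. You reduce each bias to a deterministic combination of $\psi^*(s_t)$, $\psi^*(s_t+\Delta)$ and $\psi^*(s_t-j\Delta)$, Taylor-expand at $s_t$, and collect coefficients. Your coefficients in (A), (B) and (C) all agree with the paper's.

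The genuine difference is the norm bound in (A). The paper reads it off the fourth-order expansion and actually only reaches $\tfrac12 j(j+1)M_2\Delta^2+O(j^3\Delta^3)$. That does not establish the remainder-free inequality in the statement. Your first-order expansions with second-order remainders give exactly $\tfrac{j}{2}M_2\Delta^2+\tfrac{j^2}{2}M_2\Delta^2$, so you prove the sharp bound, and you need only $C^2$ for it. Because $\psi^*$ is $\mathbb R^d$-valued, write those remainders in integral form, e.g.\ $\Delta^2\int_0^1(1-u)\,\psi^{*\prime\prime}(s_t+u\Delta)\,du$, rather than as $\psi^{*\prime\prime}(\xi)$; the norm bound is unchanged. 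Your caveat about uniformity in $j$ is also correct. The paper's proof simply asserts that all $O(\cdot)$ terms are uniform in $j$, which holds only for bounded $j$ or with $j$-dependent constants.

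One point you share with the paper but should not assert: $C^4$ does not make the fourth-order expansion exact up to $O(\Delta^5)$; it only gives $o(\Delta^4)$. For instance, hold the anchor $s_t$ fixed and take $\psi^*(s)=|s-s_t|^{9/2}$. This is $C^4$ with all derivatives through order four vanishing at $s_t$, so the claimed expansion predicts $O(\Delta^5)$. Yet the bias in (A) is exactly $-(j+j^{9/2})\Delta^{9/2}$. So either assume $C^5$ (or a Lipschitz fourth derivative) for the displayed expansions, or stop at the cubic term. In the latter case, your explicit remainder bounded by $\tfrac1{24}(j^4+j)M_4\Delta^4$ is the correct $C^4$ statement.
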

\begin{proof}
Fix a uniform grid with step size $\Delta>0$ so that $s_{t+1}=s_t+\Delta$ and $s_{t-j}=s_t-j\Delta$. 
Let $\psi^*:[0,1]\to\mathbb R^d$ be $C^4$ and denote derivatives at $s_t$ by
\[
\psi:=\psi^*(s_t),\qquad 
\psi' := \psi^{*(1)}(s_t),\quad 
\psi'' := \psi^{*(2)}(s_t),\quad 
\psi''' := \psi^{*(3)}(s_t),\quad 
\psi'''' := \psi^{*(4)}(s_t).
\]
The observations are $\psi_u=\psi^*(s_u)+\eta_u$ with $\mathbb E[\eta_u]=0$. 
Hence for any affine estimator $\widehat\psi_{t-j}$ based on $(\psi_t,\psi_{t+1})$, its bias is
\begin{equation}\label{eq:bias-det}
\mathrm{Bias}(\widehat\psi_{t-j})
= \mathbb E[\widehat\psi_{t-j}]-\psi^*(s_{t-j})
= \widehat\psi^{\,\mathrm{det}}_{t-j}-\psi^*(s_{t-j}),
\end{equation}
where $\widehat\psi^{\,\mathrm{det}}_{t-j}$ is obtained by replacing $(\psi_t,\psi_{t+1})$ with $(\psi^*(s_t),\psi^*(s_{t+1}))$.

Taylor expansion at $s_t$ up to order four gives
\begin{align}
\psi^*(s_{t+1})
&= \psi + \psi'\Delta + \tfrac12\psi''\Delta^2 + \tfrac16\psi'''\Delta^3 + \tfrac1{24}\psi''''\Delta^4 + O(\Delta^5),\label{eq:taylor-right}\\
\psi^*(s_{t-j})
&= \psi - j\psi'\Delta + \tfrac12 j^2 \psi''\Delta^2 - \tfrac16 j^3 \psi'''\Delta^3 + \tfrac1{24} j^4 \psi''''\Delta^4 + O(\Delta^5).\label{eq:taylor-left}
\end{align}
All $O(\cdot)$ terms are uniform in $t,j$ once bounded by
\[
M_r:=\sup_{s\in[0,1]}\|\psi^{*(r)}(s)\|,\qquad r=1,2,3,4.
\]

\textbf{Strategy (A).} 
By definition,
\[
\widehat\psi^{\,\mathrm{det},A}_{t-j}
=(j+1)\psi^*(s_t)-j\psi^*(s_{t+1})
=(j+1)\psi - j\bigl(\psi + \psi'\Delta + \tfrac12\psi''\Delta^2 + \tfrac16\psi'''\Delta^3 + \tfrac1{24}\psi''''\Delta^4\bigr)+O(\Delta^5).
\]
Simplifying gives
\begin{equation}\label{eq:detA}
\widehat\psi^{\,\mathrm{det},A}_{t-j}
= \psi - j\psi'\Delta - \tfrac{j}{2}\psi''\Delta^2 - \tfrac{j}{6}\psi'''\Delta^3 - \tfrac{j}{24}\psi''''\Delta^4 + O(\Delta^5).
\end{equation}
Subtracting \eqref{eq:taylor-left} from \eqref{eq:detA} yields
\[
\mathrm{Bias}(\widehat\psi^A_{t-j})
= -\tfrac12 j(j+1)\psi''\Delta^2
+ \tfrac16 (j^3-j)\psi'''\Delta^3
- \tfrac1{24} (j^4+j)\psi''''\Delta^4
+ O(\Delta^5),
\]
and therefore
\[
\|\mathrm{Bias}(\widehat\psi^A_{t-j})\|
\le \tfrac12 j(j+1)M_2\Delta^2 + O(j^3\Delta^3)=O(j^2\Delta^2).
\]

\textbf{Strategy (B).} 
For even $j=2k$, one has $\widehat\psi^{\,\mathrm{det},B}_{t-2k}=\psi$, hence
\[
\mathrm{Bias}(\widehat\psi^B_{t-2k})
= \psi - \psi^*(s_{t-2k})
= 2k\,\psi'\Delta - 2k^2\psi''\Delta^2 + \tfrac43 k^3\psi'''\Delta^3 + O(\Delta^4).
\]
This implies
\[
\|\mathrm{Bias}(\widehat\psi^B_{t-2k})\|\le 2kM_1\Delta + \tfrac12(2k)^2 M_2\Delta^2 + \tfrac16 (2k)^3 M_3\Delta^3 + O(\Delta^4)=O(j\Delta).
\]

For odd $j=2k+1$, one has $\widehat\psi^{\,\mathrm{det},B}_{t-(2k+1)}=2\psi-\psi^*(s_{t+1})$. 
Expanding and subtracting \eqref{eq:taylor-left} with $j=2k+1$ yields
\[
\mathrm{Bias}(\widehat\psi^B_{t-(2k+1)})
=2k\,\psi'\Delta - (2k^2+2k+1)\psi''\Delta^2 + O(\Delta^3),
\]
so that
\[
\|\mathrm{Bias}(\widehat\psi^B_{t-(2k+1)})\|\le 2kM_1\Delta + (2k^2+2k+1)M_2\Delta^2 + O(\Delta^3)=O(j\Delta).
\]

\textbf{Strategy (C).} 
Here $\widehat\psi^{\,\mathrm{det},C}_{t-j}=\psi$, so
\[
\mathrm{Bias}(\widehat\psi^C_{t-j})
= \psi - \psi^*(s_{t-j})
= j\psi'\Delta - \tfrac12 j^2 \psi''\Delta^2 + \tfrac16 j^3\psi'''\Delta^3 + O(\Delta^4),
\]
and thus
\[
\|\mathrm{Bias}(\widehat\psi^C_{t-j})\|\le jM_1\Delta + \tfrac12 j^2 M_2\Delta^2 + \tfrac16 j^3 M_3\Delta^3 + O(\Delta^4)=O(j\Delta).
\]

Combining the three strategies completes the derivation of the bias bounds and their leading expansions.
\end{proof}

\begin{theorem}[Variance growth]\label{thm:var}
Under the noise model above and independence across grid points, the variances are:
\[
\mathrm{Var}(\widehat\psi^{A}_{t-j})=\big((j{+}1)^2+j^2\big)\sigma^2 I_d,\qquad
\mathrm{Var}(\widehat\psi^{B}_{t-j})=
\begin{cases}
\sigma^2 I_d,& j\ \mathrm{even},\\
5\sigma^2 I_d,& j\ \mathrm{odd},
\end{cases}
\qquad
\mathrm{Var}(\widehat\psi^{C}_{t-j})=\sigma^2 I_d.
\]
\end{theorem}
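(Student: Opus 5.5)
Each of the three estimators is a deterministic linear combination of the two observations $\psi_t$ and $\psi_{t+1}$. We therefore compute its covariance directly from the noise model in \cref{app:multistep-error}. Write a generic estimator as $\widehat\psi_{t-j}=a_j\psi_t+b_j\psi_{t+1}$ with scalar coefficients $(a_j,b_j)$ depending only on $j$. Substituting $\psi_u=\psi^*(s_u)+\eta_u$, the deterministic part $a_j\psi^*(s_t)+b_j\psi^*(s_{t+1})$ contributes nothing to the covariance. Hence $\mathrm{Var}(\widehat\psi_{t-j})=\mathrm{Var}(a_j\eta_t+b_j\eta_{t+1})$. Expanding bilinearly gives $a_j^2\mathrm{Var}(\eta_t)+b_j^2\mathrm{Var}(\eta_{t+1})+a_jb_j\bigl(\mathrm{Cov}(\eta_t,\eta_{t+1})+\mathrm{Cov}(\eta_{t+1},\eta_t)\bigr)$. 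The cross-covariances vanish because the perturbations are uncorrelated across grid points, and each variance equals $\sigma^2 I_d$. This yields the master identity $\mathrm{Var}(\widehat\psi_{t-j})=(a_j^2+b_j^2)\sigma^2 I_d$, which is the same computation already used for the BLUE variance $5\sigma^2$ in \cref{sec:second-order-sufficient}.

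It then remains to read off the coefficients for each strategy.

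\textbf{Strategy (A).} By \cref{lem:lin-closed}, $(a_j,b_j)=(j+1,-j)$, giving $\bigl((j+1)^2+j^2\bigr)\sigma^2 I_d$.

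\textbf{Strategy (B).} By the closed form in the definition, $(a_j,b_j)=(1,0)$ for even $j$, giving $\sigma^2 I_d$. For odd $j$, $(a_j,b_j)=(2,-1)$, giving $5\sigma^2 I_d$. The recursive description $\widehat\psi^B_s=\widehat\psi^B_{s+2}$ should be reduced to this closed form first, by a trivial induction on $k$ with $j=2k$ or $j=2k+1$. This ensures no hidden noise terms enter through the recursion.

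\textbf{Strategy (C).} Here $(a_j,b_j)=(1,0)$ for every $j$, giving $\sigma^2 I_d$.

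There is no real obstacle in this argument. The only point needing care is that the estimators reuse the \emph{same} two noisy observations and never introduce fresh noise. Consequently the covariance is fully determined by the two-point noise vector $(\eta_t,\eta_{t+1})$, whose covariance is $\sigma^2 I_{2}\otimes I_d$ by assumption.

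Two remarks close the argument. The identity $(j+1)^2+j^2=2j^2+2j+1=\Theta(j^2)$ justifies the $\Theta(j^2)\sigma^2$ entry of Table~\ref{tab:bias-var}. Independence is only needed in the weaker form of zero cross-covariance, so the statement's independence hypothesis is more than sufficient.
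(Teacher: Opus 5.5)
Your proposal is correct and matches the paper's proof: you write each estimator as a fixed linear combination $w_0\psi_t+w_1\psi_{t+1}$, use homoscedastic uncorrelated noise to get $(w_0^2+w_1^2)\sigma^2 I_d$, and read off the weights $(j+1,-j)$ for (A), $(1,0)$ or $(2,-1)$ for (B), and $(1,0)$ for (C). Your extra steps, the explicit bilinear expansion and the reduction of the recursive form of (B) to its closed form, only spell out what the paper leaves implicit.
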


\begin{proof}
Each estimator is a fixed linear combination $w_0\psi_t+w_1\psi_{t+1}$ with weights $(w_0,w_1)=((j{+}1),-j)$ for {\rm(A)}, $(1,0)$ or $(2,-1)$ for {\rm(B)}, and $(1,0)$ for {\rm(C)}. With homoscedastic uncorrelated noise, $\mathrm Var(w_0\psi_t+w_1\psi_{t+1})=(w_0^2+w_1^2)\sigma^2 I_d$.
\end{proof}

\begin{corollary}[Dominant orders of MSE]\label{cor:mse}
Let $h:=j\Delta$. As $\Delta\to0$ with $j$ possibly growing, the mean-squared errors satisfy
\[
\mathrm{MSE}(\widehat\psi^{A}_{t-j})=\Theta(h^4)+\Theta(j^2\sigma^2),\qquad
\mathrm{MSE}(\widehat\psi^{B}_{t-j})=\Theta(h^2)+\Theta(\sigma^2),\qquad
\mathrm{MSE}(\widehat\psi^{C}_{t-j})=\Theta(h^2)+\Theta(\sigma^2).
\]
\end{corollary}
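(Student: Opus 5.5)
The plan is to combine the two preceding results additively, using the decomposition $\mathrm{MSE}=\|\mathrm{Bias}\|_2^2+\mathrm{tr}\,\mathrm{Var}$ from the setup of \cref{app:multistep-error}. The variance contribution is exact and comes from \cref{thm:var}. Taking traces, strategy (A) gives $((j+1)^2+j^2)d\sigma^2$, and since $j^2\le (j+1)^2+j^2\le 5j^2$ for $j\ge1$, this is $\Theta(j^2\sigma^2)$, with $d$ fixed and absorbed into the constants. Strategy (B) gives $d\sigma^2$ or $5d\sigma^2$, and strategy (C) gives $d\sigma^2$; both are $\Theta(\sigma^2)$ uniformly in $j$. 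Because both MSE components are nonnegative, a $\Theta$ statement for each component yields the claimed $\Theta$ for the sum.

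For the bias component I will use \cref{thm:bias}. For strategy (C), the leading expansion gives $\mathrm{Bias}=h\,\psi^{*\prime}_t-\tfrac12 h^2\psi^{*\prime\prime}_t+O(h^3)$ with $h=j\Delta$. For the upper bound I will not rely on the Taylor series. The mean-value form in \cref{prop:first-order-ub} gives $\|\mathrm{Bias}\|\le M_1 h$ directly, so the squared bias is $O(h^2)$ uniformly even when $j$ grows. For the lower bound, $\|\mathrm{Bias}\|\ge h\|\psi^{*\prime}_t\|-\tfrac12 M_2h^2$, which is $\gtrsim h$ as long as $\psi^{*\prime}_t\neq0$ and $h$ is small. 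The $\Theta$ must therefore be read generically, under a nondegeneracy assumption $\psi^{*\prime}(s_t)\ne0$ (and $\psi^{*\prime\prime}(s_t)\neq0$ for (A)), together with $h\to0$. I will state these assumptions explicitly, since otherwise only $O$ bounds hold.

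Strategy (B) is handled the same way. The even-index bias is $2k\Delta\,\psi^{*\prime}_t+O(h^2)$ and the odd-index bias is $2k\Delta\,\psi^{*\prime}_t+O(h^2)$. With $2k\in\{j,j-1\}$, the first-order coefficient is comparable to $h$ when $j\ge2$, so the squared bias is $\Theta(h^2)$. The case $j=1$ needs separate treatment: there $k=0$ and the bias is $O(\Delta^2)$, which is still $O(h^2)$, so the upper bound survives. I will note that the lower bound only holds for $j\ge2$, which is the regime the corollary targets as $j$ grows.

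For strategy (A), \cref{thm:bias} gives $\mathrm{Bias}=-\tfrac12 j(j+1)\Delta^2\psi^{*\prime\prime}_t+O(j^3\Delta^3)$. Since $\tfrac12 j(j+1)\Delta^2\asymp h^2$, the squared bias is $\Theta(h^4)$ when $\psi^{*\prime\prime}_t\neq0$ and $h\to0$. The upper bound $\tfrac12 j(j+1)M_2\Delta^2$ needs a uniform, non-asymptotic justification when $j$ grows. I will obtain it from the Peano/Taylor remainder applied to the deterministic estimator $(j+1)\psi^*(s_t)-j\psi^*(s_{t+1})-\psi^*(s_t-j\Delta)$, bounding each of the two second-order remainders by $M_2$.

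The main obstacle is exactly this uniformity: making sure the $O(\Delta^k)$ remainders in \cref{thm:bias} stay controlled by powers of $h$ rather than blowing up with $j$. I will first try to replace every expansion by its Lagrange-remainder form, with terms like $\tfrac12 j^2\Delta^2\,\psi^{*\prime\prime}(\xi)$, so that every error term becomes a polynomial in $h$ times some $M_r$. After that, summing the bias and variance parts gives the corollary.
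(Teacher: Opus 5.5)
Your proposal is correct and follows the paper's own route: split $\mathrm{MSE}=\|\mathrm{Bias}\|_2^2+\mathrm{tr}\,\mathrm{Var}$ and read off each component's order from \cref{thm:bias} and \cref{thm:var}. The extra care you add is legitimate and fills in what the paper's one-line proof leaves implicit: the uniform Lagrange-remainder bounds, the nondegeneracy conditions $\psi^{*\prime}(s_t)\neq0$ (resp.\ $\psi^{*\prime\prime}(s_t)\neq0$) together with $h\to0$ for the lower bounds, and your observation that for (B) at $j=1$ the bias is only $O(\Delta^2)$, so the $\Theta(h^2)$ lower bound requires $j\ge2$.
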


\begin{proof}
Combine Theorems~\ref{thm:bias} and~\ref{thm:var}. The squared-bias for {\rm(A)} is $\Theta((j^2\Delta^2)^2)=\Theta(h^4)$, while for {\rm(B)} and {\rm(C)} it is $\Theta((j\Delta)^2)=\Theta(h^2)$. The variance orders are given in Theorem~\ref{thm:var}.
\end{proof}

\begin{remark}[Interpretation]
Strategy {\rm(A)} achieves second-order bias $\Theta(h^2)$ but pays a variance that grows quadratically with the number of skipped steps. Strategies {\rm(B)} and {\rm(C)} maintain constant variance independent of $j$ but can only guarantee first-order bias $\Theta(h)$. In low-noise, short-jump regimes, {\rm (A)} enjoys a strictly better MSE due to its $\Theta(h^4)$ squared bias. However, once the jump length exceeds one step or the noise level becomes higher, the variance term of {\rm (A)} dominates, and {\rm (B)}/{\rm (C)} may become preferable. In summary, extrapolation tends to overshoot, while reuse remains stable but sacrifices precision.
\end{remark}

%% file: texts/appendix_experiments.tex
\paragraph{Memory Complexity.} ZEUS is designed as a training-free plug-in with $\mathcal{O}(1)$ \emph{additional} memory: it keeps only a constant-size, output-level state (the observed information pair) and does not store layer-wise intermediate activations across timesteps. Therefore, ZEUS’s memory overhead is independent of the number of sampling steps and does not grow with more aggressive step skipping. We validate this claim empirically on FLUX-2-dev (64B) under 4-bit quantization by comparing two ZEUS settings with different speedups. As shown in Table~\ref{tab:flux2_mem}, peak GPU memory remains exactly the same (23{,}370\,MB) for ZEUS-Medium and ZEUS-Turbo, despite a substantial change in compute (speedup from $2.07\times$ to $3.34\times$). This ablation directly supports ZEUS’s $\mathcal{O}(1)$ additional-memory complexity.

\input{tables/rebuttal/memory}

%% file: tables/rebuttal/memory.tex

\begin{table}[t]
  \centering
  \scriptsize
  \caption{\textbf{Memory footprint ablation on FLUX-2-dev (64B) with 4-bit quantization.}
  ZEUS preserves the peak GPU memory budget while increasing speedup.}
  \label{tab:flux2_mem}
  \setlength{\tabcolsep}{4pt}
  \renewcommand{\arraystretch}{1.12}
  \begin{tabular}{@{}lccccc@{}}
    \toprule
\multicolumn{1}{c}{\textbf{Setting}} &
\multicolumn{1}{c}{\shortstack[c]{\textbf{Speedup}\\$\uparrow$}} &
\multicolumn{1}{c}{\shortstack[c]{\textbf{Peak Mem}\\\textbf{(MB)}~$\downarrow$}} &
\multicolumn{1}{c}{\shortstack[c]{\textbf{PSNR}\\$\uparrow$}} &
\multicolumn{1}{c}{\shortstack[c]{\textbf{SSIM}\\$\uparrow$}} &
\multicolumn{1}{c}{\shortstack[c]{\textbf{LPIPS}\\$\downarrow$}} \\
\midrule
    ZEUS (Medium) & $2.07\times$ & 23370 & 31.593 & 0.938 & 0.029 \\
    ZEUS (Turbo)  & $3.34\times$ & 23370 & 23.115 & 0.811 & 0.114 \\
    \bottomrule
  \end{tabular}
\end{table}